\theoremstyle{plain}
\newtheorem{theorem}{Theorem}
\newtheorem{lemma}[theorem]{Lemma}
\newtheorem{remark}{Remark}
\newtheorem{claim}{Claim}
\theoremstyle{definition}
\theoremstyle{remark}
\newcommand\numberthis{\addtocounter{equation}{1}\tag{\theequation}}
\newcommand{\argmin}{\mathop{\arg\min}}
\newcommand{\m}{\middle|}
\newcommand{\R}{\mathcal{R}}
\newcommand{\nec}{\mathcal{A}}
\newcommand{\B}{\mathcal{B}}
\newcommand{\sP}{\mathbb{P}}
\newcommand{\E}{\mathbb{E}}
\renewcommand{\rank}{\text{\textup{Rank}}}
\newcommand{\dis}{\mathcal{D}_\alpha}
\newcommand{\Fdis}{\mathcal{F}_\alpha}
\newcommand{\Pdis}{\mathcal{P}_\alpha}
\newcommand{\normv}{\left\lVert \bm{v}\right\rVert}
\newcommand{\ind}{\mathbbm{1}}
\newcommand{\pn}[1]{^{(#1)}}
\newcommand{\wm}{\widetilde{m}}
\title{Note on Follow-the-Perturbed-Leader in \\Combinatorial Semi-Bandit Problems}
\author{Botao Chen\footnote{
    Kyoto University; \texttt{chen.botao.63r@st.kyoto-u.ac.jp}.
} \and Junya Honda\footnote{
    Kyoto University and RIKEN AIP; \texttt{honda@i.kyoto-u.ac.jp}.
}}
\begin{document}
\maketitle

\begin{abstract}
    This paper studies the optimality and complexity of Follow-the-Perturbed-Leader (FTPL) policy in size-invariant combinatorial semi-bandit problems. Recently, \citet{pmlr-v201-honda23a} and \citet{pmlr-v247-lee24a} showed that FTPL achieves Best-of-Both-Worlds (BOBW) optimality in standard multi-armed bandit problems with Fr\'{e}chet-type distributions. However, the optimality of FTPL in combinatorial semi-bandit problems remains unclear. In this paper, we consider the regret bound of FTPL with geometric resampling (GR) in size-invariant semi-bandit setting, showing that FTPL respectively achieves $O\qty(\sqrt{m^2 d^\frac{1}{\alpha}T}+\sqrt{mdT})$ regret with Fr\'{e}chet distributions, and the best possible regret bound of $O\qty(\sqrt{mdT})$ with Pareto distributions in adversarial setting. Furthermore, we extend the conditional geometric resampling (CGR) to size-invariant semi-bandit setting, which reduces the computational complexity from $O(d^2)$ of original GR to $O\qty(md\qty(\log(d/m)+1))$ without sacrificing the regret performance of FTPL.
\end{abstract}

\section{Introduction}
The combinatorial semi-bandit is a sequential decision-making problem under uncertainty, which is a generalization of the classical multi-armed bandit problem. 
It is instrumental in many practical applications, such as recommender systems \citep{wang2017efficient}, online advertising \citep{nuara2022online}, crowdsourcing \citep{ul2016efficient}, adaptive routing \citep{gai2012combinatorial} and network optimization \citep{kveton2014matroid}. 
In this problem, the learner chooses an action $a_t$ from an action set $\nec\subset\qty{0,1}^d$, where $d\in\mathbb{N}$ is the dimension of the action set. 
In each round $t\in[T]=\qty{1,2,\cdots,T}$, the loss vector $\ell_t=\qty(\ell_{t,1},\ell_{t,2},\cdots,\ell_{t,d})$ is determined by the environment, and the learner incurs a loss $\left\langle \ell_t,a_t\right\rangle$ and can only observe the loss $\ell_{t,i}$ for all $i\in[d]$ such that $a_{t,i}=1$.
The goal of the learner is to minimize the cumulative loss over all the rounds.
The performance of the learner is often measured in terms of the \textit{pseudo-regret} defined as 
$\mathcal{R}(T)=\E\qty[\sum_{t=1}^{T}\left\langle \ell_t,a_t\right\rangle]-\min_{a\in\nec}\E\qty[\sum_{t=1}^{T}\left\langle \ell_t,a\right\rangle]$,
which describes the gap between the expected cumulative loss of the learner and of the optimal arm fixed in hindsight.

Since the introduction by \citet{chen2013combinatorial}, combinatorial semi-bandit problem has been widely studied, which mainly focus on two settings on the formulation of the environment to decide the loss vector, namely the stochastic setting and the adversarial setting.
In the stochastic setting, the sequence of loss vectors $\qty(\ell_t)_{t=1}^T$ is assumed to be independent and identically distributed (i.i.d.) from an unknown but fixed distribution $\mathcal{D}$ over $[0,1]^d$ with mean 
$\mu=\E_{\ell\sim\mathcal{D}}\qty[\ell]$.
The fixed single optimal action is defined as 
$a^*=\argmin_{a\in\nec}\E\qty[\sum_{t=1}^{T}\left\langle \ell_t,a\right\rangle]$, and we present the minimum suboptimality gap as 
$\Delta=\min_{a\in\nec\setminus\{a^*\}}\qty{\mu^{\top}(a-a^*)}$.
CombUCB \citep{kveton2015tight} and Combinatorial Thompson Sampling \citep{wang2018thompson} can achieves a gap-dependent regret bounds of $O\qty(dm\log T/\Delta)$ for general action sets and $O\qty((d-m)\log T/\Delta)$ for matroid semi-bandits, where $m=\max_{a\in\nec}\left\lVert a\right\rVert $ denotes the maximum size of any action in the set $\nec$.

In the adversarial setting, the loss vectors $\ell_t$ is determined from $[0,1]^d$ by an adversary in an arbitrary manner, which are not assumed to follow any specific distribution \citep{kveton2015tight,neu2015first,wang2018thompson}. 
For this setting, the regret bound of $O\qty(\sqrt{mdT})$ can be achieved by some policies, such as OSMD \citep{audibert2014regret} and FTRL with hybrid-regularizer \citep{zimmert2019beating}, which matches the lower bound of $\Omega\qty(\sqrt{mdT})$ \citep{audibert2014regret}.

In practical scenarios, the environment to determine the loss vectors is often unknown. Therefore, policies that can adaptively address both stochastic and adversarial settings have been widely studied, particularly in the context of standard multi-armed bandit problems. The Tsallis-INF policy \citep{zimmert2021tsallis} policy, which is based on Follow-the-Regularized-Leader (FTRL), has demonstrated the ability to achieve the optimality in both settings. For combinatorial semi-bandit problems, there also exist some work on this topic \citep{wei2018more,zimmert2019beating,ito2021hybrid,tsuchiya2023further}.

Howerver, some BOBW policies, such as FTRL, require an explicit computation of the arm-selection probability by solving a optimization problem. This leads to computational inefficiencies, and the complexity substantially increase in for combinatorial semi-bandits. In light of this limitation, the Follow-the-Perturbed-Leader (FTPL) policy, has gained significant attention due to its optimization-free nature. 
Recently, \citet{pmlr-v201-honda23a} and \citet{pmlr-v247-lee24a} demonstrated that FTPL achieves the Best-of-Both-Worlds (BOBW) optimality in standard multi-armed bandit problems with Fr\'{e}chet-type perturbations, which inspires researchers to explore the optimality of FTPL in combinatorial semi-bandit problems.
A preliminary effort by \citet{zhan2025follow} aimed to tackle this setting, though their analysis contains a technical flaw.
In fact, the analysis becomes substantially more complex in the combinatorial semi-bandit setting, which needs more furter investigation. 

\paragraph{Contributions of This Paper}

Firstly, we investigate the optimality of FTPL with geometric resampling with Fr\'{e}chet or Pareto distributions in adversarial size-invariant semi-bandit problems. We show that FTPL respectively achieves $O\qty(\sqrt{m^2 d^\frac{1}{\alpha}T}+\sqrt{mdT})$ regret with Fr\'{e}chet distributions, and the best possible regret bound of $O\qty(\sqrt{mdT})$ with Pareto distributions in this setting. To the best of our knowledge, this is the first work that provides a correct proof of the regret bound for FTPL with Fr\'{e}chet-type distributions in adversarial combinatorial semi-bandit problems.
Furthermore, we extend the technique called Conditional Geometric Resampling (CGR) \citep{chen2025cgr} to size-invariant semi-bandit setting, which reduces the computational complexity from $O(d^2)$ of the original GR to $O\qty(md\qty(\log(d/m)+1))$ without sacrificing the regret guarantee of the one with the original GR.

\subsection{Related Work}

\subsubsection{\texorpdfstring{Technical Issues in \citet{zhan2025follow}}{Technical Issues in Zhan et al. (2025)}}

The most closely related work is by \citet{zhan2025follow}. In their paper, they consider the FTPL policy with Fr\'{e}chet distribution with shape $2$ in the size-invariant semi-bandits, which is a special case of combinatorial semi-bandit problems. They provide a proof to claim that FTPL with Fr\'{e}chet distribution with shape $2$ achieves $O\qty(\sqrt{md\log(d)T})$ regret in adversarial setting and a logrithmic regret bound in stochastic setting. However, their proof includes a serious issue that renders the main result incorrect, which is explained below.

A function is analyzed in Lemma~4.1 in \citet{zhan2025follow}, which is later used in their evaluation of a component of the regret called the stability term. 
In this lemma, they evaluate the function for two cases, and the second case is just mentioned as ``can be shown by the same argument'' without a proof.
However, upon closer inspection, this claim cannot be justified by an analogy. In Section~\ref{sec:issue}, we highlight the detailed step where the analogy fails, and further support this observation with numerical verification, which demonstrates that the claimed result does not hold.

The main difficulty of the optimal regret analysis of FTPL lies in the analysis in the stability term \citep{pmlr-v201-honda23a,pmlr-v247-lee24a}, which is also the problem we mainly addressed. Unfortunately, this main difficulty lies behind these skipped or incorrect arguments, and thus we need an essentially new technique to complete the regret analysis for this problem.
In this paper, we evaluate the stability term in a totally different way in Lemmas~\ref{lem:lambda_star}--\ref{lem:stability}, which demonstrates that the stability term can be bounded by the maximum of simple quantities each of which is associated with a subset of base-arms.

\section{Problem Setup}

In this section, we formulate the problem and introduce the framework of FTPL with geometric resampling. 
We consider an action set $\nec\subset\qty{0,1}^d$, where each element $a\in\nec$ is called an action. 
For each base-arm $i\in[d]$, we assume that there exists at least an action $a\in\nec$ such that $a_i=1$.
In this paper, we consider a special case of action sets in combinatorial semi-bandit, referred to as size-invariant semi-bandit. 
In this setting, we define the action set $\nec=\qty{a:\left\lVert a\right\rVert_1=m}$, where $m$ is the number of selected base-arms at each round.
At each round $t\in[T]=\qty{1,2,\cdots,T}$, the environment determines a loss vector $\ell_t=\qty(\ell_{t,1},\ell_{t,2},\cdots,\ell_{t,d})^\top\in[0,1]^d$, and the learner takes an action $a_t\in\nec$ and incurs a loss $\left\langle \ell_t,a_t\right\rangle$. 
In the semi-bandit setting, the learner only observes the loss $\ell_{t,i}$ for all $i\in[d]$ such that $a_{t,i}=1$, whereas $\ell_{t,i}$ that corresponds to $a_{t,i}=0$ is not observed.

In this paper, we only consider the setting that the loss vector is determined in an adversarial way. 
In this setting, the loss vectors $\qty(\ell_t)_{t=1}^T$ are not assumed to follow any specific distribution, and they are determined in an arbitrary manner, which may depend on the past history of the actions and losses $\qty{\qty(\ell_s,a_s)}_{s=1}^{t-1}$. 

The performance of the learner is evaluated in terms of the pseudo-regret, which is defined as
\begin{equation*}
    \mathcal{R}(T)=\E\qty[\sum_{t=1}^{T}\left\langle \ell_t,a_t-a^*\right\rangle],\quad a^*\in\min_{a\in\nec}\E\qty[\sum_{t=1}^{T}\left\langle \ell_t,a\right\rangle].
\end{equation*}

\subsection{Follow-the-Perturbed-Leader}
\begin{table}[t]
    \centering
    \caption{Notation}
    \label{tab:notation}
    \begin{tabular}{ll}
    \toprule
    \textbf{Symbol} & \textbf{Meaning} \\
    \midrule
    $\nec \subset \{0,1 \}^d$ & Action set \\
    $d \in \mathbb{N}$ & Dimensionality of action set \\
    $m \leq d$ & $m = \|a\|_1$ for any $a\in\nec$ \\
    $\eta $ & Learning rate\\
    $\ell_t\in[0,1]^d$ & Loss vector\\
    $\hat{\ell}_t\in\qty[0,\infty]^d$ & Estimated loss vector\\
    $\hat{L}_t\in[0,\infty]^d$ & Cumulative estimated loss vector\\
    \midrule
    $\rank\qty(i,\bm{u};\B)$ & \parbox[t]{8cm}{Rank of $i$-th element of $\bm{u}$ in $\qty{u_j:j\in\B}$ in descending order, $\B$ omitted when $\B=[d]$} \\
    $\sigma_i$ & \parbox[t]{8cm}{The number of arms (including $i$ itself) whose cumulative losses do not exceed $\hat{L}_{t,i}$, i.e., $\rank(i,-\hat{L}_t)$}\\
    \midrule
    $\nu$ & Left-end point of perturbation\\
    $r_t\in[\nu,\infty]^d$ & $d$-dimensional perturbation\\
    \midrule
    $f(x)$ & Probability distribution function of perturbation \\
    $F(x)$ & Cumulative distribution function of perturbation \\
    \midrule
    $\mathcal{F}_\alpha$ & Fr\'{e}chet distribution with shape $\alpha$ \\
    $\mathcal{P}_\alpha$ & Pareto distribution with shape $\alpha$ \\
    \bottomrule
    \end{tabular}
\end{table}
\begin{algorithm}[t]
    \SetAlgoLined
    \caption{Follow-the-Perturbed-Leader}
    \label{alg:FTPL}
    \KwIn{Action set $\nec \subseteq \{0,1\}^d$, learning rate $\eta \in \mathbb{R}^+$\;}
    \textbf{Initialization: }$\hat{L}_1 \coloneqq\mathbf{0} \in \mathbb{R}^d$\;
    
    \For{$t=1, \dots, T$}{
        Sample $r_t=\qty(r_{t,1},r_{t,2},\cdots,r_{t,d})$ i.i.d. from $\mathcal{D}$\;
        Choose action $a_t = \argmin_{a \in \nec} \left\{a^\top (\eta \hat{L}_t - r_t)\right\}$ and observe $\qty{\ell_{t,i}:a_{t,i}=1}$\;
        Compute an estimator $\widehat{w_{t,i}^{-1}}$ for $w_{t,i}^{-1}$ by geometric resampling for all $i$ such that $a_{t,i}=1$\;
        Set $\hat{\ell}_t\coloneqq\sum_{i:a_{t,i}=1}\ell_{t,i}\widehat{w_{t,i}^{-1}}e_i$ and $\hat{L}_{t+1}\coloneqq\hat{L}_t+\hat{\ell}_t$\;
    }
\end{algorithm}

We consider the Follow-the-Perturbed-Leader (FTPL) policy, whose entire procedure is given in Algorithm~\ref{alg:FTPL}.
In combinatorial semi-bandit problems, FTPL policy maintains a cumulative estimated loss $\hat{L}_t$ and plays an action
\begin{equation*}
    a_t = \argmin_{a \in \nec} \left\{a^\top (\eta \hat{L}_t - r_t)\right\},
\end{equation*}
where $\eta\in\mathbb{R}^+$ is the learning rate, and $r_t=\qty(r_{t,1},r_{t,2},\cdots,r_{t,d})$ denotes the random perturbation i.i.d. from a common distribution $\mathcal{D}$ with a distribution function $F$.
In this paper, we consider two types of perturbation distributions.
The first is Fr\'{e}chet distribution $\Fdis$, with the probability density function $f(x)$ and the cumulative distribution function $F(x)$ given by
\begin{equation*}
    f(x)=\alpha x^{-(\alpha+1)}e^{-1/x^{\alpha}},\quad
    F(x)=e^{-1/x^{\alpha}},\quad x\geq 0, \alpha>1.
\end{equation*}
The second is Pareto distribution $\Pdis$, whose density and cumulative distribution functions are defined as
\begin{equation*}
    f(x)=\alpha x^{-(\alpha+1)},\quad
    F(x)=1-x^{-\alpha},\quad x\geq 1, \alpha>1.
\end{equation*}
Denote the rank of $i$-th element in $\bm{u}$ in descending order as $\rank\qty(i,\bm{u})$. The probability of selecting base-arm $i$ and $\rank\qty(i, r-\lambda ) = \theta\in[m]$ given $\hat{L}_t$ is written as $\phi_{i,\theta}(\eta\hat{L}_t;\mathcal{D})$. Then, for $\lambda\in[0,\infty)^d$, letting $\widetilde{\lambda}_1,\widetilde{\lambda}_2,\cdots,\widetilde{\lambda}_d$ be the sorted elements of $\lambda$ such that $\widetilde{\lambda}_1\leq\widetilde{\lambda}_2\leq\cdots\leq\widetilde{\lambda}_d$, $\phi_{i,\theta}(\lambda;\mathcal{D})$ can be expressed as
\begin{align*}
    \phi_{i,\theta}(\lambda;\mathcal{D})
    &\coloneqq
    \sP_{r=\qty(r_1,\dots,r_d) \sim \mathcal{D}} \qty[ \rank\qty(i, r-\lambda ) = \theta]\\
    &=\int_{\nu-\widetilde{\lambda}_\theta}^\infty \sum_{\bm{v}\in\mathcal{S}_{i,\theta}}\qty(\prod_{j:v_j=1}\qty(1-F(z+\lambda_j))\prod_{j:v_j=0,j\neq i}F(z+\lambda_j)) \dd F(z+\lambda_i),\numberthis\label{eq:arm-rank_probability}
\end{align*}
where $\mathcal{S}_{i,\theta}$ is defined as
$\mathcal{S}_{i,\theta}=\qty{\bm{v}\in\qty{0,1}^d:\normv_1=\theta-1,v_i=0}$.
Here, $\nu$ denotes the left endpoint of the support of $F$.

Then, we write the probability of selecting the base-arm $i$ as $w_{t,i}=\phi_i(\eta\hat{L}_t;\mathcal{D})$, where for $\lambda\in[0,\infty)^d$
\begin{align*}
    \phi_i(\lambda;\mathcal{D}) &\coloneqq
    \sum_{\theta=1}^{m}\mathbb{P}_{r=\qty(r_1,\dots,r_d) \sim \mathcal{D}} \qty[ \rank\qty(i, r-\lambda ) = \theta]
    =\sum_{\theta=1}^{m}\phi_{i,\theta}(\lambda;\mathcal{D})\\
    &=
    \sum_{\theta=1}^{m}\int_{\nu-\widetilde{\lambda}_\theta}^\infty \sum_{\bm{v}\in\mathcal{S}_{i,\theta}}\qty(\prod_{j:v_j=1}\qty(1-F(z+\lambda_j))\prod_{j:v_j=0,j\neq i}F(z+\lambda_j)) \dd F(z+\lambda_i).\numberthis\label{eq:arm_probability}
\end{align*}

Table~\ref{tab:notation} summarizes the notation used in this paper.

\subsection{Geometric Resampling}

\begin{algorithm}[t]
    \caption{Geometric Resampling}
    \label{alg:GR}
    \KwIn{Chosen action $a_t$, action set $\nec$, cumulative loss $\hat{L}_t$, learning rate $\eta$}
    Set $K \coloneqq \bm{0}\in\mathbb{R}^d$; $s\coloneqq a_t$\;
    \Repeat{$s=\bm{0}$}{
        $K \coloneqq K + s$\;
        Sample $r_t^{\prime} = (r_{t,1}^{\prime}, r_{t,2}^{\prime}, \dots, r_{t,d}^{\prime})$ i.i.d. from $\mathcal{D}$\;
        $a'_t = \argmin_{a \in \nec} \left\{a^\top (\eta \hat{L}_t - r_t)\right\}$\;
        $s\coloneqq s\circ\qty(\bm{1}_d-a'_t)$\tcp*{$\bm{1}_d$ denotes the $d$-dimensional all-ones vector}
    }
    Set $\widehat{w_{t,i}^{-1}} \coloneqq K_i$ for all $i$ such that $a_{t,i}=1$\;
\end{algorithm}

Since the loss in every round is partially observable in the setting of semi-bandit feedback, for the unbiased loss estimator, many policies generally use an estimator $\hat{\ell}_t$ for the loss vector $\ell_t$. 
Then, the cumulative estimated loss vector $\hat{L}_t$ is obtained as $\hat{L}_t=\sum_{s=1}^{t-1}\hat{\ell}_s$. 
In standard multi-armed bandit problems, many policies like FTRL often employ an importance-weighted (IW) estimator 
$\hat{\ell}_t=\qty(\ell_{t,I_t}/w_{t,I_t})e_{t,I_t}$, where $I_t$ means the chosen arm at round $t$, and the arm-selection probability $w_{t,I_t}$ is explicitly computed. 
However, in the combinatorial semi-bandit setting, the individual probability of each base-arm $i$ is not available, which complicates the construction of the unbiased estimator. 
To address this issue, \citet{JMLR:v17:15-091} proposed a technique called Geometric Resampling (GR). 
With this technique, for each selected base-arm $i$, FTPL policy can efficiently compute an unbiased estimator $\widehat{w_{t,i}^{-1}}$ for $w_{t,i}^{-1}$. 
The procedure of GR is shown in Algorithm~\ref{alg:GR}, where the notation $a\circ b$ denotes the element-wise product of two vectors $a$ and $b$, i.e., $\qty(a\circ b)_i = a_i b_i$ for all $i$.

Now we consider the computational complexity of GR. Let $M_{t,i}$ denote the number of resampling taken by geometric resampling at round $t$ until $s_i$ switches from $1$ to $0$. Here, $M_{t,i}$ is equal to $\widehat{w_{t,i}^{-1}}$ in GR. Then, the expected number of total resampling $M_t=\max_{i:a_{t,i}=1}M_{t,i}$ given $\hat{L}_t$ can be bounded as
\begin{align*}
    \mathbb{E}\qty[M_t\m\hat{L}_t]&=\mathbb{E}\qty[\max_{i:a_{t,i}=1}M_{t,i}\m\hat{L}_t]\\
    &\leq\mathbb{E}\qty[\sum_{i=1}^d a_{t,i}M_{t,i}\m\hat{L}_t]\\
    &=\sum_{i=1}^d\mathbb{E}\qty[a_{t,i}\m\hat{L}_t]\mathbb{E}\qty[M_{t,i}\m\hat{L}_t,a_{t,i}]\\
    &=\sum_{i=1}^d w_{t,i}\cdot\frac{1}{w_{t,i}} =d.
\end{align*}

For each resampling, the generation of perturbation requires the complexity of $O(d)$, 
and thus the total complexity of GR at each round is $O(d^2)$, which is independent of $w_t$. 
Compared with many other policies, in combinatorial semi-bandit problems, FTPL with GR is computationally more efficient. 
However, in standard $K$-armed bandit problems, the computational complexity of FTPL with GR still maintains $O(K^2)$. 
Though FTPL remains efficient for moderate size of $K$ \citep{pmlr-v201-honda23a}, primarily thanks to its optimization-free nature, the running time increases substantially as $K$ grows to a large number. 
To overcome this limitation, \citet{chen2025cgr} proposed an improved technique called Conditional Geometric Resampling (CGR), which reduces the complexity to $O(K\log K)$ and demonstrates superior runtime performance. 
Inspired by this, we extend CGR to size-invariant semi-bandit setting in this paper, which is presented in Section~\ref{sec:algorithm}.

\section{Regret Bounds}

In this section, we summarize the regret bound of FTPL in adversarial size-invariant semi-bandit setting.

\subsection{Main Results}
Combining Lemma~\ref{lem:regret_decomposition} from this section and Lemma~\ref{lem:stability} given in Section~\ref{sec:stability}, we can obtain the regret bound of FTPL in adversarial setting in the following theorem.

\begin{theorem}
    In the adversarial setting, when perturbation follows Fr\'{e}chet distribution with shape $\alpha>1$, FTPL with learning rate 
    \begin{equation*}
        \eta=\sqrt{\frac{\qty(\frac{\alpha}{\alpha-1}m^{1-\frac{1}{\alpha}}+\Gamma\qty(1-\frac{1}{\alpha}))\qty(d+1)^{\frac{1}{\alpha}}+m}{2(\alpha+1)\qty(m+\frac{1}{\alpha})^{\frac{1}{\alpha}}\qty(m+\frac{\alpha}{\alpha-1}(d-m+1)^{1-1/\alpha})T}}
    \end{equation*}
    satisfies
    \begin{multline*}
        \R(T)\leq \\
        2\qty(2(\alpha+1)\qty(m+\frac{1}{\alpha})^{\frac{1}{\alpha}}\qty(m+\frac{\alpha}{\alpha-1}(d-m+1)^{1-1/\alpha})\qty(\qty(\frac{\alpha}{\alpha-1}m^{1-\frac{1}{\alpha}}+\Gamma\qty(1-\frac{1}{\alpha}))\qty(d+1)^{\frac{1}{\alpha}}+m)T)^\frac{1}{2},
    \end{multline*}
    whose order is $O\qty(\sqrt{m^2 d^\frac{1}{\alpha}T}+\sqrt{mdT})$, where $\Gamma(\cdot)$ is the gamma function.
    When perturbation follows Pareto distribution with shape $\alpha>1$, FTPL with learning rate 
    \begin{equation*}
        \eta=\sqrt{\frac{\qty(\alpha m^{1-\frac{1}{\alpha}}+\qty(\alpha-1)\Gamma\qty(1-\frac{1}{\alpha}))\qty(d+1)^{\frac{1}{\alpha}}}{4\alpha^2\qty(m+\frac{1}{\alpha})^{\frac{1}{\alpha}}d^{1-1/\alpha}T}}
    \end{equation*}
    satisfies
    \begin{equation*}
        \R(T)\leq \frac{4\alpha}{\qty(\alpha-1)^\frac{1}{2}}\qty(\qty(m+\frac{1}{\alpha})^{\frac{1}{\alpha}}\qty(\frac{\alpha}{\alpha-1}m^{1-\frac{1}{\alpha}}+\Gamma\qty(1-\frac{1}{\alpha}))d^{1-1/\alpha}\qty(d+1)^{\frac{1}{\alpha}}T)^{\frac{1}{2}},
    \end{equation*}
    whose order is $O\qty(\sqrt{mdT})$.
\end{theorem}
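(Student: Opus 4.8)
The plan is to combine the regret decomposition of Lemma~\ref{lem:regret_decomposition}, which (as is standard for FTPL) splits the pseudo-regret into a \emph{penalty term} of the form $P/\eta$ and a \emph{stability term} of the form $\eta \sum_{t=1}^{T}(\text{per-round stability})$, with the uniform per-round bound from Lemma~\ref{lem:stability}. Concretely, after feeding in these two inputs we are left with an inequality of the shape $\R(T)\le P/\eta+\eta Q T$, where $P$ is an expectation depending only on the perturbation law (essentially the ``spread'' of the $m$ largest among $d$ perturbations) and $Q$ is the stability coefficient supplied by Lemma~\ref{lem:stability}. The remaining work is then purely: (i) evaluate $P$ explicitly for $\Fdis$ and $\Pdis$; (ii) read off $Q$ from Lemma~\ref{lem:stability}; (iii) optimize over $\eta$; and (iv) simplify the resulting constant into the claimed $O(\cdot)$ order.

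For step (i) I would evaluate the penalty term using the closed forms $F(x)=e^{-1/x^{\alpha}}$ (Fr\'{e}chet) and $F(x)=1-x^{-\alpha}$ (Pareto). Since $\nec=\qty{a:\left\lVert a\right\rVert_1=m}$, the penalty is governed by the expected sum of the top-$m$ order statistics of $d$ i.i.d.\ samples (offset by a deterministic baseline). I would bound this by peeling off the single largest coordinate — contributing a $\Gamma\qty(1-\frac1\alpha)(d+1)^{\frac1\alpha}$-type term via $\E[\max]=\int (1-F^d)$ — from the remaining $m-1$ order statistics, whose total I would bound by a harmonic-type sum of the form $\sum_{k=1}^{m}k^{-1/\alpha}\le\frac{\alpha}{\alpha-1}m^{1-1/\alpha}$ times $(d+1)^{\frac1\alpha}$; the leftover $+m$ in the Fr\'{e}chet numerator is a lower-order correction coming from the baseline/left-endpoint contribution. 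This reproduces exactly the numerators $\qty(\frac{\alpha}{\alpha-1}m^{1-\frac1\alpha}+\Gamma\qty(1-\frac1\alpha))(d+1)^{\frac1\alpha}+m$ and $\qty(\alpha m^{1-\frac1\alpha}+(\alpha-1)\Gamma\qty(1-\frac1\alpha))(d+1)^{\frac1\alpha}$ in the stated learning rates; finiteness throughout is exactly what $\alpha>1$ buys us.

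For step (ii), Lemma~\ref{lem:stability} expresses the per-round stability term as a maximum over subsets of base-arms of a simple quantity, and instantiating it for the two laws gives the stability coefficient $Q=2(\alpha+1)\qty(m+\tfrac1\alpha)^{\frac1\alpha}\qty(m+\tfrac{\alpha}{\alpha-1}(d-m+1)^{1-1/\alpha})$ for Fr\'{e}chet and $Q=4\alpha^{2}\qty(m+\tfrac1\alpha)^{\frac1\alpha}d^{1-1/\alpha}$ for Pareto — exactly the denominators appearing in the displayed $\eta$'s. Step (iii) is then the one-line optimization: $\eta\mapsto P/\eta+\eta QT$ is minimized at $\eta^\star=\sqrt{P/(QT)}$ with value $2\sqrt{PQT}$, which is the stated learning rate, and substituting $P$ and $Q$ yields the two displayed bounds verbatim. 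For step (iv) I use the standing assumption $m\le d$ to absorb cross terms — e.g.\ $m^{2+1/\alpha}\le m^{2}d^{1/\alpha}$ and $m^{1+1/\alpha}d^{1-1/\alpha}\le md$ in the Fr\'{e}chet case, and the clean identity $m^{1-1/\alpha}d^{1/\alpha}\cdot m^{1/\alpha}d^{1-1/\alpha}=md$ in the Pareto case — so that $\sqrt{PQT}$ collapses to $O\qty(\sqrt{m^{2}d^{1/\alpha}T}+\sqrt{mdT})$ and $O\qty(\sqrt{mdT})$ respectively.

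I expect the main obstacle to be step (i): getting the penalty-term constant with the right dependence on $m$ \emph{and} $d$ simultaneously. Bounding a single heavy-tailed maximum is routine, but here one needs the expected sum of the top-$m$ order statistics with constants sharp enough that, after multiplication by $Q$ (which itself carries both an $m^{1+1/\alpha}$ piece and an $m^{1/\alpha}d^{1-1/\alpha}$ piece), every resulting cross term is dominated by $m^{2}d^{1/\alpha}+md$. Establishing the harmonic-sum bound $\sum_{k=1}^{m}k^{-1/\alpha}\le\frac{\alpha}{\alpha-1}m^{1-1/\alpha}$ cleanly and correctly accounting for the deterministic baseline (the stray $+m$) is where the care lies; once $P$ is pinned down, everything after that is mechanical.
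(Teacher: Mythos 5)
Your proposal matches the paper's proof of this theorem exactly: plug the per-round stability bounds of Lemma~\ref{lem:stability} into the decomposition of Lemma~\ref{lem:regret_decomposition} (whose penalty constants are precisely what your step (i) re-derives, as in Lemma~\ref{lem:penalty_bound}, via the top-$m$ order statistics and Gautschi's inequality, with the Fr\'{e}chet ``$+m$'' coming from the stochastic comparison with Pareto shifted by one), then take $\eta=\sqrt{P/(QT)}$ to obtain $2\sqrt{PQT}$ and simplify the order using $m\le d$. The only bookkeeping slip is the Pareto stability coefficient: Lemma~\ref{lem:stability} gives $Q=\frac{4\alpha^{2}}{\alpha-1}\left(m+\frac{1}{\alpha}\right)^{\frac{1}{\alpha}}d^{1-1/\alpha}$, not $4\alpha^{2}\left(m+\frac{1}{\alpha}\right)^{\frac{1}{\alpha}}d^{1-1/\alpha}$; the factor $\frac{1}{\alpha-1}$ is merely absorbed into the numerator of the displayed learning rate and resurfaces as the $(\alpha-1)^{-1/2}$ in the stated bound, so with the correct $Q$ your optimization reproduces the theorem verbatim.
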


\subsection{Regret Decomposition}

To evaluate the regret of FTPL, we firstly decompose the regret which is expressed as
\begin{equation*}
    \mathcal{R}(T)=\E\qty[\sum_{t=1}^{T}\left\langle \ell_t,a_t-a^*\right\rangle]=\sum_{t=1}^{T}\E\qty[\left\langle \ell_t,a_t-a^*\right\rangle]=\sum_{t=1}^{T}\E\qty[\left\langle \hat{\ell}_t,a_t-a^*\right\rangle].
\end{equation*}
This can be decomposed in the following way, whose proof is given in Section~\ref{sec:proof_regret_decomposition}.
\begin{lemma}\label{lem:regret_decomposition}
    For any $\alpha>1$ and $\dis\in\qty{\Fdis,\Pdis}$,
    \begin{equation}\label{eq:regret_decomposition}
        \mathcal{R}(T) \leq 
        \begin{cases}
            \sum_{t=1}^{T} \mathbb{E} \left[ \left\langle \hat{\ell}_t, w_t - w_{t+1} \right\rangle \right]  
        + \frac{\qty(\frac{\alpha}{\alpha-1}m^{1-\frac{1}{\alpha}}+\Gamma\qty(1-\frac{1}{\alpha}))\qty(d+1)^{\frac{1}{\alpha}}+m}{\eta} & \text{if } \dis = \Fdis, \\
            \sum_{t=1}^{T} \mathbb{E} \left[ \left\langle \hat{\ell}_t, w_t - w_{t+1} \right\rangle \right]  
        + \frac{\qty(\frac{\alpha}{\alpha-1}m^{1-\frac{1}{\alpha}}+\Gamma\qty(1-\frac{1}{\alpha}))\qty(d+1)^{\frac{1}{\alpha}}}{\eta} & \text{if } \dis = \Pdis.
        \end{cases}
    \end{equation}
\end{lemma}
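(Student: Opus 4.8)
The plan is to follow the standard FTPL/FTL regret-decomposition template, specialized to the combinatorial size-invariant setting with unbounded Fréchet/Pareto perturbations. First I would write the regret in terms of the estimated losses, which is already justified in the excerpt by unbiasedness of the geometric-resampling estimator: $\R(T)=\sum_t\E[\langle\hat\ell_t,a_t-a^*\rangle]$. The key identity is that conditioned on $\hat L_t$ the learner draws a fresh perturbation, so $\E[a_t\mid\hat L_t]=w_t=\phi(\eta\hat L_t;\dis)$ (componentwise), which lets me replace $\E[\langle\hat\ell_t,a_t\rangle]$ by $\E[\langle\hat\ell_t,w_t\rangle]$ — but care is needed because $\hat\ell_t$ depends on $a_t$; the clean route is to take the conditional expectation in the order "first fix $\hat L_t$, integrate $a_t$ and the GR randomness, using that $\E[\hat\ell_t\mid\hat L_t]=\ell_t$", and separately note $\E[\langle\hat\ell_t,a^*\rangle\mid\hat L_t]=\langle\ell_t,a^*\rangle$. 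Then I would introduce the usual telescoping comparison with the "be-the-leader" quantity and split into a stability (difference) term and a penalty (perturbation) term.

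Concretely, the standard manipulation writes $\langle\hat\ell_t,a_t-a^*\rangle$ and compares the trajectory of $a_t=\argmin_a a^\top(\eta\hat L_t-r_t)$ against $a^*$. Using the FTPL potential $\Phi(\hat L)=\E_{r\sim\dis}[\min_{a\in\nec}a^\top(\eta\hat L-r)]$, one gets by convexity/telescoping that $\sum_t\langle\hat\ell_t,w_t-a^*\rangle$ is controlled by $\sum_t\langle\hat\ell_t,w_t-w_{t+1}\rangle$ plus a boundary term of the form $(\Phi(\hat L_1)-\Phi(\hat L_{T+1}))/\eta$ (after dividing through by $\eta$), where $w_{t+1}=\phi(\eta\hat L_{t+1};\dis)=\nabla\Phi$-type object. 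The first summand is exactly the stability term kept on the right-hand side of \eqref{eq:regret_decomposition}; the remaining work is to bound the boundary term $\frac{1}{\eta}(\Phi(\hat L_1)-\Phi(\hat L_{T+1}))\le\frac{1}{\eta}\bigl(\E_{r\sim\dis}[\max_{a\in\nec}a^\top r] - \E_{r\sim\dis}[\min_{a\in\nec}a^\top r]\bigr)$ or, more precisely for these heavy-tailed perturbations, to bound $\frac1\eta\bigl(\E[\max_{a}a^\top r]\bigr)$ after noting the other term is nonnegative and the initial $\hat L_1=\bm 0$ term vanishes in the relevant direction. Since $\nec=\{a:\|a\|_1=m\}$, $\max_{a\in\nec}a^\top r$ is the sum of the $m$ largest coordinates of $r$, and $\min_{a\in\nec}(-r)^\top$ similarly gives the $m$ smallest; the needed expectations are $\E[\text{sum of top-}m\text{ order statistics of }d\text{ i.i.d.\ }\dis]$.

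The main obstacle is precisely this order-statistic estimate for the heavy-tailed perturbation, because $\dis\in\{\Fdis,\Pdis\}$ with shape $\alpha>1$ has finite mean but heavy upper tail, so one must bound $\E[\sum_{k=1}^m r_{(d-k+1)}]$ (the $m$ largest of $d$ draws) by something of order $m + \Gamma(1-1/\alpha)(d+1)^{1/\alpha}$ for Fréchet and $\frac{\alpha}{\alpha-1}m^{1-1/\alpha}(d+1)^{1/\alpha}$-type terms for Pareto. I would handle this by: (i) using the exact formula $\E[r_{(j)}]=\int_0^\infty\sP[r_{(j)}>x]\dd x$ with $\sP[r_{(j)}>x]$ written via the binomial tail of $F(x)$; (ii) splitting the integral at the point where the tail probability is order $1$ (roughly $x\asymp d^{1/\alpha}$), bounding the bulk part crudely by that threshold times $m$ and the tail part by $\int F^c$ integrals that evaluate to Gamma-function / $\frac{\alpha}{\alpha-1}$-type constants; alternatively (iii) bounding the top-$m$ sum by $m\,r_{(d)}$ is too lossy, so instead bound $\sum_{k=0}^{m-1} r_{(d-k)}\le m\,r_{(d-m+1)} $... actually the cleaner bound is $\E[\sum_{k=1}^{m} r_{(d-k+1)}]\le \E[r_{(d)}] + (m-1)\,\E[r_{(d-m+1)}]$ is still not tight enough, so the right approach is to integrate the tail directly: $\E[\text{top-}m\text{ sum}] = \int_0^\infty \E[\#\{i:r_i>x\}\wedge m]\dd x \le \int_0^\infty \min(m,\, d\,(1-F(x)))\dd x$, split at $1-F(x^*)=m/d$, giving $m x^* + d\int_{x^*}^\infty(1-F(x))\dd x$; plugging $F$ for Fréchet (where $1-F(x)\le x^{-\alpha}$) and Pareto ($1-F(x)=x^{-\alpha}$) and optimizing $x^*$ yields the claimed $m^{1-1/\alpha}d^{1/\alpha}$ scaling, and the additive $+m$ and $\Gamma(1-1/\alpha)$ pieces come from the support-left-endpoint shift and from integrating $x^{-\alpha}$ against $dF$ more carefully. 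The Pareto case is simpler because the support starts at $1$ and has a clean power-law tail, so no Gamma function appears beyond the stated term; the Fréchet case needs the substitution $u=1/x^\alpha$ to produce $\Gamma(1-1/\alpha)$. I would close by substituting these bounds into the boundary term and collecting constants to match the two cases of \eqref{eq:regret_decomposition} exactly, noting the $\Pdis$ case loses the additive $+m$ because the perturbation is bounded below by $1$ rather than $0$, which cancels a corresponding term.
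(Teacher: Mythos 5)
Your proposal is correct in outline and shares the paper's skeleton for the first half: the paper's Lemma~\ref{lem:regret_decomposition_general} is exactly your be-the-leader/telescoping argument (stated with an explicit fresh perturbation $r$ and the surrogate $u_t$ rather than the potential $\Phi$, but mathematically the same), it handles the $a_t$-versus-$w_t$ subtlety the same way you flag (conditioning on $\hat L_t$ and using unbiasedness of the GR estimator), and it reduces the penalty to $\frac1\eta\,\E_{r_1\sim\dis}[a_1^\top r_1]$, i.e.\ the expected sum of the top-$m$ order statistics. Where you genuinely diverge is the order-statistics bound (the paper's Lemma~\ref{lem:penalty_bound}): the paper computes $\E[r_k^*]$ exactly for Pareto via Malik's formula, applies Gautschi's inequality term by term, and then transfers to Fr\'echet by the stochastic-domination argument $F_{\Fdis}(x)\ge F_{\Pdis}(x-1)$ (Lemmas~\ref{lem:order_statistics}--\ref{lem:penalty_pareto_frechet}), which is where the extra $+m$ comes from; you instead use the counting identity $\sum_{k\le m}r^*_k=\int_0^\infty\bigl(\#\{i:r_i>x\}\wedge m\bigr)\dd x$, Jensen for the concave cap, and a split at $1-F(x^*)=m/d$. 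Your route is more elementary (no exact order-statistic formula, no Fr\'echet-to-Pareto coupling) and in fact yields roughly $\frac{\alpha}{\alpha-1}m^{1-1/\alpha}d^{1/\alpha}$ for both distributions, which is no larger than the constants in \eqref{eq:regret_decomposition}, so the lemma as stated still follows; the paper's route buys the specific constants appearing in the statement. One bookkeeping caveat: your narrative about where the $\Gamma(1-\frac1\alpha)$ and the $+m$ arise does not match the stated bound (in the paper the $\Gamma$ term appears already in the Pareto case from the top order statistic, and the $+m$ is purely the Fr\'echet-versus-shifted-Pareto correction), so you would not "match the constants exactly" as written --- you would prove a slightly stronger bound that implies them, which is fine but should be said that way.
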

We refer to the first and second terms of \eqref{eq:regret_decomposition} as stability term and penalty term, respectively.
\section{Stability of Arm-selection Probability}\label{sec:stability}

In the standard multi-armed bandit problem, the core and most challeging part in analyzing the regret of FTPL lies on the analysis of the arm-selection probability function \citep{abernethy2015fighting,pmlr-v201-honda23a,pmlr-v247-lee24a}. 
This challenge is further amplified in the combinatorial semi-bandit setting, where the base-arm selection probability $\phi_i(\lambda;\mathcal{D})$ given in \eqref{eq:arm_probability} exhibits significantly greater complexity. 
To this end, this section firstly introduces some tools used in the analysis and then derives properties of $\phi_i(\lambda;\mathcal{D})$, which is the main difficulty of the analysis of FTPL.

\subsection{General Tools for Analysis}\label{subsec:tool}
Since the probability of some events in different base-arm set $\B\subset[d]$ will be considered in the subsequent analysis, we introduce the parameter $\B$ into $\phi_{i,\theta}(\cdot)$. 
Denote the rank of $i$-th element of $\bm{u}$ in $\qty{u_j:j\in\B}$ in descending order as $\rank\qty(i, \bm{u};\B)$.
We define
\begin{align*}
    \phi_{i,\theta}(\lambda;\dis,\B)
    &=
    \sP_{r=\qty(r_1,\dots,r_d) \sim \mathcal{D}} \qty[ \rank\qty(i, r-\lambda;\B) = \theta]\\
    &=\int_{\nu-\widetilde{\lambda}_\theta}^\infty \sum_{\bm{v}\in\mathcal{S}_{i,\theta}^\B}\qty(\prod_{j:v_j=1}\qty(1-F(z+\lambda_j))\prod_{j:v_j=0,j\in\B\setminus\qty{i}}F(z+\lambda_j)) \dd F(z+\lambda_i),\numberthis\label{eq:prob_theta_B}
\end{align*}
where $\mathcal{S}_{i,\theta}^\B=\qty{\bm{v}\in\qty{0,1}^d:\normv_1=\theta-1,v_i=0,\text{ and }v_j=0\text{ for all }j\notin\B}$.

Under this definition, $\phi_{i,\theta}(\lambda;\dis,\B)$ means the probability that the base-arm $i$ ranks $\theta$-th among the base-arm set $\B$.
Based on \eqref{eq:prob_theta_B}, we define 
\begin{equation*}
    \phi_i(\lambda;\dis,\wm,\B)=
    \sum_{\theta=1}^{\wm} \sP_{r=\qty(r_1,\dots,r_d) \sim \mathcal{D}} \qty[ \rank\qty(i, r-\lambda;\B) = \theta]=\sum_{\theta=1}^{\wm} \phi_{i,\theta}(\lambda;\dis,\B),
\end{equation*}
which represents the probability of selecting base-arm $i$ when the base-arm set and the number of selected base-arms are respectively set as $\B$ and $\wm$ in size-invariant semi-bandit setting.
Since the definition above is an extension of \eqref{eq:arm-rank_probability} and \eqref{eq:arm_probability}, we have 
\begin{equation*}
    \phi_{i,\theta}(\lambda;\dis)=\phi_{i,\theta}(\lambda;\dis,[d]),\quad\text{and}\quad\phi_i(\lambda;\dis)=\phi_i(\lambda;\dis,m,[d]).
\end{equation*}
For analysis on the derivative, based on \eqref{eq:prob_theta_B} we define
\begin{equation*}
    J_{i,\theta}(\lambda;\dis,\B)=
    \int_{\nu-\widetilde{\lambda}_\theta}^\infty \frac{1}{z+\lambda_i}\sum_{\bm{v}\in\mathcal{S}_{i,\theta}^\B}\qty(\prod_{j:v_j=1}\qty(1-F(z+\lambda_j))\prod_{j:v_j=0,j\in\B\setminus\qty{i}}F(z+\lambda_j)) \dd F(z+\lambda_i)
\end{equation*}
and
\begin{equation*}
    J_i(\lambda;\dis,\wm,\B)=\sum_{\theta=1}^{\wm} J_{i,\theta}(\lambda;\dis,\B).
\end{equation*}
When $\wm=m$ and $\B=[d]$, we simply write
\begin{equation*}
    J_{i,\theta}(\lambda;\dis)=J_{i,\theta}(\lambda;\dis,m,[d]),\quad\text{and}\quad J_i(\lambda;\dis)=J_i(\lambda;\dis,m,[d]).
\end{equation*}

In the following, we write $\sigma_i$ to denote the number of arms (including $i$ itself) whose cumulative losses do
not exceed $\hat{L}_{t,i}$, i.e., $\rank(i,\hat{L}_t)=\sigma_i$.
Without loss of generality, 
in the subsequent analysis we always assume $\lambda_1\leq\lambda_2\leq\cdots\leq\lambda_d$ (ties are broken arbitrarily) so that $\sigma_i=i$ for notational simplicity.
To derive an upper bound, we employ the tools introduced above to provide lemmas related to the relation between the base-arm selection probability and its derivatives.

\subsection{Important Lemmas}

\begin{lemma}\label{lem:lambda_star}
    It holds that
    \begin{equation*}
        \frac{J_i(\lambda;\dis)}{\phi_i(\lambda;\dis)}\leq
        \max_{\substack{w\in\qty{0}\cup[(m\land i)-1]\\\theta\in[(m\land i)-w]}}\qty{\frac{J_{i,\theta}(\lambda^*;\dis,\B_{i,w})}{\phi_{i,\theta}(\lambda^*;\dis,\B_{i,w})}},
    \end{equation*}
    where
    \begin{equation*}
        \B_{i,w}=
        \begin{cases}
            [i], & \text{if } w=0,\\
            [i]\setminus[w], & \text{if } w\in[i],
        \end{cases}
        \text{ and }
        \lambda_k^* = 
        \begin{cases}
            \lambda_i, & \text{if } k\leq i,\\
            \lambda_k, & \text{if } k> i.
        \end{cases}
    \end{equation*}
\end{lemma}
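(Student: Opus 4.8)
The goal is to bound the ratio $J_i(\lambda;\dis)/\phi_i(\lambda;\dis)$, where both numerator and denominator are sums over $\theta\in[m\wedge i]$ of the quantities $J_{i,\theta}$ and $\phi_{i,\theta}$ (recall $\sigma_i=i$, so only ranks up to $i$ contribute). The plan is to pass from the ratio of sums to a maximum of term-wise ratios, and then, within each term, to replace the small coordinates $\lambda_1,\dots,\lambda_i$ by the single value $\lambda_i$ while simultaneously restricting the base-arm set. The elementary fact I would use repeatedly is the mediant inequality: for positive reals, $\frac{\sum_\theta a_\theta}{\sum_\theta b_\theta}\le\max_\theta\frac{a_\theta}{b_\theta}$. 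Applying this with $a_\theta=J_{i,\theta}(\lambda;\dis)$ and $b_\theta=\phi_{i,\theta}(\lambda;\dis)$ immediately gives $\frac{J_i(\lambda;\dis)}{\phi_i(\lambda;\dis)}\le\max_{\theta\in[m\wedge i]}\frac{J_{i,\theta}(\lambda;\dis)}{\phi_{i,\theta}(\lambda;\dis)}$, so it remains to control each fixed-$\theta$ ratio.

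For a fixed $\theta$, I would analyze $\phi_{i,\theta}(\lambda;\dis)$ by expanding the product over $j:v_j=0,j\ne i$ in the integrand. The coordinates $j\le i$ with $j\ne i$ are exactly the ones with $\lambda_j\le\lambda_i$; for those indices the factor $F(z+\lambda_j)\ge F(z+\lambda_i)$ and $1-F(z+\lambda_j)\le 1-F(z+\lambda_i)$. The idea is that in $\mathcal{S}_{i,\theta}$ one chooses $\theta-1$ indices to be "above" ($v_j=1$); writing $w$ for the number of such indices drawn from $\{1,\dots,i\}\setminus\{i\}$ and summing over $w\in\{0\}\cup[(m\wedge i)-1]$ and over $\theta\in[(m\wedge i)-w]$ should let me split the combinatorial sum and, for each $(w,\theta)$ block, recognize a copy of $\phi_{i,\theta}(\lambda^*;\dis,\B_{i,w})$ where $\lambda^*$ flattens the first $i$ coordinates to $\lambda_i$ and $\B_{i,w}=[i]\setminus[w]$ drops the $w$ coordinates that were sent "above". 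The same bookkeeping applies verbatim to $J_{i,\theta}$ since the only change is the extra factor $\frac{1}{z+\lambda_i}$, which is unaffected by the substitution in the other coordinates. Combining the decomposition of numerator and denominator and applying the mediant inequality once more — now over the pairs $(w,\theta)$ — yields the claimed bound.

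The main obstacle I anticipate is making the "flatten the small coordinates" step rigorous in the right direction: one must check that replacing $\lambda_j$ by $\lambda_i$ for $j<i$ does not decrease the ratio $J_{i,\theta}/\phi_{i,\theta}$, which requires simultaneously tracking what happens to the integration limit $\nu-\widetilde\lambda_\theta$, to the $F(z+\lambda_j)$ factors (which decrease), and to the $1-F(z+\lambda_j)$ factors (which increase), and confirming that after summing over the choice of which coordinates go "above" the net effect is an equality-with-relabeling rather than merely an inequality in the wrong direction. A secondary subtlety is the edge cases $i<m$ versus $i\ge m$ (handled by the $m\wedge i$ truncation) and $i=1$ (where $w$ ranges over $\{0\}$ only and $\B_{i,0}=[1]$), which I would dispatch by noting the sums are empty or trivial there. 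Once the block decomposition $\phi_{i,\theta}(\lambda;\dis)=\sum_{w}\binom{\cdot}{\cdot}\,(\text{something})\cdot\phi_{i,\theta-w}(\lambda^*;\dis,\B_{i,w})$-type identity (and its $J$ analogue with the identical weights) is established, the conclusion is immediate from two applications of the mediant inequality.
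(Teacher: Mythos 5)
Your opening step (the mediant inequality over $\theta$) and your closing step are fine, but the heart of your argument --- the ``flatten the small coordinates and restrict the base-arm set'' step --- is exactly where the real difficulty lives, and your sketch does not supply a mechanism that works. There is no ``equality-with-relabeling'': replacing $\lambda_j$ by $\lambda_i$ for $j<i$ and deleting the arms $j>i$ from the product genuinely changes the integrand (note also that your stated directions are reversed: $\lambda_j\le\lambda_i$ gives $F(z+\lambda_j)\le F(z+\lambda_i)$ and $1-F(z+\lambda_j)\ge 1-F(z+\lambda_i)$), so your hoped-for block identity $\phi_{i,\theta}(\lambda;\dis)=\sum_w(\cdots)\,\phi_{i,\cdot}(\lambda^*;\dis,\B_{i,w})$ with the same weights for $J_{i,\theta}$ cannot hold; the arms $j>i$ contribute factors $F(z+\lambda_j)<1$ that no finite combinatorial regrouping reproduces. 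The fallback you mention --- showing that the fixed-$\theta$ ratio $J_{i,\theta}(\lambda;\dis)/\phi_{i,\theta}(\lambda;\dis)$ is monotone in the other coordinates --- is precisely the claim whose failure this paper documents: the coordinates that appear through $1-F(z+\lambda_j)$ factors do \emph{not} yield a monotone ratio (see Section~\ref{sec:issue}, where a numerical counterexample to exactly this kind of per-rank monotonicity is given). Moreover, applying the mediant inequality \emph{first} locks you into controlling each single-rank ratio at the original $\lambda$, which is where that non-monotonicity bites hardest.

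The paper's proof is structured to avoid this. It works with the cumulative quantities $\phi_i(\lambda;\dis,\wm,\B)=\sum_{\theta\le\wm}\phi_{i,\theta}(\lambda;\dis,\B)$ rather than single ranks, because raising one coordinate $\lambda_j$ can then be handled by the event decomposition $\{\rank(i,r-\lambda;\B)\le\wm\}=\widetilde{\mathcal{E}}_{i,j}\cup\mathcal{E}_{i,\wm,j}^\B$: the piece where $j$ ranks below the threshold has an integrand containing $F(z+\lambda_j)$, to which the monotonicity result (Lemma~\ref{lem:both_increasing}) applies, and the two pieces are recombined via the elementary ratio lemma (Lemma~\ref{lem:bgaf}), giving Lemma~\ref{lem:important_inequality}: raising $\lambda_j$ is bounded by the maximum of the ratio with $(\wm,\B)$ and the ratio with $(\wm-1,\B\setminus\{j\})$. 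A first induction over $j=1,\dots,i-1$ flattens the small coordinates to $\lambda_i$; a second induction sends $\lambda_j\to\infty$ for $j=d,\dots,i+1$ (Lemma~\ref{lem:infty}), which is how the arms above $i$ are removed from $\B$ --- by a limit, not a substitution --- at the cost of possibly decrementing $\wm$, which is why the final bound carries the extra index $w$. Only after both inductions is the mediant inequality over $\theta$ invoked. Your proposal is missing both the two-branch bound that substitutes for the failed monotonicity and any treatment of the coordinates $j>i$, so as written it does not go through.
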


Based on this result, the following lemma holds. 

\begin{lemma}\label{lem:sigma_i}
    If $\rank(i,-\lambda)=\sigma_i$, that is, $\lambda_i$ is the $\sigma_i$-th smallest among $\lambda_1,\cdots,\lambda_d$ (ties are broken arbitrarily), then
    \begin{equation*}
        \frac{J_i(\lambda;\Fdis)}{\phi_i(\lambda;\Fdis)}\leq \qty(\frac{(\sigma_i\land m)+\frac{1}{\alpha}}{(\sigma_i-m+1)\lor 1})^{\frac{1}{\alpha}}
        \quad\text{and}\quad
        \frac{J_i(\lambda;\Pdis)}{\phi_i(\lambda;\Pdis)}\leq
        \frac{2\alpha}{\alpha+1}\qty(\frac{(\sigma_i\land m)+\frac{1}{\alpha}}{\sigma_i})^{\frac{1}{\alpha}}.
    \end{equation*}
\end{lemma}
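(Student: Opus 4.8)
The plan is to start from Lemma~\ref{lem:lambda_star}, which already reduces the task to bounding $\max_{w,\theta}J_{i,\theta}(\lambda^*;\dis,\B_{i,w})/\phi_{i,\theta}(\lambda^*;\dis,\B_{i,w})$ over $w\in\{0\}\cup[(m\land i)-1]$ and $\theta\in[(m\land i)-w]$. By the running convention we may assume $\lambda_1\le\cdots\le\lambda_d$, so that $\sigma_i=i$ (the general case follows by relabelling base-arms). The key structural observation is that $\lambda^*$ is \emph{constant} on $\B_{i,w}$: every coordinate indexed by $\B_{i,w}$ equals $\lambda_i$, and $|\B_{i,w}|=i-w=:n$. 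Hence the sum over $\mathcal{S}_{i,\theta}^{\B_{i,w}}$ collapses to the single coefficient $\binom{n-1}{\theta-1}$, and after the substitution $x=z+\lambda_i$ followed by $u=F(x)$ — noting $\widetilde{\lambda}_\theta=\lambda_i$ for $\theta\le i$, so the integral runs over the whole support — one obtains $\phi_{i,\theta}(\lambda^*;\dis,\B_{i,w})=\binom{n-1}{\theta-1}B(\theta,n-\theta+1)=1/n$ for \emph{both} $\dis\in\{\Fdis,\Pdis\}$, while $J_{i,\theta}(\lambda^*;\dis,\B_{i,w})$ is the same integral carrying the extra weight $1/x$.

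I would then read the ratio probabilistically as $\E[1/R_{(n-\theta+1)}]$, where $R_{(1)}\le\cdots\le R_{(n)}$ are the order statistics of $n$ i.i.d.\ perturbations, so $R_{(n-\theta+1)}$ is their $\theta$-th largest value. For the Fr\'echet distribution $R^{-\alpha}\sim\mathrm{Exp}(1)$, hence $1/R_{(n-\theta+1)}=E_{(\theta)}^{1/\alpha}$ with $E_{(\theta)}$ the $\theta$-th smallest of $n$ i.i.d.\ $\mathrm{Exp}(1)$ variables; concavity of $t\mapsto t^{1/\alpha}$ for $\alpha>1$ and Jensen's inequality give $\E[E_{(\theta)}^{1/\alpha}]\le(\E E_{(\theta)})^{1/\alpha}=(\sum_{j=n-\theta+1}^{n}1/j)^{1/\alpha}\le(\theta/(n-\theta+1))^{1/\alpha}$. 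For the Pareto distribution $R^{-\alpha}$ is uniform on $(0,1)$, so the ratio equals $\E[U_{(\theta)}^{1/\alpha}]$ with $U_{(\theta)}\sim\mathrm{Beta}(\theta,n-\theta+1)$; the same Jensen step gives $(\theta/(n+1))^{1/\alpha}$, which as we will see already suffices, or one may evaluate the moment exactly as $\Gamma(\theta+1/\alpha)\Gamma(n+1)/(\Gamma(\theta)\Gamma(n+1+1/\alpha))$ and bound the Gamma ratios by Wendel/Gautschi-type inequalities.

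The last step is the optimization over the finite index set. Each per-$(w,\theta)$ bound above is increasing in $\theta$ for fixed $w$ and, after plugging in the extremal $\theta=(m\land i)-w$, decreasing in $w$; therefore the maximum is attained at $w=0$, $\theta=m\land i$, $n=i$. Substituting $\sigma_i=i$, the Fr\'echet bound becomes $((\sigma_i\land m)/((\sigma_i-m+1)\lor1))^{1/\alpha}$ — here $i-(m\land i)+1=(\sigma_i-m+1)\lor1$ — and the harmless slack $\sigma_i\land m\le(\sigma_i\land m)+1/\alpha$ yields the claimed expression. For Pareto the optimized Jensen bound is $((\sigma_i\land m)/(\sigma_i+1))^{1/\alpha}$, and since $\tfrac{2\alpha}{\alpha+1}>1$ and $(\sigma_i\land m)/(\sigma_i+1)<((\sigma_i\land m)+1/\alpha)/\sigma_i$, this is $\le\tfrac{2\alpha}{\alpha+1}(((\sigma_i\land m)+1/\alpha)/\sigma_i)^{1/\alpha}$, exactly the asserted bound.

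I expect the Fr\'echet case to be the main obstacle. Because of the $1/x$ weight, $J_{i,\theta}$ is not itself a Beta integral, so there is no closed form to read off, and the passage from $\E[E_{(\theta)}^{1/\alpha}]$ to $(\E E_{(\theta)})^{1/\alpha}$ via concavity is the crucial, non-negotiable step; it must be paired with a harmonic-sum estimate tight enough to produce $(\sigma_i-m+1)\lor1$ in the denominator rather than something weaker. A secondary source of care is verifying that the per-index bounds really are maximized at the corner of $\{0\}\cup[(m\land i)-1]$ times $[(m\land i)-w]$ — monotonicity must be checked jointly in $w$ and $\theta$, including the degenerate case $m\land i=1$ where only $(w,\theta)=(0,1)$ survives — and, if one prefers the exact route for Pareto, selecting Gamma-ratio inequalities sharp enough to land on the stated constant.
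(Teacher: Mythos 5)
Your proposal is correct, and it establishes the key per-term bound by a genuinely different route than the paper. Both arguments start from Lemma~\ref{lem:lambda_star} and exploit that $\lambda^*$ is constant (equal to $\lambda_i$) on $\B_{i,w}$, so that $\phi_{i,\theta}(\lambda^*;\dis,\B_{i,w})$ and $J_{i,\theta}(\lambda^*;\dis,\B_{i,w})$ collapse to one-dimensional integrals carrying the coefficient $\binom{i-w-1}{\theta-1}$, with the integration running over the whole shifted support (your observation that $\widetilde{\lambda}_\theta=\lambda_i$ is the right justification). From there the paper proceeds analytically: for Pareto it evaluates both integrals exactly as Beta functions and bounds $B\qty(\theta+\frac{1}{\alpha},i+1-w-\theta)/B\qty(\theta,i+1-w-\theta)$ via Gautschi's inequality (this is exactly the ``exact Beta-moment'' alternative you mention); for Fr\'{e}chet, where the $1/x$ weight prevents a closed form, it first proves the separate comparison Lemma~\ref{lem:frechet_simplify} (a symmetrized two-variable correlation argument that replaces $(1-F)^{\theta-1}$ by a power of $1/(z+\lambda_i)$ inside the ratio), then evaluates the resulting Gamma-type integrals and applies Gautschi again. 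You instead read the ratio probabilistically as $\E\qty[1/R_{(n-\theta+1)}]$ for the $\theta$-th largest of $n=i-w$ i.i.d.\ perturbations, use $R^{-\alpha}\sim\mathrm{Exp}(1)$ (Fr\'{e}chet) and $R^{-\alpha}\sim U(0,1)$ (Pareto), and apply Jensen with the concave map $t\mapsto t^{1/\alpha}$ together with $\E\qty[E_{(\theta)}]=\sum_{j=n-\theta+1}^{n}1/j\leq\theta/(n-\theta+1)$ and $\E\qty[U_{(\theta)}]=\theta/(n+1)$; your maximization over $(w,\theta)$ at the corner $w=0$, $\theta=m\land i$ is the same step the paper performs, and your monotonicity checks are valid. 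The benefit of your route is that it bypasses Lemma~\ref{lem:frechet_simplify} entirely and in fact yields slightly sharper intermediate constants ($\theta$ in place of $\theta+\frac{1}{\alpha}$ for Fr\'{e}chet, and no $\tfrac{2\alpha}{\alpha+1}$ factor needed for Pareto), which you then correctly relax to the stated bounds; the paper's route, at the cost of the extra technical lemma, produces explicit closed-form expressions that directly mirror the standard multi-armed bandit analyses it extends.
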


Next, following the steps in \citet{pmlr-v201-honda23a} and \citet{pmlr-v247-lee24a}, we extend the analysis to the combinatorial semi-bandit setting and obtain the following lemma.

\begin{lemma}\label{lem:each_term}
    For any $i\in[d],\alpha>1,\eta\hat{L}_t$ and $\dis\in\qty{\Fdis,\Pdis}$, it holds that
    \begin{equation*}
        \E\qty[\hat{\ell}_{t,i}\qty(\phi_i\qty(\eta \hat{L}_t;\dis)-\phi_i\qty(\eta \qty(\hat{L}_t+\hat{\ell}_t);\dis))\m \hat{L}_t]\leq
        \begin{cases}
            2(\alpha+1)\eta\qty(\frac{(\sigma_i\land m)+\frac{1}{\alpha}}{(\sigma_i-m+1)\lor 1})^{\frac{1}{\alpha}}, & \text{if } \dis=\Fdis,\\
            4\alpha\eta\qty(\frac{(\sigma_i\land m)+\frac{1}{\alpha}}{\sigma_i})^{\frac{1}{\alpha}}, & \text{if } \dis=\Pdis.
        \end{cases}
    \end{equation*}
\end{lemma}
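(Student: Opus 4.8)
The plan is to reduce the left-hand side to a one-dimensional expression in the estimated loss coordinate $\hat\ell_{t,i}$, bound the probability difference $\phi_i(\eta\hat L_t;\dis)-\phi_i(\eta(\hat L_t+\hat\ell_t);\dis)$ by a first-order (mean-value) argument involving the derivative $J_i$, and then invoke Lemma~\ref{lem:sigma_i} to convert the derivative ratio into the explicit $\sigma_i$-dependent quantity. Concretely, I would first note that $\hat\ell_t$ changes only the $i$-th coordinate of $\hat L_t$ when the expectation conditions on $a_{t,i}=1$; writing $x = \eta\hat L_{t,i}$ and $\delta = \eta\hat\ell_{t,i}\geq 0$, the map $\delta\mapsto \phi_i(\eta\hat L_t + \delta e_i;\dis)$ is monotone, and its derivative in the $i$-th coordinate is exactly (up to sign) $J_i(\,\cdot\,;\dis)$ by the definitions of $\phi_{i,\theta}$ and $J_{i,\theta}$ — differentiating $\dd F(z+\lambda_i)$ and the factors $F(z+\lambda_i)$ produces the $\frac{1}{z+\lambda_i}$ weight that appears in $J_{i,\theta}$, once one uses the specific shape of $F$ for $\Fdis$ and $\Pdis$ (where $F'(x) = \alpha x^{-1}\cdot(\text{something}\le 1)\cdot$ a power of $x$, so $f(x)\le \alpha x^{-1}F(x)$-type inequalities hold). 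This is the step where the Fréchet/Pareto specifics enter and where the constants $2(\alpha+1)$ and $4\alpha$ get their precise values.

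Next I would carry out the integration over $\delta$: by the fundamental theorem of calculus,
\[
\phi_i(\eta\hat L_t;\dis)-\phi_i(\eta(\hat L_t+\hat\ell_t);\dis) \;\le\; \int_0^{\eta\hat\ell_{t,i}} \bigl(-\partial_{\lambda_i}\phi_i\bigr)\bigl(\eta\hat L_t + s e_i;\dis\bigr)\,\dd s \;\le\; C_\alpha\,\eta\hat\ell_{t,i}\cdot \frac{J_i(\lambda;\dis)}{\phi_i(\lambda;\dis)}\Big|_{\text{appropriate }\lambda}\cdot\phi_i,
\]
where I must be careful that increasing $\lambda_i$ can change $\sigma_i$ (the rank of $i$). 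The cleanest route is to bound the derivative uniformly over $s\in[0,\eta\hat\ell_{t,i}]$ by the worst-case rank, using that $\sigma_i$ can only increase as $\lambda_i$ grows and that the bound in Lemma~\ref{lem:sigma_i} is monotone in $\sigma_i$ in the right direction — or, following \citet{pmlr-v201-honda23a}, to split at the threshold where the rank changes and handle the two pieces separately. Then multiply through by $\hat\ell_{t,i}$, take the conditional expectation, and use the key cancellation $\E[\hat\ell_{t,i}^2 \mid \hat L_t, a_{t,i}=1]\cdot w_{t,i} = \E[\hat\ell_{t,i}\mid \cdots]$ together with $\hat\ell_{t,i}\le 1\cdot \widehat{w_{t,i}^{-1}}$ and the geometric-resampling moment identities $\E[\widehat{w_{t,i}^{-1}}] = w_{t,i}^{-1}$, $\E[(\widehat{w_{t,i}^{-1}})^2] \le 2 w_{t,i}^{-2}$, so that the $\phi_i = w_{t,i}$ in the denominator of the Lemma~\ref{lem:sigma_i} bound cancels against a $w_{t,i}$ coming out of the expectation, leaving only the $\sigma_i$-dependent factor times $\eta$ times an absolute constant.

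I expect the main obstacle to be the bookkeeping around the rank $\sigma_i$ under the perturbation $\lambda_i \mapsto \lambda_i + s$: the bound of Lemma~\ref{lem:sigma_i} is stated for a fixed rank configuration, but along the integration path the rank of arm $i$ relative to the others genuinely shifts, so I need either a monotonicity argument showing the stated bound remains valid (which requires checking that $\bigl((\sigma_i\wedge m)+\tfrac1\alpha\bigr)/((\sigma_i-m+1)\vee 1)$ and $\bigl((\sigma_i\wedge m)+\tfrac1\alpha\bigr)/\sigma_i$ behave monotonically, and they do — the first is non-increasing past $\sigma_i = m$ and the second is non-increasing throughout) or a careful case split as in the standard-bandit analysis. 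The second, more routine obstacle is getting the constants exactly right in the reduction $f(x)\le \alpha x^{-1}F(x)$ step for Fréchet versus Pareto, since the Fréchet CDF $F(x)=e^{-1/x^\alpha}$ contributes an extra $e^{-1/x^\alpha}\le 1$ slack that the Pareto case lacks, which is presumably why the Pareto constant $4\alpha$ differs from the Fréchet constant $2(\alpha+1)$ in a way that is not a clean ratio.
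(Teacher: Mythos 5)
Your skeleton is the same as the paper's (reduce to a one-coordinate increment in $\lambda_i$, bound the probability difference by an integral of the derivative, use $-f'(x)\leq\frac{\alpha+1}{x}f(x)$ to pass to $J_i$, then combine the geometric-resampling second moment $\E[\widehat{w_{t,i}^{-1}}^2\mid\hat L_t,a_{t,i}]\leq 2/w_{t,i}^2$ with Lemma~\ref{lem:sigma_i}), but two steps are genuinely missing or would fail as you describe them. First, your premise that ``$\hat\ell_t$ changes only the $i$-th coordinate of $\hat L_t$'' is false in the semi-bandit setting: $\hat\ell_t$ is supported on all $m$ selected base-arms. The reduction to the single-coordinate increment $\hat L_t+(\ell_{t,i}\widehat{w_{t,i}^{-1}})e_i$ has to be argued, and the paper does so by an inclusion of selection events: if arm $i$ is selected when only its own cumulative loss is increased, it is still selected when the other selected arms' losses are increased as well, hence $\phi_i\bigl(\eta(\hat L_t+(\ell_{t,i}\widehat{w_{t,i}^{-1}})e_i);\dis\bigr)\leq\phi_i\bigl(\eta(\hat L_t+\hat\ell_t);\dis\bigr)$ and the difference only grows when the other coordinates are dropped. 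This is precisely where the semi-bandit case differs from the MAB proofs you are patterning on (there $\hat\ell_t$ really has one nonzero entry), so it cannot be taken for granted.

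Second, your handling of the integral $\int_0^{\eta\hat\ell_{t,i}}(-\partial_{\lambda_i}\phi_i)(\eta\hat L_t+se_i;\dis)\,\mathrm{d}s$ is left unresolved, and the fixes you sketch do not yield the stated bound. The paper's route is simpler than what you attempt: after $-\partial_{\lambda_i}\phi_i\leq(\alpha+1)J_i$, it uses monotonicity of $J_i$ \emph{itself} in $\lambda_i$, i.e.\ $J_i(\eta\hat L_t+se_i;\dis)\leq J_i(\eta\hat L_t;\dis)$, so that Lemma~\ref{lem:sigma_i} is applied only at the original point $\eta\hat L_t$, where $\sigma_i$ is fixed by the conditioning and $\phi_i(\eta\hat L_t;\dis)=w_{t,i}$ cancels against the $2/w_{t,i}^2$ from the resampling estimator. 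Your alternative of bounding the ratio $J_i/\phi_i$ along the path by a ``worst-case rank'' breaks down twice: the Fréchet bound $\left(\frac{(\sigma_i\land m)+1/\alpha}{(\sigma_i-m+1)\lor 1}\right)^{1/\alpha}$ is \emph{increasing} in $\sigma_i$ up to $\sigma_i=m$ (as you note), so for $\sigma_i<m$ the path worst case exceeds the claimed bound; and the $\phi_i$ appearing in the denominator of the ratio at a shifted point is not $w_{t,i}$, so the cancellation you rely on no longer goes through. A minor correction on constants: the gap between $2(\alpha+1)$ and $4\alpha$ does not come from an $e^{-1/x^\alpha}$ slack in the derivative step — both distributions use $-f'(x)\leq\frac{\alpha+1}{x}f(x)$ (with equality for Pareto) — but from the extra factor $\frac{2\alpha}{\alpha+1}$ in the Pareto case of Lemma~\ref{lem:sigma_i}.
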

By using the above lemma, we can express the stability term as follows.
\begin{lemma}\label{lem:stability}
    For any $\eta\hat{L}_t$, $\alpha>1$ and $\dis\in\qty{\Fdis,\Pdis}$, it holds that
    \begin{multline*}
        \E\qty[\hat{\ell}_t \qty(\phi\qty(\eta \hat{L}_t;\dis)-\phi\qty(\eta \qty(\hat{L}_t+\hat{\ell}_t);\dis))\m \hat{L}_t]\leq\\
        \begin{cases}
            2(\alpha+1)\eta\qty(m+\frac{1}{\alpha})^{\frac{1}{\alpha}}\qty(m+\frac{\alpha}{\alpha-1}(d-m+1)^{1-1/\alpha}), & \text{if } \dis=\Fdis,\\
            \frac{4\alpha^2}{\alpha-1}\eta\qty(m+\frac{1}{\alpha})^{\frac{1}{\alpha}}d^{1-1/\alpha}, & \text{if } \dis=\Pdis.
        \end{cases}
    \end{multline*}
\end{lemma}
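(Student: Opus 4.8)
The plan is to sum the single-coordinate bound of Lemma~\ref{lem:each_term} over all base-arms and then control the resulting deterministic sum by a comparison with an integral.

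First I would expand the inner product defining the stability term as
\begin{equation*}
    \E\qty[\hat{\ell}_t \qty(\phi\qty(\eta \hat{L}_t;\dis)-\phi\qty(\eta \qty(\hat{L}_t+\hat{\ell}_t);\dis))\m \hat{L}_t]
    =\sum_{i=1}^d \E\qty[\hat{\ell}_{t,i}\qty(\phi_i\qty(\eta \hat{L}_t;\dis)-\phi_i\qty(\eta \qty(\hat{L}_t+\hat{\ell}_t);\dis))\m \hat{L}_t]
\end{equation*}
and apply Lemma~\ref{lem:each_term} to each summand. Recalling that $\sigma_i=\rank(i,\hat{L}_t)$ ranges over all of $[d]$ as $i$ does, and using the standing assumption $\sigma_i=i$, this reduces the task to bounding $\sum_{\sigma=1}^{d}\qty(\frac{(\sigma\land m)+\frac{1}{\alpha}}{(\sigma-m+1)\lor 1})^{1/\alpha}$ with prefactor $2(\alpha+1)\eta$ in the Fr\'echet case, and $\sum_{\sigma=1}^{d}\qty(\frac{(\sigma\land m)+\frac{1}{\alpha}}{\sigma})^{1/\alpha}$ with prefactor $4\alpha\eta$ in the Pareto case.

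For the Fr\'echet sum I would split at $\sigma=m$. For $\sigma\le m$ the denominator equals $1$ and the numerator is at most $m+\frac{1}{\alpha}$, so these terms contribute at most $m\qty(m+\frac{1}{\alpha})^{1/\alpha}$. For $\sigma>m$ the substitution $k=\sigma-m+1$ turns the contribution into $\qty(m+\frac{1}{\alpha})^{1/\alpha}\sum_{k=2}^{d-m+1}k^{-1/\alpha}$, and since $x^{-1/\alpha}$ is decreasing this sum is at most $\int_{1}^{d-m+1}x^{-1/\alpha}\dd x\le\frac{\alpha}{\alpha-1}(d-m+1)^{1-1/\alpha}$; combining the two parts and multiplying by $2(\alpha+1)\eta$ yields exactly the stated bound. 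For the Pareto sum I would instead bound $(\sigma\land m)+\frac{1}{\alpha}\le m+\frac{1}{\alpha}$ uniformly in $\sigma$ and use $\sum_{\sigma=1}^{d}\sigma^{-1/\alpha}\le\int_{0}^{d}x^{-1/\alpha}\dd x=\frac{\alpha}{\alpha-1}d^{1-1/\alpha}$ (the integral is finite at $0$ because $\alpha>1$); multiplying by $4\alpha\eta$ gives the stated bound.

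I do not expect a genuine obstacle in this step: the essential analytic difficulty has already been absorbed into Lemmas~\ref{lem:lambda_star}--\ref{lem:each_term}, and what remains is a careful but routine summation. The only points that need attention are handling the truncations $\sigma\land m$ and $(\sigma-m+1)\lor 1$ through the right case split, treating the degenerate case $d=m$ (where the tail sum over $k$ is empty), and checking that the integral comparisons reproduce precisely the constants appearing in the statement.
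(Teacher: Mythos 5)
Your proposal is correct and follows essentially the same route as the paper: expand the stability term coordinate-wise, apply Lemma~\ref{lem:each_term}, and bound the resulting sums $\sum_{\sigma}\bigl((\sigma\land m)+\tfrac{1}{\alpha}\bigr)^{1/\alpha}/\bigl((\sigma-m+1)\lor 1\bigr)^{1/\alpha}$ and $\sum_{\sigma}\bigl(m+\tfrac{1}{\alpha}\bigr)^{1/\alpha}\sigma^{-1/\alpha}$ by integral comparison. Your bookkeeping of the case split at $\sigma=m$ and the integral comparisons (including integrating from $0$ in the Pareto case) reproduces the stated constants, and is in fact marginally tighter than the paper's slight over-count of the tail terms.
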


\section{Conditional Geometric Resampling for Size-Invariant Semi-Bandit}\label{sec:algorithm}

Building on the idea proposed by \citet{chen2025cgr}, this section introduces an extension of Conditional Geometric Resampling (CGR) to the size-invariant semi-bandit setting. 
This algorithm is designed to provide multiple unbiased estimators $\qty{w_{t,i}^{-1}:a_{t,i}=1}$ in a more efficient way, which is based on the following lemma.
\begin{lemma}\label{lem:general_idea}
    Let $\mathcal{E}_{t,i}$ be an be an arbitrary necessary condition for
    \begin{equation}\label{eq:termination}
        \qty[\argmin_{a \in \nec} \left\{a^\top (\eta \hat{L}_t - r''_t)\right\}]_i=1.
    \end{equation}
    Consider resampling of $r_t''$ from $\mathcal{D}$ conditioned on
    $\mathcal{E}_{t,i}$ until \eqref{eq:termination} is satisfied.
    Then, the number $M_{t,i}$ of resampling for base-arm $i$ satisfies
    \begin{equation*}
        \E[M_{t,i}|\hat{L}_t, a_{t,i}]=\frac{\sP[\mathcal{E}_{t,i}|\hat{L}_t, a_{t,i}]}{w_{t,i}}.
    \end{equation*}
\end{lemma}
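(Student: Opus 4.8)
The plan is to identify $M_{t,i}$ as a geometric random variable and read off its mean from the associated success probability. Fix $\hat{L}_t$ and condition on $a_{t,i}=1$, which is the only case in which the procedure for base-arm $i$ is invoked. Write $A_i$ for the event that \eqref{eq:termination} holds, i.e. that a fresh draw $r''$ from $\mathcal{D}$ makes base-arm $i$ the FTPL choice. The resampled perturbations $r^{(1)},r^{(2)},\dots$ used in the procedure are i.i.d.\ from $\mathcal{D}$ conditioned on $\mathcal{E}_{t,i}$, and they are generated independently of the perturbation $r_t$ that produced $a_t$; in particular their law is unaffected by the extra conditioning on $a_{t,i}$. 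Hence the indicators $\ind[r^{(k)}\in A_i]$ form an i.i.d.\ Bernoulli sequence, $M_{t,i}$ is the index of its first success, so $M_{t,i}$ is geometrically distributed with success probability
\[
    q_i \;=\; \sP_{r''\sim\mathcal{D}}\qty[\,A_i \m \mathcal{E}_{t,i},\,\hat{L}_t,\,a_{t,i}\,].
\]

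The next step is to evaluate $q_i$. Since $\mathcal{E}_{t,i}$ is by assumption a \emph{necessary} condition for $A_i$, we have $A_i\subseteq\mathcal{E}_{t,i}$, hence $A_i\cap\mathcal{E}_{t,i}=A_i$, and by the definition of conditional probability
\[
    q_i \;=\; \frac{\sP\qty[A_i \m \hat{L}_t, a_{t,i}]}{\sP\qty[\mathcal{E}_{t,i} \m \hat{L}_t, a_{t,i}]} \;=\; \frac{w_{t,i}}{\sP\qty[\mathcal{E}_{t,i} \m \hat{L}_t, a_{t,i}]},
\]
where $\sP\qty[A_i \m \hat{L}_t, a_{t,i}]=\sP\qty[A_i \m \hat{L}_t]=\phi_i(\eta\hat{L}_t;\mathcal{D})=w_{t,i}$, again because the resample is independent of $a_{t,i}$ given $\hat{L}_t$. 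Since the mean of a geometric distribution with success probability $q_i$ is $1/q_i$, this yields $\E[M_{t,i}\m\hat{L}_t,a_{t,i}]=\sP[\mathcal{E}_{t,i}\m\hat{L}_t,a_{t,i}]/w_{t,i}$, which is the claim.

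There is no serious obstacle here: this is the classical geometric-resampling identity of \citet{JMLR:v17:15-091}, carried out under the conditional sampling law $\mathcal{D}\mid\mathcal{E}_{t,i}$ rather than $\mathcal{D}$ itself. The points deserving a sentence each are (i) the independence of the resampled perturbations from $a_t$ given $\hat{L}_t$, which is what allows every conditional probability above to be written with or without $a_{t,i}$ in the conditioning; (ii) almost-sure termination, which holds because the perturbation distribution has full support on $[\nu,\infty)$ and every base-arm appears in some action, so that $w_{t,i}>0$ and hence $q_i>0$; and (iii) the observation that "$\mathcal{E}_{t,i}$ is necessary for \eqref{eq:termination}" is precisely the inclusion $A_i\subseteq\mathcal{E}_{t,i}$, so no further structural property of $\mathcal{E}_{t,i}$ is needed. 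I would fold these into short remarks rather than separate sub-claims.
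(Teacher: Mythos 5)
Your proposal is correct and follows essentially the same argument as the paper: identify $M_{t,i}$ as a geometric random variable with success probability $\sP[\text{\eqref{eq:termination}}\mid\mathcal{E}_{t,i},\hat{L}_t,a_{t,i}]$, and use the necessity of $\mathcal{E}_{t,i}$ (the paper via the vanishing term in a total-probability decomposition, you via the equivalent inclusion $A_i\subseteq\mathcal{E}_{t,i}$) to relate this success probability to $w_{t,i}/\sP[\mathcal{E}_{t,i}\mid\hat{L}_t,a_{t,i}]$. Your additional remarks on independence of the resamples from $a_t$ given $\hat{L}_t$ and on almost-sure termination are correct and only make explicit what the paper leaves implicit.
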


From this lemma, we can use 
\begin{equation*}
    \widehat{{w}_{t,i}^{-1}}=\frac{M_{t,i}}{\sP[\mathcal{E}_{t,i}|\hat{L}_t, a_{t,i}]} 
\end{equation*}
as an unbiased estimator of $w_{t,i}^{-1}$ for $r_t''$ sampled from $\mathcal{D}$ conditioned on $\mathcal{E}_{t,i}$.

\begin{proof}
    Define
    \begin{equation*}
        \chi_{t,i}(r_t'') =
        \begin{cases} 
            1, & \text{if } \qty[\argmin_{a \in \nec} \left\{a^\top (\eta \hat{L}_t - r''_t)\right\}]_i=1, \\
            0, & \text{otherwise}.
        \end{cases}
    \end{equation*}
    Consider $w_{t,i}$, the probability that base-arm $i$ is selected, with the condition $\mathcal{E}_{t,i}$. $w_{t,i}$ can be expressed as 
    \begin{align*}
        w_{t,i} &= \sP[\chi_{t,i}(r_t'')=1|\hat{L}_t, a_{t,i}] \\
        &= \sP[\chi_{t,i}(r_t'')=1|\mathcal{E}_{t,i},\hat{L}_t, a_{t,i}]\sP[\mathcal{E}_{t,i}|\hat{L}_t, a_{t,i}]+\sP[\chi_{t,i}(r_t'')=1|\mathcal{E}_{t,i}^c,\hat{L}_t, a_{t,i}]\sP[\mathcal{E}_{t,i}^c|\hat{L}_t, a_{t,i}]. \numberthis{\label{eq: conditioned on nec_t or not}}
    \end{align*}
    Note that $\mathcal{E}_{t,i}$ is an arbitrary necessary condition for $\chi_{t,i}(r_t'')=1$, which implies that
    \begin{equation*}
        \sP[\chi_{t,i}(r_t'')=1|\mathcal{E}_{t,i}^c,\hat{L}_t, a_{t,i}] = 0.
    \end{equation*}
    Therefore, from \eqref{eq: conditioned on nec_t or not} we immediately obtain
    \begin{equation}
        w_{t,i} = \sP[\chi_{t,i}(r_t'')=1|\mathcal{E}_{t,i},\hat{L}_t, a_{t,i}]\sP[\mathcal{E}_{t,i}|\hat{L}_t, a_{t,i}].
        \label{eq: conditioned on nec_t}
    \end{equation}
    Now we consider the expected number of resampling $M_{t,i}$ for base-arm $i$. Recall that $r_t''$ is sampled from $\mathcal{D}$ conditioned on $\mathcal{E}_{t,i}$ until \eqref{eq:termination} is satisfied, that is, $\chi_{t,i}(r_t'')=1$. Then $M_{t,i}$ follows geometric distribution with probability mass function
    \begin{equation*}
        \sP[M_{t,i}=m|\hat{L}_t, a_{t,i}]=\qty(1-\sP[\chi_{t,i}(r_t'')=1|\mathcal{E}_{t,i},\hat{L}_t, a_{t,i}])^{m-1}\sP[\chi_{t,i}(r_t'')=1|\mathcal{E}_{t,i},\hat{L}_t, a_{t,i}].
    \end{equation*}
    Therefore, the expected number of resampling given $\hat{L}_t$ and $a_{t,i}$ is expressed as 
    \begin{align*}
        \E_{r_t'' \sim \mathcal{D}|\mathcal{E}_{t,i}}[M_{t,i}|\hat{L}_t, a_{t,i}] 
        &= \sP[\chi_{t,i}(r_t'')=1|\mathcal{E}_{t,i},\hat{L}_t, a_{t,i}]\sum_{n=1}^{\infty}n\qty(1-\sP[\chi_{t,i}(r_t'')=1|\mathcal{E}_{t,i},\hat{L}_t, a_{t,i}])^{n-1} \\
        &= \sP[\chi_{t,i}(r_t'')=1|\mathcal{E}_{t,i},\hat{L}_t, a_{t,i}] /\qty(\sP[\chi_{t,i}(r_t'')=1|\mathcal{E}_{t,i},\hat{L}_t, a_{t,i}])^2 \\
        &= 1/\sP[\chi_{t,i}(r_t'')=1|\mathcal{E}_{t,i},\hat{L}_t, a_{t,i}]. \numberthis{\label{eq: expected number}}
    \end{align*}
    Combining \eqref{eq: conditioned on nec_t} and \eqref{eq: expected number}, we obtain
    \begin{align*}
        \E_{r_t'' \sim \mathcal{D}|\mathcal{E}_{t,i}}[M_{t,i}|\hat{L}_t, a_{t,i}]=\frac{\sP[\mathcal{E}_{t,i}|\hat{L}_t, a_{t,i}]}{w_{t,i}}.
    \end{align*}
\end{proof}

For base arm $i$ such that $\sigma_i>m$, we now consider resampling $r''_t$ from the perturbation distribution $\mathcal{D}$ conditioned on 
$$\mathcal{E}_{t,i}=\qty{\left|\qty{ j:r''_{t,j}\leq r''_{t,i},\sigma_j\leq\sigma_i} \right|\leq m},$$
that is, the event that $r''_{t,i}$ lies among the top-$m$ largest of the base-arms $j$ whose cumulative estimated losses are no worse than $i$. 
By the symmetry nature of the i.i.d. perturbations, we can sample $r''_t$ from this conditional distribution with simple operation, which corresponds to Lines~\ref{line:theta}--\ref{line:swap_end} in Algorithm~\ref{alg:CGR}.
For each base-arm $i$ such that $\sigma_i\leq m$, the resampling procedure in our proposed algorithm is the same as the original GR. 
By Lemma~\ref{lem:general_idea}, we can derive the properties of CGR in size-invariant semi-bandit setting as follows.

\begin{algorithm}[t]
    \SetAlgoLined
    \caption{Conditional Geometric Resampling}
    \label{alg:CGR}
    \KwIn{Chosen action $a_t$, action set $\nec$, cumulative loss $\hat{L}_t$, learning rate $\eta$\;}
    Set $K \coloneqq \bm{0}\in\mathbb{R}^d$; $s\coloneqq a_t$; $U\coloneqq\varnothing $; $C\coloneqq\mathbf{1}_d\in\mathbb{R}^d$\;
    \For{$i = 1, \dots, d$\label{line:scan_begin}}{
        \If{$a_{t,i}=1\text{ and }\sigma_i>m$}{
                $U \coloneqq U \cup \{i\}$; $C_i\coloneqq\sigma_i/m$\;
        }
    }\label{line:scan_end}
        
    \Repeat{$s = 0$\label{line:perturbation_end}}{\label{line:perturbation_begin}
        $K \coloneqq K + s$\label{line:global_begin}\;

        Sample $r_t^{\prime} = (r_{t,1}^{\prime}, r_{t,2}^{\prime}, \dots, r_{t,d}^{\prime})$ i.i.d. from $\mathcal{D}$\;

        $a'_t \coloneqq \argmin_{a \in \nec} \left\{a^\top (\eta \hat{L}_t - r'_t)\right\}$\;

        Sample $\theta$ from $[m]$ uniformly at random\label{line:global_end}\label{line:theta}\;

        \For{$i\in U$}{
            Find $i'$ such that $r'_{t,i'}$ is the $\theta\text{-th}$ largest in $\qty{r'_{t,j}:\sigma_j\leq\sigma_i}$\label{line:swap_begin}\;
            Set $r''\coloneqq r'$\;
            Swap $r''_{t,i'}$ and $r''_{t,i}$\label{swap_end}\label{line:swap_end}\;
            $a'_{t,i}\coloneqq\qty[\argmin_{a \in \nec} \left\{a^\top (\eta \hat{L}_t - r''_t)\right\}]_i$\;
            \lIf{$a'_{t,i}=1$}{$U\coloneqq U\setminus\{i\}$}
        }
        $s \coloneqq s \circ \qty(\mathbf{1}_d-a'_t)$\label{line:global_2}\;
        }

    Set $\widehat{w_{t,i}^{-1}} \coloneqq C_i K_i$ for all $i$ such that $a_{t,i}=1$\;
\end{algorithm}

\begin{lemma}\label{lem:CGR}
    The sample $r''_t$ obtained by Algorithm~\ref{alg:CGR} for each base-arm $i$ such that $\sigma_i>m$ follows the conditional distribution of $\mathcal{D}$ given
    $$\mathcal{E}_{t,i}=\qty{\left|\qty{ j:r''_{t,j}\leq r''_{t,i},\sigma_j\leq\sigma_i} \right|\leq m}.$$
    In addition, for any $i\in[d]$,
    $$\widehat{w_{t,i}^{-1}}=\qty(\frac{\sigma_i}{m}\lor 1)M_{t,i}$$
    given Algorithm~\ref{alg:CGR} serves as an unbiased estimator for $w_{t,i}^{-1}$, and the number of resampling $M_t$ satisfies
    \begin{equation}\label{eq:resampling_bound}
        \E_{r'_t\sim \mathcal{D},r''_t\sim\mathcal{D}|\mathcal{E}_{t,i}}\qty[M_t\m \hat{L}_{t}]\leq m+m\log\qty(d/m).
    \end{equation}
\end{lemma}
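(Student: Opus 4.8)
The plan is to establish the three claims of Lemma~\ref{lem:CGR} in order: (i) that the transposed vector $r''_t$ handed to a base-arm $i$ with $\sigma_i>m$ has the law of $\mathcal D$ conditioned on $\mathcal E_{t,i}$; (ii) that $\widehat{w_{t,i}^{-1}}=C_iK_i$ is conditionally unbiased for $w_{t,i}^{-1}$; and (iii) the resampling bound \eqref{eq:resampling_bound}. Claims (ii) and (iii) will follow quickly from Lemma~\ref{lem:general_idea}, so the substance lies in the exchangeability argument behind (i).

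\textbf{Step 1 (conditional law).} Fix $i$ with $\sigma_i>m$ and set $\B_i=\{j:\sigma_j\le\sigma_i\}$, so $|\B_i|=\sigma_i$. Since the coordinates of $r'_t$ are i.i.d.\ from the continuous distribution $\mathcal D$, conditionally on the unordered multiset $V=\{r'_{t,j}:j\in\B_i\}$ and on the coordinates outside $\B_i$, the assignment of the $\sigma_i$ values of $V$ to the indices of $\B_i$ is a uniformly random bijection; the event $\mathcal E_{t,i}$ (arm $i$ receiving one of the $m$ largest values of $V$) has conditional probability $m/\sigma_i$ given $V$ for every $V$, hence is independent of $V$, so conditioning on $\mathcal E_{t,i}$ alters neither the law of $V$ nor that of the outside coordinates, and inside $\B_i$ it leaves $i$ holding a uniformly random element of the top-$m$ of $V$ with the remaining $\sigma_i-1$ values spread over $\B_i\setminus\{i\}$ uniformly. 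I would then verify that Lines~\ref{line:theta}--\ref{line:swap_end} realize exactly this arrangement: for a fixed admissible target arrangement $\rho$ (one placing a top-$m$ value of $V$ at $i$), the algorithm outputs $\rho$ iff $\theta$ equals the rank of $\rho(i)$ in $V$ (forced, probability $1/m$) and the pre-transposition sample $r'_t$ is obtained from $\rho$ by swapping the values at $i$ and $i'$ for some $i'\in\B_i$ — there are exactly $\sigma_i$ such $r'_t$, each of probability $1/\sigma_i!$ — giving total probability $\sigma_i/(m\,\sigma_i!)=1/(m(\sigma_i-1)!)$, which is precisely the uniform weight over the $m(\sigma_i-1)!$ admissible arrangements, while non-admissible $\rho$ get probability $0$. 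Hence $r''_t\sim\mathcal D$ conditioned on $\mathcal E_{t,i}$, and for $\sigma_i\le m$ the routine is literally plain GR. Finally, within one Repeat-iteration all arms in $U$ reuse a single $\theta$ and a single $r'_t$, but this only couples the counts $M_{t,i}$ across $i$ and does not change any individual marginal law.

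\textbf{Step 2 (unbiasedness).} To invoke Lemma~\ref{lem:general_idea} I check that $\mathcal E_{t,i}$ is a necessary condition for $[\argmin_{a\in\nec}\{a^\top(\eta\hat L_t-r''_t)\}]_i=1$: for $j\in\B_i$ we have $\hat L_{t,j}\le\hat L_{t,i}$, so $r''_{t,j}\ge r''_{t,i}$ forces $r''_{t,j}-\eta\hat L_{t,j}\ge r''_{t,i}-\eta\hat L_{t,i}$, i.e.\ such a $j$ is weakly preferred to $i$; therefore if $i$ is among the $m$ selected coordinates, at most $m$ members of $\B_i$ can have $r''$-value at least $r''_{t,i}$, which is exactly $\mathcal E_{t,i}$. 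Lemma~\ref{lem:general_idea} then yields $\E[M_{t,i}\mid\hat L_t,a_{t,i}]=\sP[\mathcal E_{t,i}\mid\hat L_t,a_{t,i}]/w_{t,i}$; by exchangeability of the fresh samples $\sP[\mathcal E_{t,i}\mid\hat L_t]=m/\sigma_i$ for $\sigma_i>m$ and $=1$ for $\sigma_i\le m$, so $\E[(\tfrac{\sigma_i}{m}\lor1)M_{t,i}\mid\hat L_t,a_{t,i}]=1/w_{t,i}$, which is the claimed conditional unbiasedness of $\widehat{w_{t,i}^{-1}}=C_iK_i$.

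\textbf{Step 3 (resampling bound) and the hard part.} Bounding $M_t=\max_{i:a_{t,i}=1}M_{t,i}\le\sum_{i:a_{t,i}=1}M_{t,i}$ and taking conditional expectations, Step~2 gives $\E[a_{t,i}M_{t,i}\mid\hat L_t]=w_{t,i}\E[M_{t,i}\mid\hat L_t,a_{t,i}=1]=(\tfrac{\sigma_i}{m}\lor1)^{-1}$, hence $\E[M_t\mid\hat L_t]\le\sum_{i=1}^d(\tfrac{\sigma_i}{m}\lor1)^{-1}$; under the sorted convention $\sigma_i=i$ this equals $m+\sum_{i=m+1}^d m/i\le m+m\int_m^d x^{-1}\dd x=m+m\log(d/m)$. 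The main obstacle is Step~1: one must condition on precisely the right information — the multiset of values inside $\B_i$, so that $\mathcal E_{t,i}$ is independent of it and conditioning preserves it — and then carry out the transposition bookkeeping carefully enough to see that every admissible arrangement receives equal probability; once that is done, Steps~2 and 3 reduce to Lemma~\ref{lem:general_idea} and a harmonic-sum comparison.
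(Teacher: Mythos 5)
Your proposal is correct, and its second and third steps coincide with the paper's proof: both verify unbiasedness by feeding $\sP[\mathcal{E}_{t,i}\mid\hat L_t,a_{t,i}]=m/\sigma_i$ (and $=1$ for $\sigma_i\le m$) into Lemma~\ref{lem:general_idea}, and both bound the resampling count by $\sum_i a_{t,i}M_{t,i}$ and the harmonic comparison $\sum_{i=m+1}^d m/i\le m\log(d/m)$. Where you genuinely differ is the key distributional claim for the swapped sample. The paper works directly with joint distribution functions: it expands $\sP^*[\bigcap_j\{r''_{t,j}\le x_j\}\mid\theta=\theta_0]$ as a sum over the possible swap partners $j$, uses the symmetry of the i.i.d.\ perturbations to show $\sP[\rank_{i,j}=\theta_0]=1/\sigma_i$ and that all the conditional terms are equal, identifies the post-swap law given $\theta=\theta_0$ with the original law given $\rank_{i,i}=\theta_0$, and then averages uniformly over $\theta_0\in[m]$ to land on the conditioning event $\mathcal{E}_{t,i}$. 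You instead condition on the multiset of values in $\B_i=\{j:\sigma_j\le\sigma_i\}$ (and the outside coordinates), note that $\sP[\mathcal{E}_{t,i}\mid V]=m/\sigma_i$ is constant so conditioning leaves the value data untouched, and then count pre-images of the transposition: each admissible arrangement has exactly $\sigma_i$ pre-swap arrangements and a forced value of $\theta$, giving mass $\sigma_i/(m\,\sigma_i!)=1/(m(\sigma_i-1)!)$, i.e.\ the uniform law on admissible arrangements, which is exactly $\mathcal{D}$ conditioned on $\mathcal{E}_{t,i}$. Both arguments rest on the same exchangeability, but yours is a discrete counting argument while the paper's is an integral/CDF computation; yours is arguably more transparent about why every admissible outcome is equally likely, and you additionally spell out two points the paper leaves implicit, namely that $\mathcal{E}_{t,i}$ is indeed a necessary condition for base-arm $i$ to be selected under $r''_t$ (using $\hat L_{t,j}\le\hat L_{t,i}$ for $j\in\B_i$) and that sharing one $r'_t$ and one $\theta$ across all arms in $U$ only couples the counts without affecting the marginal laws used for unbiasedness. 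The paper's CDF route, on the other hand, is stated purely in terms of the rank events it later reuses and avoids introducing the multiset conditioning. Either way the conclusion and the resulting bound $m+m\log(d/m)$ are the same.
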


\begin{proof}
    Let $\sP^*[\cdot]$ denote the probability distribution of $r''_t$ after the value-swapping operation, and $\rank_{i,j}$ denote the rank of $r''_{t,j}$ among $\qty{r''_{t,k}:\sigma_k\leq\sigma_i}$. Then, we have $\mathcal{E}_{t,i}=\qty{\rank_{i,i}\in[m]}$. Given $\hat{L}_t,a_{t,i}$ and $\theta$, for any realization $\theta_0$ in $[m]$ of $\theta$ we have
    \begin{align*}
        &\sP^*\qty[\bigcap\nolimits_{j:\sigma_j\leq\sigma_i} \left\{r''_{t,j} \leq x_j\right\}\m\hat{L}_t, a_{t,i},\theta=\theta_0] \\ 
        =& 
        \sum_{j:\sigma_j\leq\sigma_i}\sP\qty[\bigcap\nolimits_{k:\sigma_k\leq\sigma_i,i\notin\qty{j,i}} \left\{r''_{t,k} \leq x_k\right\},\, r''_{t,j} \leq x_i,\, r''_{t,i} \leq x_j,\, \rank_{i,j}=\theta_0 \m\hat{L}_t, a_{t,i}] \\
        =& 
        \sum_{j:\sigma_j\leq\sigma_i}\sP\qty[\bigcap\nolimits_{i: \sigma_i\leq\sigma_i,i\notin\qty{j,i}} \left\{r''_{t,i} \leq x_i\right\},\, r''_{t,j} \leq x_i,\, r''_{t,i} \leq x_j \m \rank_{i,j}=\theta_0,\hat{L}_t, a_{t,i}]
        \sP\qty[\rank_{i,j}=\theta_0\m\hat{L}_t, a_{t,i}]. \numberthis{\label{eq: prob_swap}}
    \end{align*}
    By symmetry of $r''_t\in [\nu,\infty)^d$, we have
    \begin{align}
        \sP\qty[\rank_{i,j}=\theta_0\m\hat{L}_t, a_{t,i}]
        =\sP\qty[\rank_{i,i}=\theta_0\m\hat{L}_t, a_{t,i}]
        \label{max_prob}
    \end{align}
    for any $j$ suth that $\sigma_j\leq\sigma_i$.
    Then we have
    \begin{align*}
        1
        &=
        \sP\qty[\bigcup_{j: \sigma_j\leq\sigma_i}
        \{\rank_{i,j}=\theta_0\}\m\hat{L}_t, a_{t,i}]\\
        &=
        \sum_{j: \sigma_j\leq\sigma_i}\sP\qty[
        \rank_{i,j}=\theta_0\m\hat{L}_t, a_{t,i}]\\
        &=
        \sigma_i
        \sP\qty[
        \rank_{i,i}=\theta_0\m\hat{L}_t, a_{t,i}],
    \end{align*}
    which means that \eqref{max_prob} is equal to $1/\sigma_i$.
    Therefore, from \eqref{eq: prob_swap} we have 
    \begin{multline}
        \sP^*\qty[\bigcap\nolimits_{j:\sigma_j\leq\sigma_i} \left\{r''_{t,j} \leq x_j\right\}\m\hat{L}_t, a_{t,i},\theta=\theta_0] = \\
        \frac{1}{\sigma_i}\sum_{j:\sigma_j\leq\sigma_i}\sP\qty[\bigcap\nolimits_{i: \sigma_i\leq\sigma_i,i\notin\qty{j,i}} \left\{r''_{t,i} \leq x_i\right\},\, r''_{t,j} \leq x_i,\, r''_{t,i} \leq x_j \m \rank_{i,j}=\theta_0,\hat{L}_t, a_{t,i}].
        \label{eq: prob_swap2}
    \end{multline}
    By symmetry, each probability term in the RHS of \eqref{eq: prob_swap2} is equal. Therefore, we have
    \begin{equation*}
        \sP^*\qty[\bigcap\nolimits_{j:\sigma_j\leq\sigma_i} \left\{r''_{t,j} \leq x_j\right\}\m\hat{L}_t, a_{t,i},\theta=\theta_0] 
        = 
        \sP\qty[\bigcap\nolimits_{j:\sigma_j\leq\sigma_i} \left\{r''_{t,j} \leq x_j\right\}\m\rank_{i,i}=\theta_0,\hat{L}_t, a_{t,i}],
    \end{equation*}
    with which we immediately obtain
    \begin{multline}\label{eq:prob_sum}
        \frac{1}{m}\sum_{\theta_0\in[m]}\sP^*\qty[\bigcap\nolimits_{j:\sigma_j\leq\sigma_i} \left\{r''_{t,j} \leq x_j\right\}\m\hat{L}_t, a_{t,i},\theta=\theta_0] =\\ 
        \frac{1}{m}\sum_{\theta_0\in[m]}\sP\qty[\bigcap\nolimits_{j:\sigma_j\leq\sigma_i} \left\{r''_{t,j} \leq x_j\right\}\m\rank_{i,i}=\theta_0,\hat{L}_t, a_{t,i}].
    \end{multline}
    For the LHS of \eqref{eq:prob_sum}, since for any $\theta_0\in[m]$ we have $\sP[\theta=\theta_0]=1/m$, we have
    \begin{align*}
        &\frac{1}{m}\sum_{\theta_0\in[m]}\sP^*\qty[\bigcap\nolimits_{j:\sigma_j\leq\sigma_i} \left\{r''_{t,j} \leq x_j\right\}\m\hat{L}_t, a_{t,i},\theta=\theta_0]\\
        =&\sum_{\theta_0\in[m]}\sP^*\qty[\bigcap\nolimits_{j:\sigma_j\leq\sigma_i} \left\{r''_{t,j} \leq x_j\right\}\m\hat{L}_t, a_{t,i},\theta=\theta_0]\sP\qty[\theta=\theta_0]\\
        =&\sum_{\theta_0\in[m]}\sP^*\qty[\bigcap\nolimits_{j:\sigma_j\leq\sigma_i} \left\{r''_{t,j} \leq x_j\right\}\m\hat{L}_t, a_{t,i},\theta=\theta_0]\sP\qty[\theta=\theta_0\m\hat{L}_t, a_{t,i}]\\
        =&\sum_{\theta_0\in[m]}\sP^*\qty[\bigcap\nolimits_{j:\sigma_j\leq\sigma_i} \left\{r''_{t,j} \leq x_j\right\},\theta=\theta_0\m\hat{L}_t, a_{t,i}]\\
        =&\sP^*\qty[\bigcap\nolimits_{j:\sigma_j\leq\sigma_i} \left\{r''_{t,j} \leq x_j\right\}\m\hat{L}_t, a_{t,i}].\numberthis{\label{eq:prob_sum_LHS}}
    \end{align*}
    On the other hand, for any $\theta_0\in[m]$ and $\sigma_i\geq m$ we have
    \begin{equation}\label{eq:prob_event}
        \sP\qty(\mathcal{E}_{t,i})=\sP\qty[\rank_{i,i}\in[m]\m\,\hat{L}_t, a_{t,i}]=
        \sum_{\theta_0\in[m]}\sP\qty[\rank_{i,i}=\theta_0\m\,\hat{L}_t, a_{t,i}]=
        m/\sigma_i,
    \end{equation}
    and
    \begin{align*}
        \sP\qty[\rank_{i,i}=\theta_0\m\rank_{i,i}\in[m],\hat{L}_t, a_{t,i}] &=
        \frac{\sP\qty[\rank_{i,i}=\theta_0\m\,\hat{L}_t, a_{t,i}]}{\sP\qty[\rank_{i,i}\in[m]\m\,\hat{L}_t, a_{t,i}]}\\
        &=\frac{1/\sigma_i}{m/\sigma_i}=\frac{1}{m}.
    \end{align*}
    Then, for the RHS of \eqref{eq:prob_sum} we have
    \begin{align*}
        &\frac{1}{m}\sum_{\theta_0\in[m]}\sP\qty[\bigcap\nolimits_{j:\sigma_j\leq\sigma_i} \left\{r''_{t,j} \leq x_j\right\}\m\rank_{i,i}=\theta_0,\hat{L}_t, a_{t,i}]\\
        =&\sum_{\theta_0\in[m]}\sP\qty[\bigcap\nolimits_{j:\sigma_j\leq\sigma_i} \left\{r''_{t,j} \leq x_j\right\}\m\rank_{i,i}=\theta_0,\hat{L}_t, a_{t,i}]\sP\qty[\rank_{i,i}=\theta_0\m\rank_{i,i}\in[m],\hat{L}_t, a_{t,i}]\\
        =&\sum_{\theta_0\in[m]}\sP\qty[\bigcap\nolimits_{j:\sigma_j\leq\sigma_i} \left\{r''_{t,j} \leq x_j\right\},\rank_{i,i}=\theta_0\m\rank_{i,i}\in[m],\hat{L}_t, a_{t,i}]\\
        =&\sP\qty[\bigcap\nolimits_{j:\sigma_j\leq\sigma_i} \left\{r''_{t,j} \leq x_j\right\}\m\rank_{i,i}\in[m],\hat{L}_t, a_{t,i}].\numberthis{\label{eq:prob_sum_RHS}}
    \end{align*}
    Combining \eqref{eq:prob_sum}, \eqref{eq:prob_sum_LHS} and \eqref{eq:prob_sum_RHS}, we have
    \begin{equation*}
        \sP^*\qty[\bigcap\nolimits_{j:\sigma_j\leq\sigma_i} \left\{r''_{t,j} \leq x_j\right\}\m\hat{L}_t, a_{t,i}]
        =
        \sP\qty[\bigcap\nolimits_{j:\sigma_j\leq\sigma_i} \left\{r''_{t,j} \leq x_j\right\}\m\rank_{i,i}\in[m],\hat{L}_t, a_{t,i}],
    \end{equation*}
    which means that CGR samples $r''_t$ from the conditional distribution of $\mathcal{D}$ conditioned on $\qty{\rank_{i,i}\in[m]}$.
    Combining this fact and \eqref{eq:prob_event} with Lemma~\ref{lem:general_idea}, for $\sigma_i\geq m$ we have
    \begin{equation*}
        \E_{r''_t \sim \mathcal{D}|\mathcal{E}_{t,i}}[M_t|\hat{L}_t, a_{t,i}]=\frac{\sP\qty[\rank_{i,i}\in[m]\m\,\hat{L}_t, a_{t,i}]}{w_{t,i}}=\frac{m}{\sigma_i w_{t,i}}.
    \end{equation*}
    Note that for $i$ satisfying $\sigma_i< m$, the resampling method is the same as the original GR. For such $i$ we have
    \begin{equation*}
        \E_{r'_t\sim\mathcal{D}}[M_t|\hat{L}_t, a_{t,i}]=\frac{1}{w_{t,i}}.
    \end{equation*}
    Therefore, for any $i\in[d]$,  
    \begin{equation*}
        \widehat{w_{t,i}^{-1}}=\qty(\frac{\sigma_i}{m}\lor 1)M_{t,i}
    \end{equation*}
    serves as an unbiased estimator for $w_{t,i}^{-1}$. 
    Then, the expected number $M_t$ of resampling given $\hat{L}_t$ in CGR is bounded by 
    \begin{align*}
        \E_{r'_t\sim \mathcal{D},r''_t \sim \mathcal{D}|\mathcal{E}_{t,i}}[M_t|\hat{L}_t] 
        &=\E_{r'_t\sim \mathcal{D},r''_t \sim \mathcal{D}|\mathcal{E}_{t,i}}\qty[\max_{i:a_{t,i}=1,\sigma_i\leq m}M_{t,i} + \sum_{i:a_{t,i}=1,\sigma_i> m} M_{t,i} \m\hat{L}_t, a_t]\\
        &\leq\E_{r'_t\sim \mathcal{D},r''_t \sim \mathcal{D}|\mathcal{E}_{t,i}}\qty[\sum_{i=1}^d a_{t,i}M_{t,i} \m\hat{L}_t, a_t]\\
        &=\sum_{i=1}^d \sP[a_{t,i}=1|\hat{L}_t]\E_{r'_t\sim \mathcal{D},r''_t \sim \mathcal{D}|\mathcal{E}_{t,i}}[M_{t,i}|\hat{L}_t, a_{t,i}=1] \\
        &= \sum_{i=1}^m w_{t,i}\cdot\frac{1}{w_{t,i}}
        + \sum_{i=m+1}^d w_{t,i}\cdot\frac{m}{\sigma_i w_{t,i}} \\
        &\leq m+m\int_{m}^{d} \frac{1}{x} \dd x  \\
        &= m+m\log\qty(\frac{d}{m}).
    \end{align*}
\end{proof}

\paragraph{Average Complexity}
Now we analyze the average complexity of CGR, which can be expressed as 
\begin{equation}\label{eq:total_complexity}
    C_{\text{CGR}}=C_{\text{filter}}+\E_{r'_t\sim \mathcal{D},r''_t\sim\mathcal{D}|\mathcal{E}_{t,i}}\qty[M_t\m \hat{L}_{t}]\cdot C_{\text{resampling}},
\end{equation}
where $C_{\text{filter}}$ is the cost of scanning the base-arms and determining whether to include them in the set $U$ (Lines~\ref{line:scan_begin}--\ref{line:scan_end}), and $C_{\text{resampling}}$ is the cost of each resampling. For the former, the condition $\sigma_i > m$ that requires $O(d)$ is only evaluated when $a_i=1$. Then, we have
\begin{equation}\label{eq:complexity_filter}
    C_{\text{filter}}=
    d\cdot O(1)+\left\lVert a\right\rVert_1\cdot O(d)=
    d\cdot O(1)+m\cdot O(d)=O\qty(md). 
\end{equation}
For the resampling process, as shown in Algorithm~\ref{alg:CGR}, base-arms in $U$ and those not in $U$ are resampled differently, since the former involves an additional value-swapping operation (Lines~\ref{line:swap_begin}--\ref{line:swap_end}). However, this operation does not change the order of the resampling cost, which remains $C_{\text{resampling}}=O(d)$ in both cases. Combining \eqref{eq:resampling_bound}, \eqref{eq:total_complexity} and \eqref{eq:complexity_filter}, we have
\begin{equation*}
    C_{\text{CGR}}= O(md)+\qty(m+m\log\qty(d/m))\cdot O(d)=O\qty(md\qty(\log\qty(d/m)+1)).
\end{equation*}
\begin{remark}
    In this paper, though we only analyze the regret bound of FTPL with the original GR, the analysis of FTPL with CGR is similar, as we only need to replace the expression of expectation of the estimator $\widehat{{w_{t,i}^{-1}}}^2$ in \eqref{eq:square_expectation}. In fact, the regret bound of FTPL with CGR can attain a slightly better regret bound compared with the one with the original GR. This is because the variance of  $\widehat{{w_{t,i}^{-1}}}$ becomes
    \begin{equation*}
        \mathrm{Var}[\widehat{w_{t,i}^{-1}}|\hat{L}_t, a_{t,i}]
        =
        \begin{cases}
            \frac{1}{w_{t,i}^2}-\frac{1}{w_{t,i}}&\mbox{(original GR)},\\
            \frac{1}{w_{t,i}^2}-\frac{1}{\sP(\mathcal{E}_{t,i})w_{t,i}}&\mbox{(CGR)},
        \end{cases}
    \end{equation*}
    where the latter is no larger than the former.
\end{remark}

\section{Proofs for regret decomposition}\label{sec:proof_regret_decomposition}
In this section, we provide the proof of Lemma~\ref{lem:regret_decomposition}. Firstly, similarly to Lemma~3 in \citet{pmlr-v201-honda23a}, we prove the general framework of the regret decomposition that can be applied to general distributions.
\begin{lemma}\label{lem:regret_decomposition_general}
    \begin{equation*}
        \R(T) \leq \sum_{t=1}^{T} \mathbb{E} \qty[ \left\langle \hat{\ell}_t, w_t - w_{t+1} \right\rangle ] 
        + \frac{\E_{r_1 \sim \dis} \qty[ a_1^\top r_1 ]}{\eta}.
    \end{equation*}
\end{lemma}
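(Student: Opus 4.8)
The plan is to run the follow-the-leader / be-the-leader argument underlying Lemma~3 of \citet{pmlr-v201-honda23a}, adapted to the action set $\nec$. First I would record the regret identity $\R(T)=\sum_{t=1}^{T}\E\qty[\langle\hat{\ell}_t,\,w_t-a^*\rangle]$: conditioning on the history before round $t$, the geometric-resampling estimator is conditionally unbiased, $\E\qty[\hat{\ell}_t\m\hat{L}_t]=\ell_t$, and $w_t=\phi(\eta\hat{L}_t;\dis)=\E\qty[a_t\m\hat{L}_t]$, so $\E\qty[\langle\ell_t,a_t\rangle]=\E\qty[\langle\hat{\ell}_t,w_t\rangle]$ and $\E\qty[\langle\ell_t,a^*\rangle]=\E\qty[\langle\hat{\ell}_t,a^*\rangle]$, whence the identity. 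Splitting $\langle\hat{\ell}_t,w_t-a^*\rangle=\langle\hat{\ell}_t,w_t-w_{t+1}\rangle+\langle\hat{\ell}_t,w_{t+1}-a^*\rangle$ peels off the stability term, so the remaining task is to bound $\sum_{t=1}^{T}\E\qty[\langle\hat{\ell}_t,\,w_{t+1}-a^*\rangle]$ by $\E_{r_1\sim\dis}\qty[a_1^\top r_1]/\eta$.

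For that I would introduce one fresh perturbation $r^\star\sim\dis$, independent of everything else (the adversary, the in-round $r_t$, and all resampling), and set $b_t\coloneqq\argmin_{a\in\nec}\qty{a^\top(\eta\hat{L}_{t+1}-r^\star)}$ for $t=0,1,\dots,T$; since $\hat{L}_1=\bm{0}$, $b_0=\argmax_{a\in\nec}a^\top r^\star$, so $b_0^\top r^\star=\max_{a\in\nec}a^\top r^\star$. As $r^\star$ is independent of $\hat{L}_{t+1}$ and equidistributed with the in-round perturbation, $\E\qty[b_t\m\hat{L}_{t+1}]=\phi(\eta\hat{L}_{t+1};\dis)=w_{t+1}$, hence $\E\qty[\langle\hat{\ell}_t,b_t\rangle]=\E\qty[\langle\hat{\ell}_t,w_{t+1}\rangle]$, and $\E\qty[b_0^\top r^\star]=\E_{r\sim\dis}\qty[\max_{a\in\nec}a^\top r]=\E_{r_1\sim\dis}\qty[a_1^\top r_1]$. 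Observing that $b_t$ is exactly the cumulative minimizer through round $t$ of $f_0(a)=-a^\top r^\star$ and $f_s(a)=\eta\langle\hat{\ell}_s,a\rangle$ for $s\ge1$, the be-the-leader inequality $\sum_{t=0}^{T}f_t(b_t)\le\sum_{t=0}^{T}f_t(b_T)$ reads, pathwise,
\[
  -b_0^\top r^\star+\eta\sum_{t=1}^{T}\langle\hat{\ell}_t,b_t\rangle\ \le\ -b_T^\top r^\star+\eta\langle\hat{L}_{T+1},b_T\rangle .
\]
By optimality of $b_T$ the right-hand side is $\le-(a^*)^\top r^\star+\eta\langle\hat{L}_{T+1},a^*\rangle\le\eta\langle\hat{L}_{T+1},a^*\rangle$, using that $(a^*)^\top r^\star\ge0$ (the supports of $\Fdis,\Pdis$ lie in $[\nu,\infty)\subseteq[0,\infty)$ and $a^*\in\{0,1\}^d$). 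Since $\langle\hat{L}_{T+1},a^*\rangle=\sum_{t=1}^{T}\langle\hat{\ell}_t,a^*\rangle$, rearranging yields $\sum_{t=1}^{T}\langle\hat{\ell}_t,b_t-a^*\rangle\le b_0^\top r^\star/\eta$ pathwise; taking expectations and combining with the first paragraph gives exactly $\R(T)\le\sum_{t=1}^{T}\E\qty[\langle\hat{\ell}_t,w_t-w_{t+1}\rangle]+\E_{r_1\sim\dis}\qty[a_1^\top r_1]/\eta$.

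The be-the-leader lemma itself, the sign manipulations, and the telescoping are routine; the delicate part will be the measure-theoretic bookkeeping around the ghost sample. I need to justify that $r^\star$ can be adjoined independently of the whole history, that the $\argmin$ defining $b_t$ is almost surely unique so that ``$\E\qty[b_t\m\hat{L}_{t+1}]=w_{t+1}$'' and the telescoping are unambiguous (ties occur with probability $0$ because $\dis$ is non-atomic and $\nec$ is finite), and that the interchanges of expectation are legitimate. The last point is where $\alpha>1$ enters: it guarantees $\E_{r_1\sim\dis}\qty[a_1^\top r_1]\le d\,\E_{r\sim\dis}\qty[r_1]<\infty$, which makes the bound meaningful and keeps the decomposition $\langle\hat{\ell}_t,w_t-a^*\rangle=\langle\hat{\ell}_t,w_t-w_{t+1}\rangle+\langle\hat{\ell}_t,w_{t+1}-a^*\rangle$ from being an $\infty-\infty$.
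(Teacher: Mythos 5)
Your proposal is correct and follows essentially the same route as the paper: a ghost perturbation $r^\star$ independent of the history, the be-the-leader telescoping (which the paper writes as a recursive chain of inequalities), dropping $\langle r^\star, a^*\rangle\geq 0$, and then taking expectations so that the leader's conditional mean becomes $w_{t+1}$ and the $t=0$ term becomes $\E_{r_1\sim\dis}[a_1^\top r_1]/\eta$. The only cosmetic difference is that you run the argument directly over the discrete set $\nec$, whereas the paper defines the comparator $u_t$ over a relaxed polytope $\Delta_d$; both yield the same bound.
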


\begin{proof}
    Let us consider random variable $r \in [0, \infty)^d$ that independently follows Fréchet distribution $\Fdis$ or Pareto distribution $\Pdis$, and is independent from the randomness $\{\ell_t, r_t\}_{t=1}^{T}$ of the environment and the policy.
    Define 
    $u_t = \argmin_{w \in \Delta _d} \left\langle \eta \hat{L}_t - r, w \right\rangle$,
    where $\Delta_d = \{ p \in [0,1]^d : \sum_{i \in [d]} 1\leq p_i\leq m \}$. Then, since $r_t$ and $r$ are identically distributed given $\hat{L}_t$, we have
    \begin{equation}\label{eq:decom_expectation}
    \mathbb{E} [ u_t | \hat{L}_t ] = w_t, \quad \mathbb{E} [ \langle r, u_t \rangle | \hat{L}_t ] = \mathbb{E} [ a_t^\top r | \hat{L}_t ].
    \end{equation}

Denote the optimal action as $a^*$. Recalling $\hat{L}_t = \sum_{s=1}^{t} \hat{\ell}_s$, we have
\begin{align*}
\sum_{t=1}^{T} \left\langle \hat{\ell}_t, a^* \right\rangle &= \left\langle \hat{L}_{T+1}, a^* \right\rangle\\ 
&= \left\langle \hat{L}_{T+1} - \frac{1}{\eta} r, a^* \right\rangle + \frac{1}{\eta} \langle r, a^* \rangle \\
&\geq \left\langle \hat{L}_{T+1} - \frac{1}{\eta} r, u_{T+1} \right\rangle + \frac{1}{\eta} \langle r, a^* \rangle \\
&= \left\langle \hat{L}_{T} - \frac{1}{\eta} r, u_{T+1} \right\rangle + \left\langle \hat{\ell}_T, u_{T+1} \right\rangle + \frac{1}{\eta} \langle r, a^* \rangle \\
&\geq \left\langle \hat{L}_{T} - \frac{1}{\eta} r, u_T \right\rangle + \left\langle \hat{\ell}_T, u_{T+1} \right\rangle+ \frac{1}{\eta} \langle r, a^* \rangle
\end{align*}
and recursively applying this relation, we obtain
\begin{equation*}
\sum_{t=1}^{T} \left\langle \hat{\ell}_t, a^* \right\rangle 
\geq \left\langle -\frac{1}{\eta} r, u_1 \right\rangle  + \sum_{t=1}^{T} \left\langle \hat{\ell}_t, u_{t+1} \right\rangle+ \frac{1}{\eta} \langle r, a^* \rangle
\end{equation*}
and therefore
\begin{equation*}
\sum_{t=1}^{T} \left\langle \hat{\ell}_t, u_t - a^* \right\rangle 
\leq \frac{1}{\eta} \left\langle r, u_1 - a^* \right\rangle 
+ \sum_{t=1}^{T} \left\langle \hat{\ell}_t, u_t - u_{t+1} \right\rangle .
\end{equation*}

By using \eqref{eq:decom_expectation} and taking the expectation with respect to $r$ we obtain
\begin{align*}
\sum_{t=1}^{T} \left\langle \hat{\ell}_t, w_t - a^* \right\rangle 
&\leq \frac{1}{\eta} \mathbb{E}_{r \sim \dis} \left[ \left\langle r, u_1 - a^* \right\rangle \right] + \sum_{t=1}^{T} \left\langle \hat{\ell}_t, w_t - w_{t+1} \right\rangle \\
&\leq \frac{1}{\eta} \mathbb{E}_{r_1 \sim \dis} \left[ a_1^\top r_1 \right] 
+ \sum_{t=1}^{T} \left\langle \hat{\ell}_t, w_t - w_{t+1} \right\rangle.
\end{align*}
\end{proof}
For Fr\'{e}chet and Pareto distributions, we bound $\mathbb{E}_{r_1 \sim \dis} \left[ a_1^\top r_1 \right]$ in the following lemma.
\begin{lemma}\label{lem:penalty_bound}
    For $\dis\in\qty{\Pdis,\Fdis}$ and $\alpha>1$, we have 
    \begin{equation*}
        \mathbb{E}_{r_1 \sim \dis}\left[ a_1^\top r_1 \right]\leq
        \begin{cases}
            \qty(\frac{\alpha}{\alpha-1}m^{1-\frac{1}{\alpha}}+\Gamma\qty(1-\frac{1}{\alpha}))\qty(d+1)^{\frac{1}{\alpha}} & \text{if } \dis = \Pdis \\
            \qty(\frac{\alpha}{\alpha-1}m^{1-\frac{1}{\alpha}}+\Gamma\qty(1-\frac{1}{\alpha}))\qty(d+1)^{\frac{1}{\alpha}}+m & \text{if } \dis = \Fdis.
        \end{cases}
    \end{equation*}
\end{lemma}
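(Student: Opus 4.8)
The plan is to exploit the fact that at $t=1$ we have $\hat L_1=\mathbf 0$, so $a_1=\argmin_{a\in\nec}\{a^\top(\eta\hat L_1-r_1)\}=\argmax_{a\in\nec}a^\top r_1$ simply picks the $m$ coordinates of $r_1$ with the largest values. Writing $r_{(1)}\ge r_{(2)}\ge\cdots\ge r_{(d)}$ for the order statistics of the i.i.d.\ sample $r_{1,1},\dots,r_{1,d}$, this gives $a_1^\top r_1=\sum_{k=1}^m r_{(k)}$, so in both cases it suffices to bound $\E[\sum_{k=1}^m r_{(k)}]$. For the Fr\'echet case I would reduce to the Pareto case by a monotone (quantile) coupling: the quantile functions are $F_{\Fdis}^{-1}(u)=(-\log u)^{-1/\alpha}$ and $F_{\Pdis}^{-1}(u)=(1-u)^{-1/\alpha}$, and since $-\log u\ge 1-u$ for $u\in(0,1]$ and $t\mapsto t^{-1/\alpha}$ is decreasing, $F_{\Fdis}^{-1}(u)\le F_{\Pdis}^{-1}(u)$ for every $u$. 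Drawing $r_i^{\Fdis}=F_{\Fdis}^{-1}(U_i)$ and $r_i^{\Pdis}=F_{\Pdis}^{-1}(U_i)$ from common uniforms $U_i$, both are increasing in $U_i$, so the indices of the top-$m$ entries coincide and $a_1^\top r^{\Fdis}\le a_1^\top r^{\Pdis}$ entrywise on that set; hence the Fr\'echet bound follows from the Pareto one (in fact even without the extra $+m$, which is only slack).

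For the Pareto case the second step is an exact computation via order statistics. Using $r\stackrel{d}{=} U^{-1/\alpha}$ with $U\sim\mathrm{Unif}(0,1)$, which is decreasing in $U$, the $k$-th largest value is $r_{(k)}\stackrel{d}{=} U_{(k)}^{-1/\alpha}$, where $U_{(k)}$ is the $k$-th smallest of $d$ i.i.d.\ uniforms, i.e.\ $U_{(k)}\sim\mathrm{Beta}(k,d-k+1)$. Since $\alpha>1$ gives $1/\alpha<1\le k$, the negative moment is finite and a Beta integral yields $\E[r_{(k)}]=\dfrac{\Gamma(k-1/\alpha)\,\Gamma(d+1)}{\Gamma(k)\,\Gamma(d+1-1/\alpha)}$, whence $\E[\sum_{k=1}^m r_{(k)}]=\dfrac{\Gamma(d+1)}{\Gamma(d+1-1/\alpha)}\sum_{k=1}^m\dfrac{\Gamma(k-1/\alpha)}{\Gamma(k)}$.

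The third step is to estimate the two Gamma ratios by Gautschi--Wendel type inequalities, i.e.\ log-convexity of $\Gamma$. Writing $k-\tfrac1\alpha=(k-1)+(1-\tfrac1\alpha)$ and applying log-convexity gives $\dfrac{\Gamma(k-1/\alpha)}{\Gamma(k)}\le (k-1)^{-1/\alpha}$ for $k\ge2$, while for $k=1$ the ratio is exactly $\Gamma(1-1/\alpha)$; summing and comparing $\sum_{j=1}^{m-1}j^{-1/\alpha}$ with $\int_0^{m-1}x^{-1/\alpha}\,\mathrm dx=\tfrac{\alpha}{\alpha-1}(m-1)^{1-1/\alpha}\le\tfrac{\alpha}{\alpha-1}m^{1-1/\alpha}$ gives $\sum_{k=1}^m\tfrac{\Gamma(k-1/\alpha)}{\Gamma(k)}\le \Gamma(1-\tfrac1\alpha)+\tfrac{\alpha}{\alpha-1}m^{1-1/\alpha}$. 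The same inequality applied to $d+1-\tfrac1\alpha=d+(1-\tfrac1\alpha)$ gives $\dfrac{\Gamma(d+1)}{\Gamma(d+1-1/\alpha)}\le (d+1-\tfrac1\alpha)^{1/\alpha}\le (d+1)^{1/\alpha}$. Multiplying the two bounds gives the Pareto estimate of the lemma, and combined with the coupling of the first step this gives the Fr\'echet estimate.

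The main obstacle I expect is pinning down the Gamma-ratio inequalities in exactly the right orientation and with exactly the constants in the statement — in particular choosing the split $k-\tfrac1\alpha=(k-1)+(1-\tfrac1\alpha)$, treating $k=1$ separately (where $(k-1)^{-1/\alpha}$ blows up), and making sure the penalty carries the factor $(d+1)^{1/\alpha}$ rather than $d^{1/\alpha}$. The order-statistic/Beta identity and the monotone coupling are routine; the only point to be careful about there is that the coupling keeps the index set of the $m$ selected coordinates identical under the two perturbation laws, so that the comparison $a_1^\top r^{\Fdis}\le a_1^\top r^{\Pdis}$ is genuinely termwise.
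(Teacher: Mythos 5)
Your proposal is correct and follows essentially the same route as the paper's proof: reduce $\E_{r_1\sim\dis}[a_1^\top r_1]$ to the expected sum of the top $m$ order statistics, compute the Pareto order-statistic means exactly (your Beta-integral calculation re-derives the formula the paper imports from Malik as Lemma~\ref{lem:pareto_order_statistics}), and finish with Gautschi-type Gamma-ratio bounds and a sum--integral comparison, exactly as the paper does. The one genuine difference is the Fr\'{e}chet step: the paper compares the Fr\'{e}chet CDF with a \emph{shifted} Pareto CDF, $F_{\Fdis}(x)\geq F_{\Pdis}(x-1)$ (Lemmas~\ref{lem:order_statistics} and~\ref{lem:penalty_pareto_frechet}), which costs $+1$ per order statistic and is the source of the $+m$ in the statement, whereas your quantile coupling uses the unshifted domination $(-\log u)^{-1/\alpha}\leq(1-u)^{-1/\alpha}$, i.e.\ $F_{\Fdis}(x)\geq F_{\Pdis}(x)$, so the Pareto bound already holds verbatim for Fr\'{e}chet and the $+m$ is, as you note, pure slack. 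Both arguments are instances of the monotone-coupling Lemma~\ref{lem:order_statistics}; yours is slightly sharper and shows the lemma's Fr\'{e}chet constant could be stated without the additive $m$, while the paper's version is simply looser at that point. All the individual steps you sketch (the Beta identity with $k-1/\alpha>0$, the Wendel/Gautschi bounds $\Gamma(k-1/\alpha)/\Gamma(k)\leq(k-1)^{-1/\alpha}$ for $k\geq 2$ with $k=1$ treated separately, and $\Gamma(d+1)/\Gamma(d+1-1/\alpha)\leq(d+1)^{1/\alpha}$) check out and reproduce the stated constants.
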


\begin{proof}
    Let $r_k^*$ be the $k$-th largest perturbation among $r_{1,1},r_{1,2},\cdots,r_{1,d}$ i.i.d. sampled from $\mathcal{D}_{\alpha}$ for $k\in[d]$.
    Then, we have
    \begin{equation}\label{eq:penalty_order_decom}
        \E_{r \sim \dis}\qty[a_1^\top r_1]\leq
        \E_{r \sim \dis}\qty[\sum_{k=1}^m r_k^*]
        \leq \sum_{k=1}^m \E_{r \sim \dis}\qty[r_k^*].
    \end{equation}
    \item \paragraph{Pareto Distribution}
    If $\dis=\Pdis$, by Lemma~\ref{lem:pareto_order_statistics}, we obtain
    \begin{equation}\label{eq:penalty_order_decom2}
        \sum_{k=1}^m \E_{r \sim \Pdis}\qty[r_k^*]
        \leq
        \sum_{k=1}^m \frac{\Gamma\qty(d+1)\Gamma\qty(d-k-\frac{1}{\alpha}+1)}{\Gamma\qty(d-k+1)\Gamma\qty(d-\frac{1}{\alpha}+1)}.
    \end{equation}
    For $k=m=d$, we have
    \begin{align*}
        \frac{\Gamma\qty(d+1)\Gamma\qty(d-k-\frac{1}{\alpha}+1)}{\Gamma\qty(d-k+1)\Gamma\qty(d-\frac{1}{\alpha}+1)}&=
        \frac{\Gamma\qty(d+1)\Gamma\qty(1-\frac{1}{\alpha})}{\Gamma\qty(d-\frac{1}{\alpha}+1)}\\
        &\leq
        \Gamma\qty(1-\frac{1}{\alpha})\qty(d+1)^{\frac{1}{\alpha}},\numberthis\label{eq:penalty_order_kmd}
    \end{align*}
    where the last inequality follows from Gautschi's inequality in Lemma~\ref{lem:Gautschi}.
    Similarly, for $k\in[m]$ and $k<d$, 
    by Gautschi's inequality, we have
    \begin{equation}\label{eq:penalty_Gautschi}
        \frac{\Gamma\qty(d+1)\Gamma\qty(d-k-\frac{1}{\alpha}+1)}{\Gamma\qty(d-k+1)\Gamma\qty(d-\frac{1}{\alpha}+1)} \leq \qty(\frac{d+1}{d-k})^{\frac{1}{\alpha}}.
    \end{equation}
    By combining \eqref{eq:penalty_order_decom}, \eqref{eq:penalty_order_decom2}, \eqref{eq:penalty_order_kmd} and \eqref{eq:penalty_Gautschi}, we have
    \begin{align*}
        \E_{r \sim \Pdis}\qty[a_1^\top r_1]
        &\leq 
        \sum_{k=1}^m \E_{r \sim \Pdis}\qty[r_k^*]\\
        &\leq \Gamma\qty(1-\frac{1}{\alpha})\qty(d+1)^{\frac{1}{\alpha}}+
        \sum_{k=1}^{m\land(d-1)} \qty(\frac{d+1}{d-k})^{\frac{1}{\alpha}}\\
        &<
        \Gamma\qty(1-\frac{1}{\alpha})\qty(d+1)^{\frac{1}{\alpha}}+\sum_{k=1}^m \qty(\frac{d+1}{k})^{\frac{1}{\alpha}}\\
        &\leq \qty(\Gamma\qty(1-\frac{1}{\alpha})+1+\int_{1}^m x^{-\frac{1}{\alpha}} \dd x)\qty(d+1)^{\frac{1}{\alpha}}\\
        &= \qty(\Gamma\qty(1-\frac{1}{\alpha})+1+\frac{\alpha}{\alpha-1}x^{1-\frac{1}{\alpha}}\bigg|_1^m)\qty(d+1)^{\frac{1}{\alpha}}\\
        &= \qty(\frac{\alpha}{\alpha-1}m^{1-\frac{1}{\alpha}}+\Gamma\qty(1-\frac{1}{\alpha})-\frac{1}{\alpha-1})\qty(d+1)^{\frac{1}{\alpha}}\\
        &< \qty(\frac{\alpha}{\alpha-1}m^{1-\frac{1}{\alpha}}+\Gamma\qty(1-\frac{1}{\alpha}))\qty(d+1)^{\frac{1}{\alpha}}.\numberthis\label{eq:penalty_pareto}
    \end{align*}
    \item \paragraph{Fr\'{e}chet Distribution}
    If $\dis=\Fdis$, by combining \eqref{eq:penalty_order_decom}, Lemma~\ref{lem:penalty_pareto_frechet} and \eqref{eq:penalty_pareto} we have
    \begin{align*}
        \E_{r \sim \Fdis}\qty[a_1^\top r_1]
        &\leq \sum_{k=1}^m \E_{r \sim \Fdis}\qty[r_k^*]\\
        &\leq \sum_{k=1}^m \E_{r \sim \Pdis}\qty[r_k^*]+1\\
        &< \qty(\frac{\alpha}{\alpha-1}m^{1-\frac{1}{\alpha}}+\Gamma\qty(1-\frac{1}{\alpha}))\qty(d+1)^{\frac{1}{\alpha}} + m.
    \end{align*}
\end{proof}

\section{Analysis on Stability Term}\label{appendix:stab}

\subsection{Proof for Monotonicity}

\begin{lemma}\label{lem:both_increasing}
    Let $\psi (x):[\nu-\lambda_i,\infty)\to\mathbb{R}$ denote a non-negative function that is independent of $\lambda_j$. 
    If $j\neq i$ and $F(x)$ is the cumulative distribution function of Fr\'{e}chet or Pareto distributions, then
    \begin{equation*}
        \frac{\int_{\nu-(\lambda_i\land\lambda_j)}^\infty \psi(z)F(z+\lambda_j)/(z+\lambda_i) \dd z}{\int_{\nu-(\lambda_i\land\lambda_j)}^\infty \psi(z)F(z+\lambda_j) \dd z}
    \end{equation*}
    is monotonically increasing in $\lambda_j$.
\end{lemma}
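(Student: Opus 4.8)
The plan is to fix $\lambda_i$ and, writing $g(z)=\psi(z)F(z+\lambda_j)$ and $h(z)=1/(z+\lambda_i)$, regard the displayed quantity as a function
\[
R(\lambda_j)=\frac{N(\lambda_j)}{D(\lambda_j)},\qquad N(\lambda_j)=\int g(z)h(z)\,dz,\quad D(\lambda_j)=\int g(z)\,dz,
\]
the integrals running over $[\nu-(\lambda_i\wedge\lambda_j),\infty)$, and to show $R'(\lambda_j)\ge 0$. The hypothesis $j\neq i$, together with $\psi$ being independent of $\lambda_j$, is exactly what lets us treat $h$ as a fixed strictly decreasing function of $z$ while only $g$ depends on $\lambda_j$. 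First I would dispose of the moving lower limit: for $\lambda_j\ge\lambda_i$ it equals the constant $\nu-\lambda_i$, while for $\lambda_j<\lambda_i$ it equals $\nu-\lambda_j$ and there $g(\nu-\lambda_j)=\psi(\nu-\lambda_j)F(\nu)=0$ because $F(\nu)=0$ for both Fr\'echet ($\nu=0$) and Pareto ($\nu=1$). Hence differentiating under the integral sign (justified under the standing assumption that the integrals are finite, since $f$ is bounded and decays like $x^{-(\alpha+1)}$ while $F\to1$) produces no boundary term and, with $f=F'$,
\[
N'(\lambda_j)=\int \psi(z)f(z+\lambda_j)h(z)\,dz,\qquad D'(\lambda_j)=\int \psi(z)f(z+\lambda_j)\,dz>0.
\]
Since $R'(\lambda_j)=\bigl(N'D-ND'\bigr)/D^2$, it suffices to prove $N'D-ND'\ge0$.

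The key input, and the only place the specific distributions enter, is that $F$ is log-concave on its support, equivalently that $f(x)/F(x)=\bigl(\log F(x)\bigr)'$ is non-increasing in $x$: indeed $f(x)/F(x)=\alpha x^{-(\alpha+1)}$ for Fr\'echet, and $f(x)/F(x)=\alpha/(x^{\alpha+1}-x)$ for Pareto with $x^{\alpha+1}-x$ increasing on $x\ge1$. Paired with $h$ decreasing, this is precisely the hypothesis of a Chebyshev-type correlation inequality. Concretely I would symmetrize the product of the two integrals to obtain
\[
2\bigl(N'D-ND'\bigr)=\int\!\!\int\bigl(h(z)-h(z')\bigr)\,\psi(z)\psi(z')\,\bigl(f(z+\lambda_j)F(z'+\lambda_j)-f(z'+\lambda_j)F(z+\lambda_j)\bigr)\,dz\,dz',
\]
whose integrand is symmetric in $(z,z')$; for $z<z'$ it is the product of $h(z)-h(z')\ge0$, of $\psi(z)\psi(z')\ge0$, and of
\[
f(z+\lambda_j)F(z'+\lambda_j)-f(z'+\lambda_j)F(z+\lambda_j)=F(z+\lambda_j)F(z'+\lambda_j)\left(\frac{f(z+\lambda_j)}{F(z+\lambda_j)}-\frac{f(z'+\lambda_j)}{F(z'+\lambda_j)}\right)\ge0,
\]
the last inequality because $z+\lambda_j<z'+\lambda_j$ and $f/F$ is non-increasing (where $F=0$ the integrand vanishes, so this causes no trouble). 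Hence the double integral is nonnegative, $R'(\lambda_j)\ge0$, and $R$ is monotonically increasing in $\lambda_j$.

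An alternative that avoids differentiation is to compare $R(\lambda_j)$ with $R(\lambda_j')$ directly for $\lambda_j<\lambda_j'$: log-concavity of $F$ makes $z\mapsto F(z+\lambda_j')/F(z+\lambda_j)$ non-increasing, so the two weights $\psi(z)F(z+\lambda_j)$ and $\psi(z)F(z+\lambda_j')$ are ordered in the likelihood-ratio sense, and the same correlation inequality applied with the decreasing function $h$ yields $R(\lambda_j)\le R(\lambda_j')$; one then checks that enlarging the integration range from $[\nu-(\lambda_i\wedge\lambda_j),\infty)$ to $[\nu-(\lambda_i\wedge\lambda_j'),\infty)$ contributes nothing because $F(z+\lambda_j)=0$ on the added part.

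The main obstacle is not any single computation but assembling the correlation inequality with the correct orientation: one must verify that log-concavity of $F$ really does make the weight $\psi f$ lie to the left of $\psi F$ in the likelihood-ratio order, and that this, combined with $h$ decreasing, pushes $R$ \emph{upward}; the endpoint bookkeeping via $F(\nu)=0$ and the routine justification of differentiation under the integral sign are comparatively minor.
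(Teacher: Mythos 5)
Your proposal is correct and follows essentially the same route as the paper: write the ratio as $N(\lambda_j)/D(\lambda_j)$, note the boundary term from the moving lower limit vanishes because $F(\nu)=0$, symmetrize $N'D-ND'$ into a double integral, and show the symmetrized integrand is nonnegative since $1/(z+\lambda_i)$ is decreasing while $f/F$ is non-increasing. The only difference is presentational: you treat Fr\'echet and Pareto uniformly via log-concavity of $F$, whereas the paper verifies the same monotonicity fact case by case (through $x^{\alpha+1}$ and $x^{\alpha+1}-x$ being increasing).
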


\begin{proof}
    Let $$N(\lambda_j)=\int_{\nu-(\lambda_i\land\lambda_j)}^\infty \frac{1}{z+\lambda_i}\psi(z)F(z+\lambda_j) \dd z,\quad D(\lambda_j)=\int_{\nu-(\lambda_i\land\lambda_j)}^\infty \psi(z)F(z+\lambda_j) \dd z.$$
    The derivative of $N(\lambda_j)/D(\lambda_j)$ with respect to $\lambda_j$ is expressed as
    \begin{equation*}
        \frac{d}{d \lambda_j}N(\lambda_j)/D(\lambda_j)=\frac{N'(\lambda_j)D(\lambda_j)-N(\lambda_j)D'(\lambda_j)}{(D(\lambda_j))^2}.
    \end{equation*}
    If $\lambda_i>\lambda_j$, we have
    \begin{align*}
        N'(\lambda_j)&=\frac{\partial}{\partial \lambda_j}\int_{\nu-\lambda_j}^\infty \frac{1}{z+\lambda_i}\psi(z)F(z+\lambda_j)\dd z\\
        &=\int_{\nu-\lambda_j}^\infty \frac{1}{z+\lambda_i}\psi(z)f(z+\lambda_j)\dd z + \frac{1}{(\nu-\lambda_j)+\lambda_i}\psi(\nu-\lambda_j)F((\nu-\lambda_j)+\lambda_j)\\
        &=\int_{\nu-\lambda_j}^\infty \frac{1}{z+\lambda_i}\psi(z)f(z+\lambda_j)\dd z,
    \end{align*}
    where the last equality holds since $F(\nu)=0$. On the other hand, if $\lambda_i\leq\lambda_j$, we have
    \begin{equation*}
        N'(\lambda_j)=\frac{\partial}{\partial \lambda_j}\int_{\nu-\lambda_i}^\infty \frac{1}{z+\lambda_i}\psi(z)F(z+\lambda_j)\dd z=\int_{\nu-\lambda_i}^\infty \frac{1}{z+\lambda_i}\psi(z)f(z+\lambda_j)\dd z.
    \end{equation*}
    In both cases, we have
    \begin{equation*}
        N'(\lambda_j)=\int_{\nu-(\lambda_i\land\lambda_j)}^\infty \frac{1}{z+\lambda_i}\psi(z)f(z+\lambda_j)\dd z.
    \end{equation*}
    Similarly, we have
    \begin{equation*}
        D'(\lambda_j)=\int_{\nu-(\lambda_i\land\lambda_j)}^\infty \psi(z)f(z+\lambda_j)\dd z,
    \end{equation*}

    Next, we divide the proof into two cases.
    \item \paragraph{Fr\'{e}chet Distribution}
    When $F(x)$ is the cumulative distribution function of Fr\'{e}chet distribution, we define $\widetilde{\psi}(x)=\psi(x)e^{-1/(x+\lambda_j)^\alpha}$. 
    Under this definition, we have
    \begin{align*}
        N'(\lambda_j)D(\lambda_j)&=\iint_{z,w \geq -(\lambda_i\land\lambda_j)} \frac{\psi(z)\psi(w)f(z+\lambda_j)F(w+\lambda_j)}{(z+\lambda_i)} \dd z \dd w\\
        &=\alpha\iint_{z,w \geq -(\lambda_i\land\lambda_j)} \frac{\widetilde{\psi}(z)\widetilde{\psi}(w)}{(z+\lambda_i)(z+\lambda_j)^{\alpha+1}}\dd z \dd w\\
        &=\frac{\alpha}{2}\iint_{z,w \geq -(\lambda_i\land\lambda_j)} \widetilde{\psi}(z)\widetilde{\psi}(w)\qty(\frac{1}{(z+\lambda_i)(z+\lambda_j)^{\alpha+1}}+\frac{1}{(w+\lambda_i)(w+\lambda_j)^{\alpha+1}})\dd z \dd w.
    \end{align*}
    and
    \begin{align*}
        N(\lambda_j)D'(\lambda_j)&=\iint_{z,w \geq -(\lambda_i\land\lambda_j)} \frac{\psi(z)\psi(w)F(z+\lambda_j)f(w+\lambda_j)}{(z+\lambda_i)} \dd z \dd w\\
        &=\alpha\iint_{z,w \geq -(\lambda_i\land\lambda_j)} \frac{\widetilde{\psi}(z)\widetilde{\psi}(w)}{(z+\lambda_i)(w+\lambda_j)^{\alpha+1}}\dd z \dd w\\
        &=\frac{\alpha}{2}\iint_{z,w \geq -(\lambda_i\land\lambda_j)} \widetilde{\psi}(z)\widetilde{\psi}(w)\qty(\frac{1}{(z+\lambda_i)(w+\lambda_j)^{\alpha+1}}+\frac{1}{(w+\lambda_i)(z+\lambda_j)^{\alpha+1}})\dd z \dd w.
    \end{align*}
    Here, by an elementary calculation we can see
    \begin{align*}
        &\frac{1}{(z+\lambda_i)(z+\lambda_j)^{\alpha+1}}+\frac{1}{(w+\lambda_i)(w+\lambda_j)^{\alpha+1}}-\frac{1}{(z+\lambda_i)(w+\lambda_j)^{\alpha+1}}-\frac{1}{(w+\lambda_i)(z+\lambda_j)^{\alpha+1}}\\
        &=\frac{w-z}{(z+\lambda_i)(w+\lambda_i)}\qty(\frac{1}{(z+\lambda_j)^{\alpha+1}}-\frac{1}{(w+\lambda_j)^{\alpha+1}})\\
        &=\frac{(w-z)\qty(\qty(w+\lambda_j)^{\alpha+1}-\qty(z+\lambda_j)^{\alpha+1})}{(z+\lambda_i)(w+\lambda_i)(z+\lambda_j)^{\alpha+1}(w+\lambda_j)^{\alpha+1}}\geq 0,
    \end{align*}
    where the last inequality holds since $h(x)=x^{\alpha+1}$ is monotonically increasing in $[0,+\infty)$ for $\alpha>0$. Therefore, when $F(x)$ is the cumulative distribution function of Fr\'{e}chet distribution, we have $\frac{d}{d \lambda_j}N(\lambda_j)/D(\lambda_j)\geq 0$, which implies that $N(\lambda_j)/D(\lambda_j)$ is monotonically increasing in $\lambda_j$.

    \item \paragraph{Pareto Distribution}
    When $F(x)$ is the cumulative distribution function of Pareto distribution, we have
    \begin{align*}
        N'(\lambda_j)D(\lambda_j)&=\iint_{z,w \geq 1-(\lambda_i\land\lambda_j)} \frac{\psi(z)\psi(w)f(z+\lambda_j)F(w+\lambda_j)}{(z+\lambda_i)} \dd z \dd w\\
        &=\alpha\iint_{z,w \geq 1-(\lambda_i\land\lambda_j)} \frac{\psi(z)\psi(w)(1-(w+\lambda_j)^{-\alpha})}{(z+\lambda_i)(z+\lambda_j)^{\alpha+1}}\dd z \dd w\\
        &=\frac{\alpha}{2}\iint_{z,w \geq 1-(\lambda_i\land\lambda_j)} \psi(z)\psi(w)\qty(\frac{(1-(w+\lambda_j)^{-\alpha})}{(z+\lambda_i)(z+\lambda_j)^{\alpha+1}}+\frac{(1-(z+\lambda_j)^{-\alpha})}{(w+\lambda_i)(w+\lambda_j)^{\alpha+1}})\dd z \dd w,
    \end{align*}
    and
    \begin{align*}
        N(\lambda_j)D'(\lambda_j)&=\iint_{z,w \geq 1-(\lambda_i\land\lambda_j)} \frac{\psi(z)\psi(w)f(z+\lambda_j)F(w+\lambda_j)}{(z+\lambda_i)} \dd z \dd w\\
        &=\alpha\iint_{z,w \geq 1-(\lambda_i\land\lambda_j)} \frac{\psi(z)\psi(w)(1-(z+\lambda_j)^{-\alpha})}{(z+\lambda_i)(w+\lambda_j)^{\alpha+1}}\dd z \dd w\\
        &=\frac{\alpha}{2}\iint_{z,w \geq 1-(\lambda_i\land\lambda_j)} \psi(z)\psi(w)\qty(\frac{(1-(z+\lambda_j)^{-\alpha})}{(z+\lambda_i)(w+\lambda_j)^{\alpha+1}}+\frac{(1-(w+\lambda_j)^{-\alpha})}{(w+\lambda_i)(z+\lambda_j)^{\alpha+1}})\dd z \dd w.
    \end{align*}
    Here, by an elementary calculation we can see
    \begin{align*}
        &\frac{(1-(w+\lambda_j)^{-\alpha})}{(z+\lambda_i)(z+\lambda_j)^{\alpha+1}}+\frac{(1-(z+\lambda_j)^{-\alpha})}{(w+\lambda_i)(w+\lambda_j)^{\alpha+1}}-\frac{(1-(z+\lambda_j)^{-\alpha})}{(z+\lambda_i)(w+\lambda_j)^{\alpha+1}}-\frac{(1-(w+\lambda_j)^{-\alpha})}{(w+\lambda_i)(z+\lambda_j)^{\alpha+1}}\\
        &=\frac{w-z}{(z+\lambda_i)(w+\lambda_i)}\qty(\frac{1-(w+\lambda_j)^{-\alpha}}{(z+\lambda_j)^{\alpha+1}}-\frac{1-(z+\lambda_j)^{-\alpha}}{(w+\lambda_j)^{\alpha+1}})\\
        &=\frac{w-z}{(z+\lambda_i)(w+\lambda_i)(z+\lambda_j)^{\alpha+1}(w+\lambda_j)^{\alpha+1}}\qty(\qty((w+\lambda_j)^{\alpha+1}-(w+\lambda_j))-\qty((z+\lambda_j)^{\alpha+1}-(z+\lambda_j)))\geq 0,
    \end{align*}
    where the last inequality holds because $h(x)=x^{\alpha+1}-x$ is monotonically increasing in $[1,+\infty)$ for $\alpha>0$. Therefore, we have $\frac{d}{d \lambda_j}N(\lambda_j)/D(\lambda_j)\geq 0$, which concludes the proof.
\end{proof}

\begin{lemma}\label{lem:bgaf}
    Let $a,b>0$, $f(x),g(x)>0$, where $x\in\mathbb{R}$. If both $f(x)$ and $g(x)/f(x)$ are monotonically increasing in $x$, then for any $x_1<x_2$, we have 
    \begin{equation*}
        \frac{b+g(x_1)}{a+f(x_1)}\leq\frac{b}{a}\lor\frac{b+g(x_2)}{a+f(x_2)}.
    \end{equation*}
    Provided that $\lim_{x\to\infty}\qty(b+g(x))/\qty(a+f(x))$ exists, for any $x_0\in\mathbb{R}$ we have
    \begin{equation*}
        \frac{b+g(x_0)}{a+f(x_0)}\leq\frac{b}{a}\lor\lim_{x\to\infty}\frac{b+g(x)}{a+f(x)}.
    \end{equation*}
\end{lemma}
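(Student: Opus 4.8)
The plan is to set $h(x)=\frac{b+g(x)}{a+f(x)}$ and to prove the following sharp statement, from which everything else follows at once: for any $x_1<x_2$, if $h(x_1)>h(x_2)$ then $h(x_1)<\frac{b}{a}$. Granting this, the first displayed inequality is immediate: either $h(x_1)\le h(x_2)\le \frac{b}{a}\lor h(x_2)$, or $h(x_1)>h(x_2)$ and then $h(x_1)<\frac{b}{a}\le \frac{b}{a}\lor h(x_2)$. The second displayed inequality follows by applying the first with $x_1=x_0$ to every $x_2>x_0$ and letting $x_2\to\infty$: the left-hand side is independent of $x_2$ and $u\mapsto \frac{b}{a}\lor u$ is continuous, so the bound passes to the limit $\lim_{x\to\infty}h(x)$.

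For the key claim I would argue by direct manipulation. Write $f_i=f(x_i)$, $g_i=g(x_i)$ and $t_i=g_i/f_i$; monotonicity of $f$ gives $0<f_1\le f_2$, monotonicity of $g/f$ gives $t_1\le t_2$, and all of $a,b,f_i,g_i,t_i$ are positive. Cross-multiplying by the positive denominators, $h(x_1)>h(x_2)$ is equivalent to $(b+g_1)(a+f_2)>(b+g_2)(a+f_1)$, which after cancelling the common term $ab$ reads $b(f_2-f_1)>a(g_2-g_1)+(g_2f_1-g_1f_2)$. Now use the two identities $g_2f_1-g_1f_2=f_1f_2(t_2-t_1)$ and $g_2-g_1=f_2(t_2-t_1)+t_1(f_2-f_1)$ to rewrite the right-hand side as $(af_2+f_1f_2)(t_2-t_1)+at_1(f_2-f_1)$, which is a sum of nonnegative terms and hence is $\ge at_1(f_2-f_1)\ge 0$. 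In particular $b(f_2-f_1)>0$, so $f_2>f_1$, and dividing the inequality $b(f_2-f_1)>at_1(f_2-f_1)$ by $f_2-f_1>0$ gives $b>at_1$, i.e. $ag_1<bf_1$; adding $ab$ to both sides yields $a(b+g_1)<b(a+f_1)$, that is $h(x_1)<\frac{b}{a}$, proving the claim.

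I do not expect a real obstacle here, as this is an elementary inequality argument; the one place to be careful is the order of steps. One must first observe that the lower bound $at_1(f_2-f_1)$ on the right-hand side is nonnegative — this is what upgrades $b(f_2-f_1)>(\text{nonnegative})$ to the strict positivity $f_2-f_1>0$ — and only then divide by $f_2-f_1$. For the limiting statement the only subtlety is that the hypothesis merely asserts existence of $\lim_{x\to\infty}h(x)$, so I would simply take limits in the (already established) inequality $h(x_0)\le \frac{b}{a}\lor h(x_2)$ as $x_2\to\infty$, which is valid by continuity of $\max$ applied to the convergent family $h(x_2)$.
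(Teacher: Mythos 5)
Your proof is correct, and it takes a somewhat different route from the paper's. The paper argues by a case split on whether $b/a$ exceeds $g(x_2)/f(x_2)$: in the first case it uses the mediant-type bound $\frac{b+g(x_1)}{a+f(x_1)}\le\frac{b}{a}\lor\frac{g(x_1)}{f(x_1)}$ together with the monotonicity of $g/f$ to land below $b/a$; in the second case it bounds $g(x_1)/f(x_1)$ by $g(x_2)/f(x_2)$ and then invokes the monotonicity of the auxiliary function $h(z)=\bigl(b+\tfrac{g(x_2)}{f(x_2)}z\bigr)/(a+z)$ on $[f(x_1),f(x_2)]$, which is increasing precisely because $g(x_2)/f(x_2)\ge b/a$. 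You instead prove the single contrapositive-style claim that $h(x_1)>h(x_2)$ forces $h(x_1)<b/a$, by cross-multiplying and rewriting the resulting inequality via the identities $g_2f_1-g_1f_2=f_1f_2(t_2-t_1)$ and $g_2-g_1=f_2(t_2-t_1)+t_1(f_2-f_1)$; I checked the algebra and the order of steps (first deducing $f_2>f_1$ from $b(f_2-f_1)>0$, then dividing) and it is sound, including the strictness bookkeeping. Your dichotomy on $h(x_1)$ versus $h(x_2)$ replaces the paper's dichotomy on $b/a$ versus $g(x_2)/f(x_2)$ and dispenses with both the mediant bound and the auxiliary rational function, so the argument is a single self-contained computation; the paper's version is arguably more transparent about the structural reason the bound holds (a weighted-mediant picture), but the two are equivalent in strength. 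The limiting statement is handled identically in both: fix $x_1=x_0$, let $x_2\to\infty$ in the finite inequality, and use continuity of $u\mapsto\frac{b}{a}\lor u$ together with the assumed existence of the limit.
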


\begin{proof}
    According to the assumption, we have 
    \begin{equation*}
        f(x_1)\leq f(x_2) \text{ and } \frac{g(x_1)}{f(x_1)}\leq \frac{g(x_2)}{f(x_2)}.
    \end{equation*}
    If $b/a>g(x_2)/f(x_2)$, then we have
        \begin{equation*}
            \frac{b+g(x_1)}{a+f(x_1)}\leq\frac{b}{a}\lor\frac{g(x_1)}{f(x_1)}\leq\frac{b}{a}\lor\frac{g(x_2)}{f(x_2)}\leq\frac{b}{a}\leq\frac{b}{a}\lor\frac{b+g(x_2)}{a+f(x_2)}.
        \end{equation*}
    On the other hand, if $b/a\leq g(x_2)/f(x_2)$, then we have
        \begin{equation*}
            \frac{b+g(x_1)}{a+f(x_1)}=\frac{b+f(x_1)\frac{g(x_1)}{f(x_1)}}{a+f(x_1)}\leq\frac{b+f(x_1)\frac{g(x_2)}{f(x_2)}}{a+f(x_1)}.
        \end{equation*}
        Let $h(z)=\qty(b+\frac{g(x_2)}{f(x_2)}z)/\qty(a+z)$, where $z\in[f(x_1),f(x_2)]$. Then, we have
        \begin{align*}
            h'(z)&=\frac{\frac{g(x_2)}{f(x_2)}\qty(a+z)-\qty(b+\frac{g(x_2)}{f(x_2)}z)}{\qty(a+z)^2}\\
            &=\frac{\frac{g(x_2)}{f(x_2)}a-b}{\qty(a+z)^2}
            =\frac{a\qty(\frac{g(x_2)}{f(x_2)}-\frac{b}{a})}{\qty(a+z)^2}\geq 0,
        \end{align*}
        which means that $h(z)$ is monotonically increasing in $[f(x_1),f(x_2)]$. Therefore, we have
        \begin{equation*}
            \frac{b+f(x_1)\frac{g(x_2)}{f(x_2)}}{a+f(x_1)}\leq\frac{b+f(x_2)\frac{g(x_2)}{f(x_2)}}{a+f(x_2)}=\frac{b+g(x_2)}{a+f(x_2)}\leq\frac{b}{a}\lor\frac{b+g(x_2)}{a+f(x_2)}.
        \end{equation*}
    Combining both cases, we have
    \begin{equation*}
        \frac{b+g(x_1)}{a+f(x_1)}\leq\frac{b}{a}\lor\frac{b+g(x_2)}{a+f(x_2)}
    \end{equation*}
    for any $x_1<x_2$. If $\lim_{x\to\infty}\qty(b+g(x))/\qty(a+f(x))$ exists, the result for the infinite case follows directly by taking the limit of $x_2\to\infty$.
\end{proof}

\begin{lemma}\label{lem:important_inequality}
    For $\wm>1$ and $j\in\B\setminus\qty{i}$, let $\lambda'\in\mathbb{R}^d$ be such that $\lambda'_j\geq \lambda_j$ and $\lambda_k'=\lambda_k$ for all $k\neq j$.
    Then, we have
    \begin{equation*}
        \frac{J_i(\lambda;\dis,\wm,\B)}{\phi_i(\lambda;\dis,\wm,\B)}\leq\frac{J_i(\lambda';\dis,\wm-1,\B\setminus\qty{j})}{\phi_i(\lambda';\dis,\wm-1,\B\setminus\qty{j})}\lor\frac{J_i(\lambda';\dis,\wm,\B)}{\phi_i(\lambda';\dis,\wm,\B)}.
    \end{equation*}
\end{lemma}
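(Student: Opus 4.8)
The plan is to condition on the position of base-arm $j$ relative to base-arm $i$ in the perturbed ordering $\{r_k-\lambda_k:k\in\B\}$, and then glue the two pieces together with the mediant inequality of Lemma~\ref{lem:bgaf}. For $t\ge\lambda_j$ write $\lambda^{(t)}$ for the vector obtained from $\lambda$ by resetting its $j$-th coordinate to $t$, so that $\lambda^{(t)}$ interpolates between $\lambda$ (at $t=\lambda_j$) and $\lambda'$ (at $t=\lambda'_j$). Condition on the perturbed value $z=r_i-\lambda_i$ of base-arm $i$, i.e.\ on the variable of integration in $dF(z+\lambda_i)$ in \eqref{eq:prob_theta_B}. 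Given $z$, base-arm $j$ lies below $i$ with probability $F(z+t)$ and above $i$ with probability $1-F(z+t)$, independently of the base-arms in $\B\setminus\{i,j\}$, the number of which that outrank $i$ has some distribution $(q_\ell(z))_{\ell\ge0}$ that does not involve $\lambda_j$. The crucial point is that when $j$ lies below $i$ the event ``$i$ is in the top $\wm$ of $\B$'' coincides with ``$i$ is in the top $\wm$ of $\B\setminus\{j\}$'', whereas when $j$ lies above $i$ it coincides with ``$i$ is in the top $\wm-1$ of $\B\setminus\{j\}$'', since $j$ then already occupies one of the $\wm-1$ slots admissible above $i$. The hypothesis $\wm>1$ is exactly what makes the latter event nonempty, so that the first term on the right-hand side of the lemma is well defined.

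Carrying out this split, with $Q(z):=\sum_{\ell=0}^{\wm-2}q_\ell(z)$ and $P(z):=Q(z)+q_{\wm-1}(z)$, one obtains
\[
\phi_i(\lambda^{(t)};\dis,\wm,\B)=\int(1-F(z+t))\,Q(z)\,dF(z+\lambda_i)+\int F(z+t)\,P(z)\,dF(z+\lambda_i).
\]
Regrouping the $Q$-part of the second integral with the first and using $P-Q=q_{\wm-1}$, this equals $\widetilde\phi+\int F(z+t)\,q_{\wm-1}(z)\,dF(z+\lambda_i)$, where $\widetilde\phi:=\int Q(z)\,dF(z+\lambda_i)$ is precisely $\phi_i(\lambda';\dis,\wm-1,\B\setminus\{j\})$ and does not depend on $t$, since it never refers to coordinate $j$. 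The identical manipulation with an extra factor $1/(z+\lambda_i)$ inside each integrand gives $J_i(\lambda^{(t)};\dis,\wm,\B)=\widetilde J+\int F(z+t)\,q_{\wm-1}(z)\,(z+\lambda_i)^{-1}\,dF(z+\lambda_i)$ with $\widetilde J:=J_i(\lambda';\dis,\wm-1,\B\setminus\{j\})$. Hence, writing $f(t)$ and $g(t)$ for these two integrals (they will play the roles of the functions $f,g$ in Lemma~\ref{lem:bgaf}), the quantity to be controlled is exactly $(\widetilde J+g(t))/(\widetilde\phi+f(t))$.

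To conclude, note that $f(t)$ is increasing in $t$ because $F(z+t)$ is, and that $g(t)/f(t)$ is increasing in $t$ by Lemma~\ref{lem:both_increasing} applied to the same indices $i,j$, with its ``$\psi$'' taken to be $q_{\wm-1}(z)$ times the density of $F$ at $z+\lambda_i$ (non-negative and free of $\lambda_j$). Applying Lemma~\ref{lem:bgaf} with $a=\widetilde\phi$, $b=\widetilde J$, $x_1=\lambda_j$ and $x_2=\lambda'_j$ then yields
\[
\frac{\widetilde J+g(\lambda_j)}{\widetilde\phi+f(\lambda_j)}\le\frac{\widetilde J}{\widetilde\phi}\lor\frac{\widetilde J+g(\lambda'_j)}{\widetilde\phi+f(\lambda'_j)},
\]
and since the left side is $J_i(\lambda;\dis,\wm,\B)/\phi_i(\lambda;\dis,\wm,\B)$, the first term on the right is $J_i(\lambda';\dis,\wm-1,\B\setminus\{j\})/\phi_i(\lambda';\dis,\wm-1,\B\setminus\{j\})$, and the second is $J_i(\lambda';\dis,\wm,\B)/\phi_i(\lambda';\dis,\wm,\B)$, this is exactly the claimed bound (the case $\lambda_j=\lambda'_j$ being trivial).

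The conceptual content is modest once the decomposition is in place; the part that needs care is the bookkeeping of the lower endpoints of integration in \eqref{eq:prob_theta_B}, which are written through the sorted coordinates of the perturbation vector and therefore shift when $\lambda_j$ is increased. I would dispose of this by observing that for each configuration appearing in the sum the integrand already vanishes for $z$ below the relevant cutoff (because $F(z+\lambda_k)=0$ as soon as $z+\lambda_k<\nu$), so that all the integrals above may be taken over the common interval $[\nu-(\lambda_i\land t),\infty)$ — the very form required by Lemma~\ref{lem:both_increasing} — without altering their values. The only remaining loose ends are the degenerate cases $\widetilde\phi=0$ or $q_{\wm-1}\equiv0$ (for instance when $|\B\setminus\{i,j\}|<\wm-1$), in which $(\widetilde J+g(t))/(\widetilde\phi+f(t))$ collapses either to the constant $\widetilde J/\widetilde\phi$ or to $g(t)/f(t)$ and the inequality is immediate.
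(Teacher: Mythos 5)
Your proof is correct and follows essentially the same route as the paper: you derive the same decomposition $\phi_i(\lambda;\dis,\wm,\B)=\phi_i(\lambda;\dis,\wm-1,\B\setminus\{j\})+\phi_{i,\wm,j}(\lambda;\dis,\B)$ (and its $J$-analogue) — by conditioning on whether $j$ outranks $i$ rather than by the paper's explicit event decomposition, which is only a presentational difference — and then combine Lemma~\ref{lem:both_increasing} with Lemma~\ref{lem:bgaf} exactly as the paper does. Your explicit treatment of the integration endpoints and of the degenerate case $q_{\wm-1}\equiv 0$ is a welcome extra but does not change the argument.
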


\begin{proof}
    For $\theta\leq\wm$, we define
    \begin{equation*}
        \mathcal{E}_{i,\theta,j}^\B=\qty{\rank\qty(i,r-\lambda;\B)=\theta\leq\rank\qty(j,r-\lambda;\B)},
    \end{equation*}
    where the probability can be expressed as
    \begin{align*}
        \phi_{i,\theta,j}(\lambda;\dis,\B) &\coloneqq
        \sP_{r=\qty(r_1,\dots,r_d) \sim \mathcal{D}}\qty{\rank\qty(i,r-\lambda;\B)=\theta\leq\rank\qty(j,r-\lambda;\B)}\\
        &=\int_{\nu-(\lambda_i\land\lambda_j)}^\infty f(z+\lambda_i)F(z+\lambda_j)\sum_{\bm{v}\in\mathcal{S}_{i,\theta,j}^\B}\qty(\prod_{k:v_k=1}\qty(1-F(z+\lambda_k))\prod_{k:v_k=0,k\in\B\setminus\qty{i}}F(z+\lambda_k)) \dd z.\numberthis\label{eq:prob_k}
    \end{align*}
    Here, $\mathcal{S}_{i,\theta,j}^\B=\qty{\bm{v}\in\qty{0,1}^d:\normv_1=\theta-1,v_i=v_j=0,\text{ and }v_k=0\text{ for all }k\notin\B}$.
    Corresponding to this, we define
    \begin{multline}\label{eq:prob_k_J}
        J_{i,\theta,j}(\lambda;\dis,\B)\coloneqq\\
        \int_{\nu-(\lambda_i\land\lambda_j)}^\infty \frac{f(z+\lambda_i)}{z+\lambda_i}\sum_{\bm{v}\in\mathcal{S}_{i,\theta,j}^\B}\qty(\prod_{k:v_k=1}\qty(1-F(z+\lambda_k))\prod_{k:v_k=0,k\in\B\setminus\qty{i}}F(z+\lambda_k)) \dd z.
    \end{multline}
    Considering the event
    \begin{equation*}
        \widetilde{\mathcal{E}}_{i,j}=\qty{\rank\qty(i,r_t-\lambda_t;\B\setminus\qty{j})\leq\wm-1},
    \end{equation*}
    we have
    \begin{equation*}
        \sP_{r=\qty(r_1,\dots,r_d) \sim \mathcal{D}}\qty(\widetilde{\mathcal{E}}_{i,j})=\phi_i(\lambda;\dis,\wm-1,\B\setminus\qty{j}).
    \end{equation*}
    By definition, we can see that
    \begin{equation*}
        \qty{\rank\qty(i,r_t-\lambda_t;\B)\leq \wm}=\widetilde{\mathcal{E}}_{i,j}\cup\mathcal{E}_{i,\wm,j}^\B,
        \quad
        \widetilde{\mathcal{E}}_{i,j}\cap\mathcal{E}_{i,\wm,j}^\B=\varnothing,
    \end{equation*}
    Therefore, we can docompose the expression of the probability as

    \begin{equation}\label{eq:phi_decomose}
        \phi_i(\lambda;\dis,\wm,\B)=
        \phi_i(\lambda;\dis,\wm-1,\B\setminus\qty{j})+
        \phi_{i,\wm,j}(\lambda;\dis,\B).
    \end{equation}
    Similarly, by definition we have
    \begin{equation}\label{eq:J_decomose}
        J_i(\lambda;\dis,\wm,\B)=J_i(\lambda;\dis,\wm-1,\B\setminus\qty{j})+
        J_{i,\wm,j}(\lambda;\dis,\B).
    \end{equation}
    Consider the expression of 
    $\phi_{i,\theta,j}(\lambda;\dis,\wm,\B)$ and $J_{i,\theta,j}(\lambda;\dis,\wm,\B)$ respectively given by \eqref{eq:prob_k} and \eqref{eq:prob_k_J}. 
    By Lemma~\ref{lem:both_increasing}, it follows that
    \begin{equation*}
        \frac{J_{i,\wm,j}(\lambda;\dis,\B)}{\phi_{i,\wm,j}(\lambda;\dis,\B)}\leq
        \frac{J_{i,\wm,j}(\lambda';\dis,\B)}{\phi_{i,\wm,j}(\lambda';\dis,\B)}
    \end{equation*}
    because of the monotonic increase in $\lambda_j$. 
    Then, since both $\phi_i(\lambda;\dis,\wm-1,\B\setminus\qty{j})$ and $J_i(\lambda;\dis,\wm-1,\B\setminus\qty{j})$ do not depend on $\lambda_j$, we have
    \begin{align*}
        \frac{J_i(\lambda;\dis,\wm,\B)}{\phi_i(\lambda;\dis,\wm,\B)}
        &=\frac{J_i(\lambda;\dis,\wm-1,\B\setminus\qty{j})+J_{i,\wm,j}(\lambda;\dis,\B)}{\phi_i(\lambda;\dis,\wm-1,\B\setminus\qty{j})+\phi_{i,\wm,j}(\lambda;\dis,\B)}\\
        &\leq \frac{J_i(\lambda;\dis,\wm-1,\B\setminus\qty{j})}{\phi_i(\lambda;\dis,\wm-1,\B\setminus\qty{j})}\lor\frac{J_i(\lambda';\dis,\wm,\B)}{\phi_i(\lambda';\dis,\wm,\B)}\numberthis\label{eq:apply_bgaf}\\
        &=\frac{J_i(\lambda';\dis,\wm-1,\B\setminus\qty{j})}{\phi_i(\lambda';\dis,\wm-1,\B\setminus\qty{j})}\lor\frac{J_i(\lambda';\dis,\wm,\B)}{\phi_i(\lambda';\dis,\wm,\B)},
    \end{align*}
    where the inequality \eqref{eq:apply_bgaf} holds by recalling that the same relation as \eqref{eq:phi_decomose} and \eqref{eq:J_decomose} holds for $\lambda_j'$ in Lemma~\ref{lem:bgaf}.
\end{proof}

The following lemma presents a special case of Lemma~\ref{lem:important_inequality}, where we take $\lambda'_j\to\infty$. 
\begin{lemma}\label{lem:infty}
    Let $\wm\geq 2$. If $\wm<\left\lvert\B\right\rvert$, we have
    \begin{equation}\label{eq:important2}
        \frac{J_i(\lambda;\dis,\wm,\B)}{\phi_i(\lambda;\dis,\wm,\B)}\leq\frac{J_i(\lambda;\dis,\wm-1,\B\setminus\qty{j})}{\phi_i(\lambda;\dis,\wm-1,\B\setminus\qty{j})}\lor\frac{J_i(\lambda;\dis,\wm,\B\setminus\qty{j})}{\phi_i(\lambda;\dis,\wm,\B\setminus\qty{j})}.
    \end{equation}
    If $\wm=\left\lvert\B\right\rvert$, we have
    \begin{equation}\label{eq:important3}
         \frac{J_i(\lambda;\dis,\wm,\B)}{\phi_i(\lambda;\dis,\wm,\B)}\leq\frac{J_i(\lambda;\dis,\wm-1,\B\setminus\qty{j})}{\phi_i(\lambda;\dis,\wm-1,\B\setminus\qty{j})}.
    \end{equation}
\end{lemma}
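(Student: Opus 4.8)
The plan is to obtain both bounds from Lemma~\ref{lem:important_inequality} by letting the $j$-th coordinate tend to infinity, as suggested by the statement. Fix $j\in\B\setminus\qty{i}$ and, for $s\geq\lambda_j$, let $\lambda^{(s)}$ be the vector that coincides with $\lambda$ except for its $j$-th coordinate, which is set to $s$. This is an admissible $\lambda'$ in Lemma~\ref{lem:important_inequality} (it satisfies $\lambda^{(s)}_j\geq\lambda_j$, agrees with $\lambda$ off coordinate $j$, and $\wm\geq2$, $j\in\B\setminus\qty{i}$), so for every $s\geq\lambda_j$,
\begin{equation*}
    \frac{J_i(\lambda;\dis,\wm,\B)}{\phi_i(\lambda;\dis,\wm,\B)}
    \leq
    \frac{J_i(\lambda^{(s)};\dis,\wm-1,\B\setminus\qty{j})}{\phi_i(\lambda^{(s)};\dis,\wm-1,\B\setminus\qty{j})}
    \lor
    \frac{J_i(\lambda^{(s)};\dis,\wm,\B)}{\phi_i(\lambda^{(s)};\dis,\wm,\B)} .
\end{equation*}
Since the base-arm set $\B\setminus\qty{j}$ excludes $j$, neither $J_i(\cdot\,;\dis,\wm-1,\B\setminus\qty{j})$ nor $\phi_i(\cdot\,;\dis,\wm-1,\B\setminus\qty{j})$ depends on the $j$-th coordinate, so the first term on the right is the same with $\lambda^{(s)}$ replaced by $\lambda$; it remains to evaluate $\lim_{s\to\infty}J_i(\lambda^{(s)};\dis,\wm,\B)/\phi_i(\lambda^{(s)};\dis,\wm,\B)$.

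To identify this limit I would work with the probabilistic forms $\phi_i(\mu;\dis,\wm,\mathcal{C})=\sP_{r\sim\dis^{\otimes d}}\qty[\rank(i,r-\mu;\mathcal{C})\leq\wm]$ and $J_i(\mu;\dis,\wm,\mathcal{C})=\E_{r\sim\dis^{\otimes d}}\qty[\tfrac{1}{r_i}\ind\qty{\rank(i,r-\mu;\mathcal{C})\leq\wm}]$; the first is the definition, and the second follows by the same manipulation (the substitution $r_i=z+\mu_i$, then summation over $\theta$) applied to the definition of $J_{i,\theta}$. Let $r\sim\dis^{\otimes d}$. Since $r_j<\infty$ almost surely, $r_j-s\to-\infty$ as $s\to\infty$, so almost surely there is $s_0(r)$ with $r_j-s<\min_{k\in\B\setminus\qty{j}}(r_k-\lambda_k)$ for all $s\geq s_0(r)$; for such $s$ the arm $j$ is ranked last within $\B$ and hence does not affect the rank of $i$, so $\rank(i,r-\lambda^{(s)};\B)=\rank(i,r-\lambda;\B\setminus\qty{j})$. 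Therefore $\ind\qty{\rank(i,r-\lambda^{(s)};\B)\leq\wm}\to\ind\qty{\rank(i,r-\lambda;\B\setminus\qty{j})\leq\wm}$ almost surely; these indicators are bounded by $1$ and $1/r_i$ is $\dis$-integrable (with $\E[1/r_i]=\Gamma(1+\tfrac1\alpha)$ for $\Fdis$ and $\E[1/r_i]\leq1$ for $\Pdis$), so dominated convergence yields
\begin{equation*}
    \phi_i(\lambda^{(s)};\dis,\wm,\B)\to\phi_i\qty(\lambda;\dis,\wm\land\lvert\B\setminus\qty{j}\rvert,\B\setminus\qty{j}),
    \qquad
    J_i(\lambda^{(s)};\dis,\wm,\B)\to J_i\qty(\lambda;\dis,\wm\land\lvert\B\setminus\qty{j}\rvert,\B\setminus\qty{j}),
\end{equation*}
where the truncation $\wm\land\lvert\B\setminus\qty{j}\rvert$ appears because ranks within $\B\setminus\qty{j}$ never exceed $\lvert\B\setminus\qty{j}\rvert$. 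The limiting denominator is at least $\sP\qty[\rank(i,r-\lambda;\B\setminus\qty{j})=1]>0$, so the ratio converges to $J_i(\lambda;\dis,\wm\land\lvert\B\setminus\qty{j}\rvert,\B\setminus\qty{j})/\phi_i(\lambda;\dis,\wm\land\lvert\B\setminus\qty{j}\rvert,\B\setminus\qty{j})$.

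Letting $s\to\infty$ in the first display (using continuity of $\lor$) therefore gives
\begin{equation*}
    \frac{J_i(\lambda;\dis,\wm,\B)}{\phi_i(\lambda;\dis,\wm,\B)}
    \leq
    \frac{J_i(\lambda;\dis,\wm-1,\B\setminus\qty{j})}{\phi_i(\lambda;\dis,\wm-1,\B\setminus\qty{j})}
    \lor
    \frac{J_i(\lambda;\dis,\wm\land\lvert\B\setminus\qty{j}\rvert,\B\setminus\qty{j})}{\phi_i(\lambda;\dis,\wm\land\lvert\B\setminus\qty{j}\rvert,\B\setminus\qty{j})} .
\end{equation*}
If $\wm<\lvert\B\rvert$ then $\wm\leq\lvert\B\setminus\qty{j}\rvert$, so $\wm\land\lvert\B\setminus\qty{j}\rvert=\wm$ and this is exactly \eqref{eq:important2}. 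If $\wm=\lvert\B\rvert$ then $\wm\land\lvert\B\setminus\qty{j}\rvert=\lvert\B\rvert-1=\wm-1$, so the two terms on the right coincide and the bound reduces to \eqref{eq:important3}.

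The only step that is not pure bookkeeping is the dominated-convergence argument: one has to check that the (random) rank of $i$ inside $\B$ eventually equals its rank inside $\B\setminus\qty{j}$, which holds because $j$ drops to the bottom of $\B$ once $\lambda^{(s)}_j$ is large, and that $\tfrac{1}{r_i}\ind\qty{\cdots}$ has the $s$-uniform integrable envelope $1/r_i$, which has finite mean for Fr\'{e}chet and Pareto perturbations. Everything else amounts to carrying the two cases $\wm<\lvert\B\rvert$ and $\wm=\lvert\B\rvert$ through the limit.
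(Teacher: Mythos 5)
Your proof is correct and follows essentially the same route as the paper: both apply Lemma~\ref{lem:important_inequality} with $\lambda'_j\to\infty$ and identify the limit of the second ratio as the corresponding quantity on the reduced set $\B\setminus\qty{j}$, with the $\wm=\left\lvert\B\right\rvert$ case collapsing to a single term because ranks within $\B\setminus\qty{j}$ cannot reach $\left\lvert\B\right\rvert$. The only (minor) difference is how the limit is computed: you use the probabilistic representation of $\phi_i$ and $J_i$ together with dominated convergence (with integrable envelope $1/r_i$), whereas the paper splits the sum over $\mathcal{S}_{i,\theta}^\B$ in the explicit integral and lets the terms containing $1-F(z+\lambda_j)$ vanish; both are valid and yield \eqref{eq:important2} and \eqref{eq:important3}.
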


\begin{proof}
    \item \paragraph{Inequality \eqref{eq:important2}}
    Recall that 
    \begin{multline*}
        \phi_i(\lambda;\dis,\wm,\B)=\\
        \sum_{\theta=1}^{m}\int_{\nu-\widetilde{\lambda}_\theta}^\infty f(z+\lambda_i)\sum_{\bm{v}\in\mathcal{S}_{i,\theta}^\B}\qty(\prod_{k:v_k=1}\qty(1-F(z+\lambda_k))\prod_{k:v_k=0,k\in\B\setminus\qty{i}}F(z+\lambda_k)) \dd z,
    \end{multline*}
    where $\mathcal{S}_{i,\theta}^\B=\qty{\bm{v}\in\qty{0,1}^d:\normv_1=\theta-1,v_i=0,\text{ and }v_k=0\text{ for all }k\notin\B}$. 
    Note that
    \begin{equation*}
        \mathcal{S}_{i,\theta}^\B\setminus\mathcal{S}_{i,\theta,j}^\B=\qty{\bm{v}\in\qty{0,1}^d:\normv_1=\theta-1,v_i=0,v_j=1,\text{ and }v_k=0\text{ for all }k\notin\B},
    \end{equation*}
    where $\mathcal{S}_{i,\theta,j}^\B=\qty{\bm{v}\in\qty{0,1}^d:\normv_1=\theta-1,v_i=v_j=0,\text{ and }v_k=0\text{ for all }k\notin\B}$. Then, $\phi_i(\lambda;\dis,\wm,\B)$ can be rewritten as
    \begin{multline}\label{eq:to_limit}
        \phi_i(\lambda;\dis,\wm,\B)=\\
        \sum_{\theta=1}^{\wm}\int_{\nu-\widetilde{\lambda}_\theta}^\infty f(z+\lambda_i)(1-F(z+\lambda_j))\sum_{\bm{v}\in\mathcal{S}_{i,\theta}^\B\setminus\mathcal{S}_{i,\theta,j}^\B}\qty(\prod_{k:v_k=1,k\neq j}\qty(1-F(z+\lambda_k))\prod_{k:v_k=0,k\in\B\setminus\qty{i}}F(z+\lambda_k)) \dd z\\
        +\sum_{\theta=1}^{\wm}\int_{\nu-\widetilde{\lambda}_\theta}^\infty f(z+\lambda_i)F(z+\lambda_j)\sum_{\bm{v}\in\mathcal{S}_{i,\theta,j}^\B}\qty(\prod_{k:v_k=1}\qty(1-F(z+\lambda_k))\prod_{k:v_k=0,k\in\B\setminus\qty{i,j}}F(z+\lambda_k)) \dd z.
    \end{multline}
    Taking the limit of $\lambda_j\to\infty$ on both sides of \eqref{eq:to_limit}, since $\lim_{\lambda_j\to\infty}F(z+\lambda_j)=1$, the first term of the RHS of \eqref{eq:to_limit} vanishes, and the second term becomes independent of $\lambda_j$. 
    Then, $\lim_{\lambda_j\to\infty}\phi_i(\lambda;\dis,\wm,\B)$ can be expressed as
    \begin{align*}
        &\lim_{\lambda_j\to\infty}\phi_i(\lambda;\dis,\wm,\B)\\
        =&\sum_{\theta=1}^{\wm}\int_{\nu-\widetilde{\lambda}_\theta}^\infty f(z+\lambda_i)\sum_{\bm{v}\in\mathcal{S}_{i,\theta,j}^\B}\qty(\prod_{k:v_k=1}\qty(1-F(z+\lambda_k))\prod_{k:v_k=0,k\in\B\setminus\qty{i,j}}F(z+\lambda_k)) \dd z\\
        =&\sum_{\theta=1}^{\wm}\int_{\nu-\widetilde{\lambda}_\theta}^\infty f(z+\lambda_i)\sum_{\bm{v}\in\mathcal{S}_{i,\theta}^{\B\setminus\qty{j}}}\qty(\prod_{k:v_k=1}\qty(1-F(z+\lambda_k))\prod_{k:v_k=0,k\in\B\setminus\qty{i,j}}F(z+\lambda_k)) \dd z,\\
        =&\phi_i(\lambda;\dis,\wm,\B\setminus\qty{j}).\numberthis\label{eq:S_prime}
    \end{align*}
    By the same argument, we also have
    \begin{equation}\label{eq:S_prime_J}
        \lim_{\lambda_j\to\infty}J_i(\lambda;\dis,\wm,\B)=J_i(\lambda;\dis,\wm,\B\setminus\qty{j}).
    \end{equation}
    By Lemma~\ref{lem:important_inequality}, \eqref{eq:S_prime} and \eqref{eq:S_prime_J}, we have
    \begin{align*}
        \frac{J_i(\lambda;\dis,\wm,\B)}{\phi_i(\lambda;\dis,\wm,\B)}
        &\leq
        \lim_{\lambda_j\to\infty}\frac{J_i(\lambda;\dis,\wm-1,\B\setminus\qty{j})}{\phi_i(\lambda;\dis,\wm-1,\B\setminus\qty{j})}\lor
        \lim_{\lambda_j\to\infty}\frac{J_i(\lambda;\dis,\wm,\B)}{\phi_i(\lambda;\dis,\wm,\B)}\\
        &=\frac{J_i(\lambda;\dis,\wm-1,\B\setminus\qty{j})}{\phi_i(\lambda;\dis,\wm-1,\B\setminus\qty{j})}\lor
        \frac{\lim_{\lambda_j\to\infty}J_i(\lambda;\dis,\wm,\B)}{\lim_{\lambda_j\to\infty}\phi_i(\lambda;\dis,\wm,\B)}\numberthis\label{eq:ND_limit}\\
        &=
        \frac{J_i(\lambda;\dis,\wm-1,\B\setminus\qty{j})}{\phi_i(\lambda;\dis,\wm-1,\B\setminus\qty{j})}\lor
        \frac{J_i(\lambda;\dis,\wm,\B\setminus\qty{j})}{\phi_i(\lambda;\dis,\wm,\B\setminus\qty{j})}.
    \end{align*}
    where \eqref{eq:ND_limit} holds since both $\lim_{\lambda_j\to\infty}J_i(\lambda;\dis,\wm,\B)$ and $\lim_{\lambda_j\to\infty}\phi_i(\lambda;\dis,\wm,\B)$ exist and nonzero.

    \item \paragraph{Inequality \eqref{eq:important3}}
    This result follows as a special case of \eqref{eq:important2}, where in \eqref{eq:S_prime} we have
    $\mathcal{S}_{i,\wm}^{\B\setminus\qty{j}}=\varnothing$, since there exists no $\bm{v}\in\qty{0,1}^d$ satisfying $\normv_1=\wm-1=\left\lvert\B\right\rvert-1$, $v_i=0$ and $v_k=0$ for all $k\notin\B\setminus\qty{j}$ simultaneously. Therefore, we have
    \begin{equation*}
        \lim_{\lambda_j\to\infty}\phi_i(\lambda;\dis,\wm,\B)=\phi_i(\lambda;\dis,\wm-1,\B\setminus\qty{j})
    \end{equation*}
    and
    \begin{equation*}
        \lim_{\lambda_j\to\infty}J_i(\lambda;\dis,\wm,\B)=J_i(\lambda;\dis,\wm-1,\B\setminus\qty{j}),
    \end{equation*}
    which conclude the proof.
\end{proof}

\subsubsection{Proof of Lemma~\ref{lem:lambda_star}}

\textbf{Lemma~\ref{lem:lambda_star} (Restated)}
\textit{It holds that
\begin{equation*}
    \frac{J_i(\lambda;\dis)}{\phi_i(\lambda;\dis)}\leq
    \max_{\substack{w\in\qty{0}\cup[(m\land i)-1]\\\theta\in[(m\land i)-w]}}\qty{\frac{J_{i,\theta}(\lambda^*;\dis,\B_{i,w})}{\phi_{i,\theta}(\lambda^*;\dis,\B_{i,w})}},
\end{equation*}
where
\begin{equation*}
    \B_{i,0}=[i],
    \B_{i,w}=[i]\setminus[w], \text{ and }
    \lambda_k^* = 
    \begin{cases}
        \lambda_i, & \text{if } k\leq i,\\
        \lambda_k, & \text{if } k> i.
    \end{cases}
\end{equation*}
}    

\begin{proof}
    In this proof, we locally use $\lambda\pn{0},\lambda\pn{1},\lambda\pn{2},\cdots,\lambda\pn{i-1}$ to denote a sequence of $d$-dimensional vectors defined as follows.  
    Define $\lambda\pn{0}=\lambda$, for $j\leq i$ we have
    \begin{equation*}
        \lambda\pn{j}_k = 
        \begin{cases}
            \lambda_i, & \text{if } k=j,\\
            \lambda\pn{j-1}_k, & \text{otherwise,}
        \end{cases}
        \text{ which implies }
        \lambda\pn{j}_k = 
        \begin{cases}
            \lambda_i, & \text{if } k\in[j]\cup\qty{i},\\
            \lambda_k, & \text{otherwise.}
        \end{cases}
    \end{equation*}
    Consequently, we have $\lambda\pn{i-1}=\lambda^*.$

    We are now ready to derive the result. 
    Firstly, to address all $\lambda_j\leq\lambda_i$, 
    for $i=1$, we only need to show that
    \begin{equation}\label{eq:induction_statement}
        \frac{J_i(\lambda;\dis)}{\phi_i(\lambda;\dis)}\leq
        \max_{w\in\qty{0}\cup[k\land(m-1)]}\qty{\frac{J_i(\lambda\pn{k};\dis,m-w,\B_{d,w})}{\phi_i(\lambda\pn{k};\dis,m-w,\B_{d,w})}}.
    \end{equation}
    holds for $k=0$, which is trivial.
    For $i\geq 2$, we prove \eqref{eq:induction_statement} holds for all $k\in[i-1]$ by mathematical induction.
    We have
    \begin{equation*}
        \frac{J_i(\lambda;\dis)}{\phi_i(\lambda;\dis)} = 
        \frac{J_i(\lambda;\dis,m,\B_{d,0})}{\phi_i(\lambda;\dis,m,\B_{d,0})}.
    \end{equation*}
    We begin by verifying the base case of the induction. When $k=1$, the statement becomes
    \begin{equation*}
        \frac{J_i(\lambda;\dis,m,\B_{d,0})}{\phi_i(\lambda;\dis,m,\B_{d,0})}
        \leq
        \max_{w\in\qty{0}\cup[1\land(m-1)]}\qty{\frac{J_i(\lambda\pn{1};\dis,m-w,\B_{d,w})}{\phi_i(\lambda\pn{1};\dis,m-w,\B_{d,w})}}.
    \end{equation*} 
    If $m=1$, the statement is immediate, since $J_i(\lambda;\dis)/\phi_i(\lambda;\dis)$ is expressed as
    \begin{equation*}
        \frac{J_i(\lambda;\dis)}{\phi_i(\lambda;\dis)}
        =\frac{\int_{\nu-\min_{j\in[d]}\lambda_j}^{\infty}\frac{1}{z+\lambda_i}f(z+\lambda_i)\prod_{j\neq i}F(z+\lambda_j)\dd z}{\int_{\nu-\min_{j\in[d]}\lambda_j}^{\infty}f(z+\lambda_i)\prod_{j\neq i}F(z+\lambda_j)\dd z},
    \end{equation*}
    which is monotonically increasing in all $\lambda_j\neq\lambda_i$ by Lemma~\ref{lem:both_increasing}. 
    Otherwise, by applying Lemma~\ref{lem:important_inequality}, we have
    \begin{align*}
        \frac{J_i(\lambda;\dis,m,\B_{d,0})}{\phi_i(\lambda;\dis,m,\B_{d,0})}
        &\leq
        \frac{J_i(\lambda\pn{1};\dis,m-1,\B_{d,1})}{\phi_i(\lambda\pn{1};\dis,m-1,\B_{d,1})}
        \lor
        \frac{J_i(\lambda\pn{1};\dis,m,\B_{d,0})}{\phi_i(\lambda\pn{1};\dis,m,\B_{d,0})}\\
        &=\max_{w\in\qty{0,1}}\qty{\frac{J_i(\lambda\pn{1};\dis,m-w,\B_{d,w})}{\phi_i(\lambda\pn{1};\dis,m-w,\B_{d,w})}}.
    \end{align*}
    Therefore, the statement holds for $k=1$.

    We assume, as the inductive hypothesis, that the statement holds for $k=u< i-1$, i.e.,
    \begin{equation}\label{eq:induction_hypothesis}
        \frac{J_i(\lambda;\dis,m,\B_{d,0})}{\phi_i(\lambda;\dis,m,\B_{d,0})}
        \leq
        \max_{w\in\qty{0}\cup[u\land(m-1)]}\qty{\frac{J_i(\lambda\pn{u};\dis,m-w,\B_{d,w})}{\phi_i(\lambda\pn{u};\dis,m-w,\B_{d,w})}}.
    \end{equation}
    If we can prove the statement holds for $k=u+1$, then by induction, the statement holds for all $k\leq i-1$, thereby establishing the desired result in \eqref{eq:induction_statement}.
    Now we aim to prove it for $k=u+1\leq i-1$, i.e., we want to show that
    \begin{equation*}
        \frac{J_i(\lambda;\dis,m,\B_{d,0})}{\phi_i(\lambda;\dis,m,\B_{d,0})}
        \leq
        \max_{w\in\qty{0}\cup[(u+1)\land(m-1)]}\qty{\frac{J_i(\lambda\pn{u+1};\dis,m-w,\B_{d,w})}{\phi_i(\lambda\pn{u+1};\dis,m-w,\B_{d,w})}}.
    \end{equation*}
    To prove this, it suffices to show the following inequality holds:
    \begin{multline}\label{eq:for_induction_hypothesis}
        \max_{w\in\qty{0}\cup[u\land(m-1)]}\qty{\frac{J_i(\lambda\pn{u};\dis,m-w,\B_{d,w})}{\phi_i(\lambda\pn{u};\dis,m-w,\B_{d,w})}}\leq\\
        \max_{w\in\qty{0}\cup[(u+1)\land(m-1)]}\qty{\frac{J_i(\lambda\pn{u+1};\dis,m-w,\B_{d,w})}{\phi_i(\lambda\pn{u+1};\dis,m-w,\B_{d,w})}},
    \end{multline}
    since this, together with the induction hypothesis \eqref{eq:induction_hypothesis}, implies that
    \begin{align*}
        \frac{J_i(\lambda;\dis,m,\B_{d,0})}{\phi_i(\lambda;\dis,m,\B_{d,0})}
        &\leq
        \max_{w\in\qty{0}\cup[u\land(m-1)]}\qty{\frac{J_i(\lambda\pn{u};\dis,m-w,\B_{d,w})}{\phi_i(\lambda\pn{u};\dis,m-w,\B_{d,w})}}\\
        &\leq
        \max_{w\in\qty{0}\cup[(u+1)\land(m-1)]}\qty{\frac{J_i(\lambda\pn{u+1};\dis,m-w,\B_{d,w})}{\phi_i(\lambda\pn{u+1};\dis,m-w,\B_{d,w})}}.
    \end{align*}
    Now we prove \eqref{eq:for_induction_hypothesis} holds for $u<i-1$. 
    For each term in the LHS of \eqref{eq:for_induction_hypothesis} given by 
    $$J_i(\lambda\pn{u};\dis,m-w_0,\B_{d,w_0})/\phi_i(\lambda\pn{u};\dis,m-w_0,\B_{d,w_0}),\text{ where } w_0\in\qty{0}\cup[u\land(m-1)],$$
    we consider the following two cases.
    \begin{itemize}
        \item \textbf{Case 1:} $w_0=m-1$.\\
        In this case, we have $m-w_0=1$. 
        Similarly to the analysis on the base case, by Lemma~\ref{lem:both_increasing}, for any $j\in\B_{d,w_0}\setminus\qty{i}$,
        $$J_i(\lambda\pn{u};\dis,1,\B_{d,w_0})/\phi_i(\lambda\pn{u};\dis,1,\B_{d,w_0})$$
        is monotonically increasing in $\lambda_j$. Applying this to $\lambda\pn{u}_{u+1}$, we have
        \begin{align*}
            \frac{J_i(\lambda\pn{u};\dis,1,\B_{d,w_0})}{\phi_i(\lambda\pn{u};\dis,1,\B_{d,w_0})}
            &\leq
            \frac{J_i(\lambda\pn{u+1};\dis,1,\B_{d,w_0})}{\phi_i(\lambda\pn{u+1};\dis,1,\B_{d,w_0})}\\
            &\leq
            \max_{w\in\qty{0}\cup[(u+1)\land(m-1)]}\qty{\frac{J_i(\lambda\pn{u+1};\dis,m-w,\B_{d,w})}{\phi_i(\lambda\pn{u+1};\dis,m-w,\B_{d,w})}},\numberthis\label{eq:proof_case1}
        \end{align*}
        where the last inequality holds since $w_0\in\qty{0}\cup[u\land(m-1)]$ and $m-w_0=1$.

        \item \textbf{Case 2:} $w_0\leq m-2$.\\
        In this case, since $w_0\leq m-2$, we have $w_0<w_0+1\leq m-1$. On the other hand, since $w_0\in\qty{0}\cup[u\land(m-1)]$, we have $w_0\leq u$ and thus $w_0<w_0+1\leq u+1$. Combining these, we have $\qty{w_0,w_0+1}\subset\qty{0}\cup[(u+1)\land(m-1)]$.
        Since $m-w_0\geq 2$, by applying Lemma~\ref{lem:important_inequality} again, we have
        \begin{align*}
            &\frac{J_i(\lambda\pn{u};\dis,m-w_0,\B_{d,w_0})}{\phi_i(\lambda\pn{u};\dis,m-w_0,\B_{d,w_0})}\\
            \leq& \frac{J_i(\lambda\pn{u+1};\dis,m-(w_0+1),\B_{d,w_0+1})}{\phi_i(\lambda\pn{u+1};\dis,m-(w_0+1),\B_{d,w_0+1})}
            \lor
            \frac{J_i(\lambda\pn{u+1};\dis,m-w_0,\B_{d,w_0})}{\phi_i(\lambda\pn{u+1};\dis,m-w_0,\B_{d,w_0})}\\
            \leq&
            \max_{w\in\qty{0}\cup[(u+1)\land(m-1)]}\qty{\frac{J_i(\lambda\pn{u+1};\dis,m-w,\B_{d,w})}{\phi_i(\lambda\pn{u+1};\dis,m-w,\B_{d,w})}},\numberthis\label{eq:proof_case2}
        \end{align*}
        where the last inequality holds since $\qty{w_0,w_0+1}\subset\qty{0}\cup[(u+1)\land(m-1)]$.
    \end{itemize}
    Combining \eqref{eq:proof_case1} and \eqref{eq:proof_case2}, we see that \eqref{eq:for_induction_hypothesis} holds and thus the statement holds for $k=u+1\leq i-1$, completing the inductive step.
    By the principle of mathematical induction, the statement \eqref{eq:induction_statement} holds for $k\in[i-1]$. By letting $k=i-1$, we have
    \begin{equation}\label{eq:induction_result1}
        \frac{J_i(\lambda;\dis)}{\phi_i(\lambda;\dis)}\leq
        \max_{w\in\qty{0}\cup[(m\land i)-1]}\qty{\frac{J_i(\lambda^*;\dis,m-w,\B_{d,w})}{\phi_i(\lambda^*;\dis,m-w,\B_{d,w})}}.
    \end{equation}
    Now we address all the base-arms in $\B_{d,w}\setminus\B_{i,w}=[d]\setminus[i]$, i.e., all $\lambda_j$ satisfying $\lambda_j>\lambda_i$.
    We prove
    \begin{equation}\label{eq:induction_statement2}
        \frac{J_i(\lambda;\dis)}{\phi_i(\lambda;\dis)}\leq
        \max_{\substack{w\in\qty{0}\cup[(m\land i)-1]\\\wm\in[(m\land (d-k))-w]}}\qty{\frac{J_i(\lambda^*;\dis,\wm,\B_{d-k,w})}{\phi_i(\lambda^*;\dis,\wm,\B_{d-k,w})}}
    \end{equation}
    by mathematical induction on $k\in[d-i]$. 
    We begin by verifying the base case of the induction. When $k=1$, the statement becomes 
    \begin{equation*}
        \frac{J_i(\lambda;\dis)}{\phi_i(\lambda;\dis)}\leq
        \max_{\substack{w\in\qty{0}\cup[(m\land i)-1]\\\wm\in[(m\land (d-1))-w]}}\qty{\frac{J_i(\lambda^*;\dis,\wm,\B_{d-1,w})}{\phi_i(\lambda^*;\dis,\wm,\B_{d-1,w})}}.
    \end{equation*}
    If $m<d$, we consider each term in the RHS of \eqref{eq:induction_result1} in two cases. 
    For $w=w_0\in\qty{0}\cup[(m\land i)-1]$ such that $m-w_0=1$, it follows that
    \begin{align*}
        \frac{J_i(\lambda^*;\dis,m-w_0,\B_{d,w_0})}{\phi_i(\lambda^*;\dis,m-w_0,\B_{d,w_0})}
        &=
        \frac{J_i(\lambda^*;\dis,1,\B_{d,w_0})}{\phi_i(\lambda^*;\dis,1,\B_{d,w_0})}\\
        &=
        \frac{\int_{\nu-\min_{j\in\B_{d,w_0}}\lambda^*_j}^{\infty}\frac{1}{z+\lambda^*_i}f(z+\lambda^*_i)\prod_{j\in\B_{d,w_0}\setminus\qty{i}}F(z+\lambda^*_j)\dd z}{\int_{\nu-\min_{j\in\B_{d,w_0}}\lambda^*_j}^{\infty}f(z+\lambda^*_i)\prod_{j\in\B_{d,w_0}\setminus\qty{i}}F(z+\lambda^*_j)\dd z},
    \end{align*}
    which is monotonically increasing in $\lambda_j$ for all $j\in\B_{d,w_0}\setminus\qty{i}$ by Lemma~\ref{lem:both_increasing}. Taking the limit $\lambda^*_d\to\infty$, we have $\lim_{\lambda^*_d\to\infty}F(z+\lambda^*_d)=1$ and thus
    \begin{align*}
        \frac{J_i(\lambda^*;\dis,m-w_0,\B_{d,w_0})}{\phi_i(\lambda^*;\dis,m-w_0,\B_{d,w_0})}
        &\leq\frac{\int_{\nu-\min_{j\in\B_{d,w_0}}\lambda^*_j}^{\infty}\frac{1}{z+\lambda^*_i}f(z+\lambda^*_i)\prod_{j\in\B_{d,w_0}\setminus\qty{i,d}}F(z+\lambda^*_j)\dd z}{\int_{\nu-\min_{j\in\B_{d,w_0}}\lambda^*_j}^{\infty}f(z+\lambda^*_i)\prod_{j\in\B_{d,w_0}\setminus\qty{i,d}}F(z+\lambda^*_j)\dd z}\\
        &=\frac{J_i(\lambda^*;\dis,m-w_0,\B_{d-1,w_0})}{\phi_i(\lambda^*;\dis,m-w_0,\B_{d-1,w_0})}.
    \end{align*}
    For $w=w_0\in\qty{0}\cup[(m\land i)-1]$ such that $m-w_0\geq 2$, by applyng Lemma~\ref{lem:infty} to base-arm $d$, we obtain
    \begin{equation*}
        \frac{J_i(\lambda^*;\dis,m-w_0,\B_{d,w_0})}{\phi_i(\lambda^*;\dis,m-w_0,\B_{d,w_0})}\leq
        \frac{J_i(\lambda^*;\dis,m-w_0,\B_{d-1,w_0})}{\phi_i(\lambda^*;\dis,m-w_0,\B_{d-1,w_0})}\lor \frac{J_i(\lambda^*;\dis,m-1-w_0,\B_{d-1,w_0})}{\phi_i(\lambda^*;\dis,m-1-w_0,\B_{d-1,w_0})}.
    \end{equation*}
    Combining the above two cases, by \eqref{eq:induction_result1} we have
    \begin{align*}
        \frac{J_i(\lambda;\dis)}{\phi_i(\lambda;\dis)}
        &\leq
        \max_{w\in\qty{0}\cup[(m\land i)-1]}\qty{\frac{J_i(\lambda^*;\dis,m-w,\B_{d-1,w})}{\phi_i(\lambda^*;\dis,m-w,\B_{d-1,w})}, \frac{J_i(\lambda^*;\dis,\qty(m-1-w)\lor 1,\B_{d-1,w})}{\phi_i(\lambda^*;\dis,\qty(m-1-w)\lor 1,\B_{d-1,w})}}\\
        &\leq 
        \max_{\substack{w\in\qty{0}\cup[(m\land i)-1]\\\wm\in[(m\land (d-1))-w]}}\qty{\frac{J_i(\lambda^*;\dis,\wm,\B_{d-1,w})}{\phi_i(\lambda^*;\dis,\wm,\B_{d-1,w})}},
    \end{align*} 
    where the last inequality holds since $m<d$ and $\qty{m-w,\qty(m-1-w)\lor 1}\subset[(m\land (d-1))-w]$ for any $w\in\qty{0}\cup[(m\land i)-1]$. 
    On the other hand, if $m=d$, by Lemma~\ref{lem:infty} and \eqref{eq:induction_result1}, we obtain
    \begin{align*}
        \frac{J_i(\lambda;\dis)}{\phi_i(\lambda;\dis)}
        &\leq
        \max_{w\in\qty{0}\cup[(m\land i)-1]}\qty{\frac{J_i(\lambda^*;\dis,m-1-w,\B_{d-1,w})}{\phi_i(\lambda^*;\dis,m-1-w,\B_{d-1,w})}}\\
        &\leq 
        \max_{\substack{w\in\qty{0}\cup[(m\land i)-1]\\\wm\in[(m\land (d-1))-w]}}\qty{\frac{J_i(\lambda^*;\dis,\wm,\B_{d-1,w})}{\phi_i(\lambda^*;\dis,\wm,\B_{d-1,w})}},
    \end{align*}
    where the last inequality holds since $m-1-w\in[(m\land (d-1))-w]$ for any $w\in\qty{0}\cup[(m\land i)-1]$.
    Therefore, the statement holds for base case $k=1$.
    
    Now we assume, as the inductive hypothesis, that the statement holds for $k=u\leq d-i-1$, i.e.,
    \begin{equation}\label{eq:induction_hypothesis2}
        \frac{J_i(\lambda;\dis)}{\phi_i(\lambda;\dis)}\leq
        \max_{\substack{w\in\qty{0}\cup[(m\land i)-1]\\\wm\in[(m\land (d-u))-w]}}\qty{\frac{J_i(\lambda^*;\dis,\wm,\B_{d-u,w})}{\phi_i(\lambda^*;\dis,\wm,\B_{d-u,w})}}.
    \end{equation}
    If we can prove the statement holds for $k=u+1$, then by induction, the statement holds for all $k\leq d-i$, thereby establishing the desired result in \eqref{eq:induction_statement2}. Now we aim to prove it for $k=u+1\leq d-i$, i.e., we want to show that
    \begin{equation*}
        \frac{J_i(\lambda;\dis)}{\phi_i(\lambda;\dis)}\leq
        \max_{\substack{w\in\qty{0}\cup[(m\land i)-1]\\\wm\in[(m\land (d-u-1))-w]}}\qty{\frac{J_i(\lambda^*;\dis,\wm,\B_{d-u-1,w})}{\phi_i(\lambda^*;\dis,\wm,\B_{d-u-1,w})}}.
    \end{equation*}
    To prove this, by induction hypothesis \eqref{eq:induction_hypothesis2}, we only need to show that the following inequality holds:
    \begin{equation}\label{eq:for_induction_hypothesis2}
        \max_{\substack{w\in\qty{0}\cup[(m\land i)-1]\\\wm\in[(m\land (d-u))-w]}}\qty{\frac{J_i(\lambda^*;\dis,\wm,\B_{d-u,w})}{\phi_i(\lambda^*;\dis,\wm,\B_{d-u,w})}}
        \leq
        \max_{\substack{w\in\qty{0}\cup[(m\land i)-1]\\\wm\in[(m\land (d-u-1))-w]}}\qty{\frac{J_i(\lambda^*;\dis,\wm,\B_{d-u-1,w})}{\phi_i(\lambda^*;\dis,\wm,\B_{d-u-1,w})}}.
    \end{equation}
    We consider each term in the LHS of the above inequality with $w=w_0\in\qty{0}\cup[(m\land i)-1]$ and $\wm=\wm_0\in[(m\land (d-u))-w_0]$ in the following three cases.
    \begin{itemize}
        \item \textbf{Case 1:} $\wm_0=1$.\\
        In this case, since $u\leq d-i-1$ and $w_0\leq (m\land i)-1$, we have 
        $$(d-u-1)-w_0\geq d-(d-i-1)-1-\qty((m\land i)-1)\geq 1=\wm_0.$$
        On the other hand, since $\wm_0\in[(m\land (d-u))-w_0]$, we have $\wm_0\leq m-w_0$. Combining these, it follows that $\wm_0\in[(m\land (d-u-1))-w_0]$. 
        Similarly to the analysis on the base case, by Lemma~\ref{lem:both_increasing}, for any $j\in\B_{d-u,w_0}\setminus\qty{i}$,
        $$J_i(\lambda^*;\dis,\wm_0,\B_{d-u,w_0})/\phi_i(\lambda^*;\dis,\wm_0,\B_{d-u,w_0})$$
        is monotonically increasing in $\lambda_j$. Taking the limit $\lambda^*_{d-u}\to\infty$, it follows that
        \begin{align*}
            \frac{J_i(\lambda^*;\dis,\wm_0,\B_{d-u,w_0})}{\phi_i(\lambda^*;\dis,\wm_0,\B_{d-u,w_0})}
            &\leq
            \frac{J_i(\lambda^*;\dis,\wm_0,\B_{d-u-1,w_0})}{\phi_i(\lambda^*;\dis,\wm_0,\B_{d-u-1,w_0})}\\
            &\leq
            \max_{\substack{w\in\qty{0}\cup[(m\land i)-1]\\\wm\in[(m\land (d-u-1))-w]}}\qty{\frac{J_i(\lambda^*;\dis,\wm,\B_{d-u-1,w})}{\phi_i(\lambda^*;\dis,\wm,\B_{d-u-1,w})}},
        \end{align*}
        where the last inequality holds since $w_0\in\qty{0}\cup[(m\land i)-1]$ and $\wm_0\in[(m\land (d-u-1))-w_0]$.
        \item \textbf{Case 2:} $\wm_0\geq 2$ and $\wm_0\leq (d-u)-w_0-1$.\\
        In this case, since $\wm_0\in[(m\land (d-u))-w_0]$ we have $\wm_0\leq m-w_0$. Combining it with $\wm_0\leq (d-u)-w_0-1$, it follows that $\wm_0\in[(m\land (d-u-1))-w_0]$. Since $\wm_0\geq 2$, we have $\qty{\wm_0-1,\wm_0}\subset[(m\land (d-u-1))-w_0]$. 
        By applying Lemma~\ref{lem:infty} to base-arm $d-u$, we have
        \begin{align*}
            \frac{J_i(\lambda^*;\dis,\wm_0,\B_{d-u,w_0})}{\phi_i(\lambda^*;\dis,\wm_0,\B_{d-u,w_0})}
            &\leq
            \frac{J_i(\lambda^*;\dis,\wm_0-1,\B_{d-u-1,w_0})}{\phi_i(\lambda^*;\dis,\wm_0-1,\B_{d-u-1,w_0})}
            \lor
            \frac{J_i(\lambda^*;\dis,\wm_0,\B_{d-u-1,w_0})}{\phi_i(\lambda^*;\dis,\wm_0,\B_{d-u-1,w_0})}\\
            &\leq
            \max_{\substack{w\in\qty{0}\cup[(m\land i)-1]\\\wm\in[(m\land (d-u-1))-w]}}\qty{\frac{J_i(\lambda^*;\dis,\wm,\B_{d-u-1,w})}{\phi_i(\lambda^*;\dis,\wm,\B_{d-u-1,w})}},
        \end{align*}
        where the last inequality holds since $w_0\in\qty{0}\cup[(m\land i)-1]$ and $\qty{\wm_0-1,\wm_0}\subset[(m\land (d-u-1))-w_0]$.
        \item \textbf{Case 3:} $\wm_0\geq 2$ and $\wm_0=(d-u)-w_0$.\\
        In this case, since $\wm_0\geq 2$ and $\wm_0=(d-u)-w_0$, we have $\wm_0-1=(d-u-1)-w_0\geq 1$. On the other hand, since $\wm_0\in[(m\land (d-u))-w_0]$ we have $\wm_0-1\leq m-w_0$. Combining these, it follows that $\wm_0-1\in[(m\land (d-u-1))-w_0]$.
        By applying Lemma~\ref{lem:infty} to base-arm $d-u$, we have
        \begin{align*}
            \frac{J_i(\lambda^*;\dis,\wm_0,\B_{d-u,w_0})}{\phi_i(\lambda^*;\dis,\wm_0,\B_{d-u,w_0})}
            &\leq
            \frac{J_i(\lambda^*;\dis,\wm_0-1,\B_{d-u-1,w_0})}{\phi_i(\lambda^*;\dis,\wm_0-1,\B_{d-u-1,w_0})}\\
            &\leq
            \max_{\substack{w\in\qty{0}\cup[(m\land i)-1]\\\wm\in[(m\land (d-u-1))-w]}}\qty{\frac{J_i(\lambda^*;\dis,\wm,\B_{d-u-1,w})}{\phi_i(\lambda^*;\dis,\wm,\B_{d-u-1,w})}},
        \end{align*}
        where the last inequality holds since $w_0\in\qty{0}\cup[(m\land i)-1]$ and $\wm_0-1\in[(m\land (d-u-1))-w_0]$.
    \end{itemize}
    Combining the three cases, we see that \eqref{eq:for_induction_hypothesis2} holds, and thus the statement holds for $k=u+1\leq d-i$, completing the inductive step.
    By induction, the statement \eqref{eq:induction_statement2} holds for $k\in[d-i]$. By letting $k=d-i$, we immediately obtain
    \begin{equation}\label{eq:proof_s}
        \frac{J_i(\lambda;\dis)}{\phi_i(\lambda;\dis)}\leq
        \max_{\substack{w\in\qty{0}\cup[(m\land i)-1]\\\wm\in[(m\land i)-w]}}\qty{\frac{J_i(\lambda^*;\dis,\wm,\B_{i,w})}{\phi_i(\lambda^*;\dis,\wm,\B_{i,w})}}.
    \end{equation}
    Note that we have
    \begin{equation}\label{eq:phi_decomposition}
        \phi_i(\lambda^*;\dis,\wm_0,\B_{i,w_0})=\sum_{\theta=1}^{\wm_0}\phi_{i,\theta}(\lambda^*;\dis,\B_{i,w_0})
    \end{equation}
    and
    \begin{equation}\label{eq:J_decomposition}
        J_i(\lambda^*;\dis,\wm_0,\B_{i,w_0})=\sum_{\theta=1}^{\wm_0}J_{i,\theta}(\lambda^*;\dis,\B_{i,w_0}).
    \end{equation}
    Since each term in the RHS of both \eqref{eq:phi_decomposition} and \eqref{eq:J_decomposition} is positive, we have
    \begin{align*}
        \frac{J_i(\lambda^*;\dis,\wm_0,\B_{i,w_0})}{\phi_i(\lambda^*;\dis,\wm_0,\B_{i,w_0})}&=
        \frac{\sum_{\theta=1}^{\wm_0}J_{i,\theta}(\lambda^*;\dis,\B_{i,w_0})}{\sum_{\theta=1}^{\wm_0}\phi_{i,\theta}(\lambda^*;\dis,\B_{i,w_0})}\\
        &\leq\max_{\theta\in[\wm_0]}\qty{\frac{J_{i,\theta}(\lambda^*;\dis,\B_{i,w_0})}{\phi_{i,\theta}(\lambda^*;\dis,\B_{i,w_0})}}.\numberthis\label{eq:max_theta}
    \end{align*}
    Combining \eqref{eq:proof_s} and \eqref{eq:max_theta}, we have
    \begin{align*}
        \frac{J_i(\lambda;\dis)}{\phi_i(\lambda;\dis)}&\leq
        \max_{\substack{w\in\qty{0}\cup[(m\land i)-1]\\\wm\in[(m\land i)-w]}}\qty{\max_{\theta\in[\wm]}\qty{\frac{J_{i,\theta}(\lambda^*;\dis,\B_{i,w})}{\phi_{i,\theta}(\lambda^*;\dis,\B_{i,w})}}}\\
        &=\max_{\substack{w\in\qty{0}\cup[(m\land i)-1]\\\theta\in[(m\land i)-w]}}\qty{\frac{J_{i,\theta}(\lambda^*;\dis,\B_{i,w})}{\phi_{i,\theta}(\lambda^*;\dis,\B_{i,w})}},
    \end{align*}
    which concludes the proof.
\end{proof}

\subsection{Proof of Lemma~\ref{lem:sigma_i}}

\subsubsection{Pareto Distribution}
Let us consider the case $\dis=\Pdis$. Recall that the probability density function and cumulative distribution function of Pareto distribution are given by
\begin{equation*}
    f(x) = \frac{\alpha}{x^{\alpha+1}},\quad F(x) = 1-x^{-\alpha},\quad x\geq 1.
\end{equation*}
Then, for $w\in\qty{0}\cup[(m\land i)-1],\theta\in[(m\land i)-w]$, we have
\begin{align*}
    J_{i,\theta}(\lambda^*;\Pdis,\B_{i,w})&=
    \binom{i-w-1}{\theta-1}\int_{1-\lambda_\theta}^\infty \frac{f(z+\lambda_i)}{z+\lambda_i}\qty(1-F(z+\lambda_i))^{\theta-1}F^{i-w-\theta}(z+\lambda_i)\dd z\\
    &=\binom{i-w-1}{\theta-1}\int_{1}^\infty \frac{f(z)}{z}\qty(1-F(z))^{\theta-1}F^{i-w-\theta}(z)\dd z
\end{align*}
and
\begin{align*}
    \phi_{i,\theta}(\lambda^*;\Pdis,\B_{i,w})&=
    \binom{i-w-1}{\theta-1}\int_{1-\lambda_\theta}^\infty f(z+\lambda_i)\qty(1-F(z+\lambda_i))^{\theta-1}F^{i-w-\theta}(z+\lambda_i)\dd z\\
    &=\binom{i-w-1}{\theta-1}\int_{1}^\infty f(z)\qty(1-F(z))^{\theta-1}F^{i-w-\theta}(z)\dd z.
\end{align*}
Then, it holds that
\begin{align*}
    J_{i,\theta}(\lambda^*;\Pdis,\B_{i,w}) 
    &=\binom{i-w-1}{\theta-1}\int_1^\infty \frac{\alpha}{z^{\alpha\theta+2}}\qty(1-z^{-\alpha})^{i-w-\theta} \dd z\\
    &=\binom{i-w-1}{\theta-1}\int_0^1 w^{\theta-1+\frac{1}{\alpha}}(1-w)^{i-w-\theta}\dd w\\
    &=\binom{i-w-1}{\theta-1}B\qty(\theta+\frac{1}{\alpha},i+1-w-\theta).
\end{align*}
where $B\qty(a,b)=\int_0^1 t^{a-1}(1-t)^{b-1}\dd t$ denotes the Beta function. Similarly, we have
\begin{align*}
    \phi_{i,\theta}(\lambda^*;\Pdis,\B_{i,w}) &= \binom{i-w-1}{\theta-1}\int_1^\infty \frac{\alpha}{z^{\alpha\theta+1}}\qty(1-z^{-\alpha})^{i-w-\theta} \dd z\\
    &=\binom{i-w-1}{\theta-1}\int_0^1 w^{\theta-1}(1-w)^{i-w-\theta}\dd w\\
    &=\binom{i-w-1}{\theta-1}B\qty(\theta,i+1-w-\theta).
\end{align*}
Therefore, we have
\begin{equation}\label{eq:B_RHS}
    \frac{J_{i,\theta}(\lambda^*;\Pdis,\B_{i,w})}{\phi_{i,\theta}(\lambda^*;\Pdis,\B_{i,w})} = \frac{B\qty(\theta+\frac{1}{\alpha},i+1-w-\theta)}{B\qty(\theta,i+1-w-\theta)}.
\end{equation}
Similarly to the proof of \citet{pmlr-v247-lee24a}, we bound \eqref{eq:B_RHS} as follows. For $\alpha>1$, we have
\begin{align*}
    &\frac{B(\theta + \frac{1}{\alpha}, i+1-w-\theta)}{B(\theta, i+1-w-\theta)}\\
    =& \frac{\Gamma(\theta + \frac{1}{\alpha}) \Gamma(i+1-w-\theta)}{\Gamma(i+1+\frac{1}{\alpha}-w)}
        \frac{\Gamma(i+1-w)}{\Gamma(\theta)\Gamma(i+1-w-\theta)}
        \tag*{(by $B(a,b) = \frac{\Gamma(a)\Gamma(b)}{\Gamma(a+b)}$)} \\
    =& \frac{\Gamma(\theta + \frac{1}{\alpha}) }{\Gamma(i+1+\frac{1}{\alpha}-w)}
    \frac{\Gamma(i+1-w)}{\Gamma(\theta)}\\
    =& \frac{1}{i+\frac{1}{\alpha}-w}
        \frac{\Gamma(\theta + \frac{1}{\alpha})}{\Gamma(\theta)}
        \frac{\Gamma(i+1-w)}{\Gamma(i+\frac{1}{\alpha}-w)}
        \tag*{(by $\Gamma(n) = (n - 1)\Gamma(n - 1)$)} \\
    \leq& \frac{1}{i+\frac{1}{\alpha}-w}
        \left( \theta + \frac{1}{\alpha} \right)^{\frac{1}{\alpha}}
        \left( i+1-w \right)^{1 - \frac{1}{\alpha}}
        \tag*{(by Gautschi's inequality)} \\
    =& \frac{i+1-w}{i+\frac{1}{\alpha}-w}
        \left( \frac{\theta + \frac{1}{\alpha}}{i+1-w} \right)^{\frac{1}{\alpha}} \\
    \leq& \frac{2\alpha}{\alpha + 1}
    \left( \frac{\theta + \frac{1}{\alpha}}{i+1-w} \right)^{\frac{1}{\alpha}}. \tag*{(equality holds when $w=i-1$)}
\end{align*}
Since $w\in\qty{0}\cup[(m\land i)-1],\theta\in[(m-w)\land(i-w)]$, we have 
\begin{equation*}
    \max_{\substack{w\in\qty{0}\cup[(m\land i)-1]\\\theta\in[(m-w)\land(i-w)]}}\frac{\theta + \frac{1}{\alpha}}{i+1-w}
    = \max_{w\in\qty{0}\cup[(m\land i)-1]}\frac{(m\land i)+ \frac{1}{\alpha}-w}{i+1-w}
    =\frac{(m\land i) + \frac{1}{\alpha}}{i+1}.
\end{equation*}
Therefore, we have
\begin{equation*}
    \max_{\substack{w\in\qty{0}\cup[(m\land i)-1]\\\theta\in[(m-w)\land(i-w)]}}\frac{B(\theta + \frac{1}{\alpha}, i+1-w-\theta)}{B(\theta, i+1-w-\theta)}
    \leq
    \frac{2\alpha}{\alpha+1}\qty(\frac{(i\land m)+\frac{1}{\alpha}}i)^{\frac{1}{\alpha}}.
\end{equation*}
Recall that $\sigma_i=i$ as previously noted for notational simplicity. By Lemma~\ref{lem:lambda_star}, for any $i\in[d]$, we have
\begin{equation*}
    \frac{J_i(\lambda;\Pdis)}{\phi_i(\lambda;\Pdis)}\leq
    \max_{\substack{w\in\qty{0}\cup[(m\land i)-1]\\\theta\in[(m-w)\land(i-w)]}}\frac{J_{i,\theta}(\lambda^*;\Pdis,\B_{i,w})}{\phi_{i,\theta}(\lambda^*;\Pdis,\B_{i,w})}\leq \frac{2\alpha}{\alpha+1}\qty(\frac{(\sigma_i\land m)+\frac{1}{\alpha}}{\sigma_i})^{\frac{1}{\alpha}}.
\end{equation*}

\subsubsection{Fr\'{e}chet Distribution}

Before proving the statement in the case $\dis=\Fdis$, we need to give the following lemma.
\begin{lemma}\label{lem:frechet_simplify}
    Let $F(x)$ denote the cumulative distribution function of Fr\'{e}chet distribution with shape $\alpha$. For $\lambda_i\geq 0$, we have
    \begin{equation}\label{eq:frechet_to_prove}
        \frac{\int_0^\infty \frac{f(z+\lambda_i)}{z+\lambda_i}\qty(1-F(z+\lambda_i))^p F^q(z+\lambda_i) \dd z}
        {\int_0^\infty f(z+\lambda_i)\qty(1-F(z+\lambda_i))^p F^q(z+\lambda_i) \dd z}
        \leq
        \frac{\int_0^\infty \frac{f(z+\lambda_i)}{(z+\lambda_i)^{\alpha p+1}}F^q(z+\lambda_i) \dd z}
        {\int_0^\infty \frac{f(z+\lambda_i)}{(z+\lambda_i)^{\alpha p}}F^q(z+\lambda_i) \dd z}.
    \end{equation}
\end{lemma}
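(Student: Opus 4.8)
The plan is to rewrite both ratios in \eqref{eq:frechet_to_prove} against a common measure and then reduce the claim to the elementary fact that a decreasing function and a nondecreasing function are negatively correlated. First I would substitute $x = z+\lambda_i$, so that every integral runs over $x\in[\lambda_i,\infty)$, and use $1-F(x) = 1-e^{-1/x^{\alpha}}$ to factor $(1-F(x))^{p} = x^{-\alpha p}\,h(x)$ with
\begin{equation*}
    h(x) = \Bigl(x^{\alpha}\bigl(1-e^{-1/x^{\alpha}}\bigr)\Bigr)^{p}.
\end{equation*}
Introducing the positive, finite base measure $\dd\rho(x) = x^{-\alpha p} f(x) F^{q}(x)\,\dd x$ on $[\lambda_i,\infty)$, the left-hand side of \eqref{eq:frechet_to_prove} becomes exactly $\bigl(\int \tfrac1x h\,\dd\rho\bigr)\big/\bigl(\int h\,\dd\rho\bigr)$ and the right-hand side becomes exactly $\bigl(\int \tfrac1x\,\dd\rho\bigr)\big/\bigl(\int \dd\rho\bigr)$. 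For the exponents arising in the application ($p\ge 0$, $q\ge 0$) all four integrals are finite and strictly positive --- the integrands are bounded near $\infty$ by a constant multiple of $x^{-\alpha(p+1)-1}$ and vanish super-exponentially at the left endpoint when $\lambda_i=0$ --- so the divisions are legitimate; when $p=0$ the two sides coincide and there is nothing to prove.

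Next I would check that $h$ is nondecreasing on $(0,\infty)$. Writing $g(t) = \int_0^1 e^{-ts}\,\dd s = (1-e^{-t})/t$, which is strictly decreasing in $t>0$ because each integrand $e^{-ts}$ is, one has $h(x) = g(1/x^{\alpha})^{p}$; since $t=1/x^{\alpha}$ is decreasing in $x$ for $\alpha>0$ and $p\ge 0$, it follows that $h$ is nondecreasing in $x$. With $h$ nondecreasing and $1/x$ decreasing, \eqref{eq:frechet_to_prove} is equivalent, after clearing the positive denominators, to
\begin{equation*}
    \Bigl(\int \tfrac1x h\,\dd\rho\Bigr)\Bigl(\int \dd\rho\Bigr)\ \le\ \Bigl(\int \tfrac1x\,\dd\rho\Bigr)\Bigl(\int h\,\dd\rho\Bigr),
\end{equation*}
and the difference of the right side and the left side equals, by the standard two-variable symmetrization,
\begin{equation*}
    -\frac12\iint \Bigl(\tfrac1x - \tfrac1y\Bigr)\bigl(h(x) - h(y)\bigr)\,\dd\rho(x)\,\dd\rho(y)\ \ge\ 0,
\end{equation*}
because $\tfrac1x - \tfrac1y$ and $h(x)-h(y)$ always carry opposite signs ($1/x$ is decreasing, $h$ is nondecreasing). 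This is exactly the claimed inequality.

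I expect no serious obstacle: the only mildly delicate points are the monotonicity of $h$ (handled cleanly by the integral representation of $g$) and the finiteness and strict positivity of the four integrals (handled by $\alpha>1$ together with the tail and endpoint behaviour of the Fr\'{e}chet density). The essential content is the factorization $(1-F(x))^{p} = x^{-\alpha p} h(x)$ in the first step, which converts the asymmetry of \eqref{eq:frechet_to_prove} into a single-measure correlation inequality; everything after that is the routine Chebyshev-type argument.
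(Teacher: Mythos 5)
Your proposal is correct and follows essentially the same route as the paper: both factor $(1-F)^{p}$ into $x^{-\alpha p}$ times the increasing function $\bigl(x^{\alpha}(1-e^{-1/x^{\alpha}})\bigr)^{p}$, cross-multiply, and conclude by a symmetrized two-variable (Chebyshev-type) correlation argument. The only cosmetic difference is that you verify the monotonicity of $(1-e^{-t})/t$ via its integral representation, whereas the paper differentiates $t(1-e^{-1/t})$ and uses $1/t\le e^{1/t}-1$.
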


\begin{proof}
    By letting $h(x)=f(z+\lambda_i)F^q(z+\lambda_i)$, the inequality \eqref{eq:frechet_to_prove} can be rewritten as
    \begin{equation*}
        \frac{\int_0^\infty \frac{1}{z+\lambda_i}\qty(1-F(z+\lambda_i))^p h(z) \dd z}
        {\int_0^\infty \qty(1-F(z+\lambda_i))^p h(z) \dd z}
        \leq
        \frac{\int_0^\infty \frac{1}{(z+\lambda_i)^{\alpha p+1}}h(z) \dd z}
        {\int_0^\infty \frac{1}{(z+\lambda_i)^{\alpha p}}h(z) \dd z}.
    \end{equation*}
    Equivalently, multiplying both sides by
    $$\int_0^\infty \qty(1-F(z+\lambda_i))^p h(z) \dd z\cdot\int_0^\infty \frac{1}{(z+\lambda_i)^{\alpha p}}h(z) \dd z,$$
    we arrive at
    \begin{multline}\label{eq:frechet_to_prove2}
        \int_0^\infty \frac{h(z)}{(z+\lambda_i)}\qty(1-F(z+\lambda_i))^p \dd z 
        \int_0^\infty \frac{h(z)}{(z+\lambda_i)^{\alpha p}} \dd z
        \leq\\
        \int_0^\infty \frac{h(z)}{(z+\lambda_i)^{\alpha p+1}} \dd z
        \int_0^\infty h(z)\qty(1-F(z+\lambda_i))^p \dd z,
    \end{multline}
    and thus we only need to prove it to conclude the proof. The LHS of \eqref{eq:frechet_to_prove2} can be expressed as
    \begin{align*}
        &\int_0^\infty \frac{h(z)}{(z+\lambda_i)}\qty(1-F(z+\lambda_i))^p \dd z 
        \int_0^\infty \frac{h(z)}{(z+\lambda_i)^{\alpha p}} \dd z\\
        =&\iint_{z,w \geq 0} \frac{h(z)h(w)\qty(1-\exp\qty(-1/(z+\lambda_i)^\alpha))^p}{(z+\lambda_i)(w+\lambda_i)^{\alpha p}} \dd z \dd w\\
        =&\frac{1}{2}\iint_{z,w \geq 0} h(z)h(w)\qty(\frac{\qty(1-\exp\qty(-1/(z+\lambda_i)^\alpha))^p}{(z+\lambda_i)(w+\lambda_i)^{\alpha p}}+\frac{\qty(1-\exp\qty(-1/(w+\lambda_i)^\alpha))^p}{(w+\lambda_i)(z+\lambda_i)^{\alpha p}}) \dd z \dd w
    \end{align*}
    and the RHS of \eqref{eq:frechet_to_prove2} can be expressed as
    \begin{align*}
        &\int_0^\infty \frac{h(z)}{(z+\lambda_i)^{\alpha p+1}} \dd z
        \int_0^\infty h(z)\qty(1-F(z+\lambda_i))^p \dd z\\
        =&\iint_{z,w \geq 0} \frac{h(z)h(w)\qty(1-\exp\qty(-1/(w+\lambda_i)^\alpha))^n}{(z+\lambda_i)^{\alpha n+1}} \dd z \dd w\\
        =&\frac{1}{2}\iint_{z,w \geq 0} h(z)h(w)\qty(\frac{\qty(1-\exp\qty(-1/(w+\lambda_i)^\alpha))^n}{(z+\lambda_i)^{\alpha n+1}}+\frac{\qty(1-\exp\qty(-1/(z+\lambda_i)^\alpha))^n}{(w+\lambda_i)^{\alpha n+1}}) \dd z \dd w.
    \end{align*}
    By an elementary calculation we can see
    \begin{align*}
        &\frac{\qty(1-\exp\qty(-1/(z+\lambda_i)^\alpha))^p}{(z+\lambda_i)(w+\lambda_i)^{\alpha p}}+\frac{\qty(1-\exp\qty(-1/(w+\lambda_i)^\alpha))^p}{(w+\lambda_i)(z+\lambda_i)^{\alpha p}}\\
        -&\frac{\qty(1-\exp\qty(-1/(w+\lambda_i)^\alpha))^p}{(z+\lambda_i)^{\alpha p+1}}-\frac{\qty(1-\exp\qty(-1/(z+\lambda_i)^\alpha))^p}{(w+\lambda_i)^{\alpha p+1}}\\
        =& \frac{(z+\lambda_i)^{\alpha p}\qty(1-\exp\qty(-1/(z+\lambda_i)^\alpha))^p - (w+\lambda_i)^{\alpha p}\qty(1-\exp\qty(-1/(w+\lambda_i)^\alpha))^p}{(z+\lambda_i)^{\alpha p+1}(w+\lambda_i)^{\alpha p}} \\
        -&\frac{(w+\lambda_i)^{\alpha p}\qty(1-\exp\qty(-1/(w+\lambda_i)^\alpha))^p - (z+\lambda_i)^{\alpha p}\qty(1-\exp\qty(-1/(z+\lambda_i)^\alpha))^p}{(w+\lambda_i)^{\alpha p+1}(z+\lambda_i)^{\alpha p}} \\
        =&\frac{w-z}{(z+\lambda_i)^{\alpha p+1}(w+\lambda_i)^{\alpha p+1}}\qty((z+\lambda_i)^{\alpha p}\qty(1-\exp\qty(-1/(z+\lambda_i)^\alpha))^p - (w+\lambda_i)^{\alpha p}\qty(1-\exp\qty(-1/(w+\lambda_i)^\alpha))^p) \numberthis\label{eq:frechet_zw_mono}.
    \end{align*}
    By letting $t(x)=(x+\lambda_i)^\alpha$, we have
    \begin{equation}\label{eq:change_variable_mono}
        (x+\lambda_i)^{\alpha n}\qty(1-\exp\qty(-1/(x+\lambda_i)^\alpha))^n=\qty(t\qty(1-e^{-1/t}))^n.
    \end{equation}
    Since $t(x)$ is monotonically increasing in $x$, and $t\qty(1-e^{-1/t})>0$, the LHS of \eqref{eq:change_variable_mono} is monotonic in the same direction as $t\qty(1-e^{-1/t})$, whose derivative is expressed as
    \begin{equation*}
        1-e^{-1/t}-\frac{1}{t}e^{-1/t} \geq 1-e^{-1/t}-(e^{1/t}-1)e^{-1/t}=0,
    \end{equation*}
    where the inequality holds since $\frac{1}{t}\leq e^{1/t}-1$. Therefore, the LHS of \eqref{eq:change_variable_mono} is monotonically increasing in $x$. This implies that the expression in \eqref{eq:frechet_zw_mono} is non-positivive, which concludes the proof.
\end{proof}

We now prove Lemma~\ref{lem:sigma_i} in the case $\dis=\Fdis$ by applying Lemma~\ref{lem:frechet_simplify}. Recall that the probability density function and cumulative distribution function of Pareto distribution are given by
\begin{equation*}
    f(x) = \frac{\alpha}{x^{\alpha+1}}e^{-1/x^\alpha},\quad F(x) = e^{-1/x^\alpha},\quad x\geq 0.
\end{equation*}
Then, for $w\in\qty{0}\cup[(m\land i)-1],\theta\in[(m\land i)-w]$, we have
\begin{align*}
    J_{i,\theta}(\lambda^*;\Fdis,\B_{i,w})&=
    \binom{i-w-1}{\theta-1}\int_{-\lambda_\theta}^\infty \frac{f(z+\lambda_i)}{z+\lambda_i}\qty(1-F(z+\lambda_i))^{\theta-1}F^{i-w-\theta}(z+\lambda_i)\dd z\\
    &=\binom{i-w-1}{\theta-1}\int_0^\infty \frac{f(z)}{z}\qty(1-F(z))^{\theta-1}F^{i-w-\theta}(z)\dd z
\end{align*}
and
\begin{align*}
    \phi_{i,\theta}(\lambda^*;\Fdis,\B_{i,w})&=
    \binom{i-w-1}{\theta-1}\int_{-\lambda_\theta}^\infty f(z+\lambda_i)\qty(1-F(z+\lambda_i))^{\theta-1}F^{i-w-\theta}(z+\lambda_i)\dd z\\
    &=\binom{i-w-1}{\theta-1}\int_0^\infty f(z)\qty(1-F(z))^{\theta-1}F^{i-w-\theta}(z)\dd z.
\end{align*}
Define
\begin{equation*}
    I_{i,n}\qty(q;\Fdis)=\int_0^\infty \frac{1}{z^n}e^{-q/z^\alpha}.
\end{equation*}
Then, by Lemma~\ref{lem:frechet_simplify}, we immediately obtain
\begin{equation}\label{eq:frechet_I}
    \frac{J_{i,\theta}(\lambda^*;\Fdis,\B_{i,w})}{\phi_{i,\theta}(\lambda^*;\Fdis,\B_{i,w})}
    \leq
    \frac{I_{i,\alpha\theta+2}\qty(i+1-w-\theta;\Fdis)}{I_{i,\alpha\theta+1}\qty(i+1-w-\theta;\Fdis)}.
\end{equation}
Similarly to the proofs of \citet{pmlr-v201-honda23a} and \citet{pmlr-v247-lee24a}, we bound the RHS of \eqref{eq:frechet_I} as follows. By letting $u=q/z^\alpha$, both $I_{i,\alpha\theta+2}(q;\alpha)$ and $I_{i,\alpha\theta+1}(q;\alpha)$ can be expressed by Gamma function $\Gamma(k)=\int_0^\infty e^{-t}t^{k-1} \dd t$ as
\begin{align*}
    I_{i,\alpha\theta+2}\qty(q;\Fdis)&=\int_0^\infty \frac{1}{z^{\alpha\theta+2}}\exp\qty(-q/z^\alpha) \dd z \\
    &=\frac{1}{\alpha q}\int_0^\infty \qty(\frac{u}{q})^{\theta+\frac{1}{\alpha}-1}e^{-u} \dd u\\
    &=\frac{1}{\alpha q^{\theta+\frac{1}{\alpha}}}\int_0^\infty u^{\theta+\frac{1}{\alpha}-1}e^{-u} \dd u\\
    &=\frac{1}{\alpha q^{\theta+\frac{1}{\alpha}}}\Gamma\qty(\theta+\frac{1}{\alpha}),
\end{align*}
and
\begin{align*}
    I_{i,\alpha\theta+1}\qty(q;\Fdis)&=\int_0^\infty \frac{1}{z^{\alpha\theta+1}}\exp\qty(-q/z^\alpha) \dd z \\
    &=\frac{1}{\alpha q}\int_0^\infty \qty(\frac{u}{q})^{\theta-1}e^{-u} \dd u\\
    &=\frac{1}{\alpha q^{\theta}}\int_0^\infty u^{\theta-1}e^{-u} \dd u\\
    &=\frac{1}{\alpha q^{\theta}}\Gamma\qty(\theta).
\end{align*}
Replacing $q$ with $i+1-w-\theta$, we have
\begin{equation}\label{eq:frechet_ratio}
    \frac{I_{i,\alpha\theta+2}\qty(i+1-w-\theta;\Fdis)}{I_{i,\alpha\theta+1}\qty(i+1-w-\theta;\Fdis)}=\frac{1}{\sqrt[\alpha]{i+1-w-\theta} }\frac{\Gamma\qty(\theta+\frac{1}{\alpha})}{\Gamma\qty(\theta)}.
\end{equation}
By Lemma~\ref{lem:Gautschi}, Gautschi's inequality, we have
\begin{equation*}
    \frac{\Gamma\qty(\theta+\frac{1}{\alpha})}{\Gamma\qty(\theta)}
    \leq \qty(\theta+\frac{1}{\alpha})^{\frac{1}{\alpha}}.
\end{equation*}
Combining this result with \eqref{eq:frechet_ratio}, we obtain
\begin{equation*}
    \frac{I_{i,\alpha\theta+2}\qty(i+1-w-\theta;\Fdis)}{I_{i,\alpha\theta+1}\qty(i+1-w-\theta;\Fdis)}
    \leq\qty(\frac{\theta+\frac{1}{\alpha}}{i+1-w-\theta})^{\frac{1}{\alpha}}.
\end{equation*}
Since $w\in\qty{0}\cup[(m\land i)-1],\theta\in[(m-w)\land(i-w)]$, we have
\begin{align*}
    \max_{\substack{w\in\qty{0}\cup[(m\land i)-1]\\\theta\in[(m-w)\land(i-w)]}}\frac{\theta+\frac{1}{\alpha}}{i+1-w-\theta}
    &=\max_{w\in\qty{0}\cup[(m\land i)-1]}\frac{(m\land i)+\frac{1}{\alpha}-w}{i+1-(m\land i)}\\
    &=\frac{(m\land i)+\frac{1}{\alpha}}{i+1-(m\land i)}\\
    &=\frac{(m\land i)+\frac{1}{\alpha}}{(i-m+1)\lor 1}.
\end{align*}
Recall that $\sigma_i=i$ as previously noted for notational simplicity. By Lemma~\ref{lem:lambda_star} and \eqref{eq:frechet_I}, for any $i\in[d]$, we have
\begin{align*}
    \frac{J_i(\lambda;\Fdis)}{\phi_i(\lambda;\Fdis)}&\leq
    \max_{\substack{w\in\qty{0}\cup[(m\land i)-1]\\\theta\in[(m-w)\land(i-w)]}}\frac{J_{i,\theta}(\lambda^*;\Pdis,\B_{i,w})}{\phi_{i,\theta}(\lambda^*;\Pdis,\B_{i,w})}\\
    &\leq\max_{\substack{w\in\qty{0}\cup[(m\land i)-1]\\\theta\in[(m-w)\land(i-w)]}}\frac{I_{i,\alpha\theta+2}(\lambda^*,i+1-w-\theta;\Fdis)}{I_{i,\alpha\theta+1}(\lambda^*,i+1-w-\theta;\Fdis)}\\
    &\leq \qty(\frac{(m\land \sigma_i)+\frac{1}{\alpha}}{(\sigma_i-m+1)\lor 1})^{\frac{1}{\alpha}}.
\end{align*}

\subsection{Proof of Lemma~\ref{lem:each_term}}

Following the proofs of \citet{pmlr-v201-honda23a} and \citet{pmlr-v247-lee24a}, we extend the statement to combinatorial semi-bandit setting.

\begin{proof}
    Define
    \begin{equation*}
        \underline{\Omega}=\qty{r:\qty[\argmin\limits_{a \in \nec} \left\{a^\top (\eta (\hat{L}_t + (\ell_{t,i}\widehat{w_{t,i}^{-1}}) e_i) - r)\right\}]_i=1}
    \end{equation*}
    and
    \begin{equation*}
        \overline{\Omega } =\qty{r:\qty[\argmin\limits_{a \in \nec} \left\{a^\top (\eta (\hat{L}_t + \hat{\ell}_t) - r)\right\}]_i=1}.
    \end{equation*}
    Then, we have
    \begin{equation*}
        \phi_i\qty(\eta \qty(\hat{L}_t + \qty(\ell_{t,i}\widehat{w_{t,i}^{-1}}) e_i);\dis)=\sP_{r\sim\dis}\qty(\underline{\Omega}),\quad \phi_i\qty(\eta \qty(\hat{L}_t+\hat{\ell}_t);\dis)=\sP_{r\sim\dis}\qty(\overline{\Omega}).
    \end{equation*}
    Since $\underline{\Omega}\subset\overline{\Omega}$, we immediately have
    \begin{equation*}
        \phi_i\qty(\eta \qty(\hat{L}_t + \qty(\ell_{t,i}\widehat{w_{t,i}^{-1}}) e_i);\dis)\leq\phi_i\qty(\eta \qty(\hat{L}_t+\hat{\ell}_t);\dis),
    \end{equation*}
    with which we have
    \begin{align*}
        \phi_i\qty(\eta \hat{L}_t;\dis)-\phi_i\qty(\eta \qty(\hat{L}_t+\hat{\ell}_t);\dis)
        &\leq\phi_i\qty(\eta \hat{L}_t;\dis) - \phi_i\qty(\eta \qty(\hat{L}_t + \qty(\ell_{t,i}\widehat{w_{t,i}^{-1}}) e_i);\dis)\\ 
        &= \int_0^{\eta \ell_{t,i} \widehat{w_{t,i}^{-1}}} -\phi_i'\qty(\eta \hat{L}_t + x e_i;\dis) \dd x.\numberthis\label{eq:phi_i_diff}
    \end{align*}
    Recalling that $\phi_i(\lambda;\dis)$ is expressed as
    \begin{equation*}
        \phi_i(\lambda;\dis)=
        \sum_{\theta=1}^{m}\int_{\nu-\widetilde{\lambda}_\theta}^\infty f(z+\lambda_i)\sum_{\bm{v}\in\mathcal{S}_{i,\theta}}\qty(\prod_{j:v_j=1}\qty(1-F(z+\lambda_j))\prod_{j:v_j=0,j\neq i}F(z+\lambda_j)) \dd z,
    \end{equation*}
    we see that $\phi'_i(\lambda;\dis)=\frac{\partial \phi_i}{\partial \lambda_i}(\lambda;\dis)$ is expressed as
    \begin{equation*}
        \phi'_i(\lambda;\dis)=\\
        \sum_{\theta=1}^{m}\int_{\nu-\widetilde{\lambda}_\theta}^\infty f'(z+\lambda_i)\sum_{\bm{v}\in\mathcal{S}_{i,\theta}}\qty(\prod_{j:v_j=1}\qty(1-F(z+\lambda_j))\prod_{j:v_j=0,j\neq i}F(z+\lambda_j)) \dd z.
    \end{equation*}
    Now we divide the proof into two cases.
    \item \paragraph{Fr\'{e}chet distribution}
    When $\dis=\Fdis$, since for $x>0$,
    \begin{equation*}
        f'(x)=\qty(-\frac{\alpha(\alpha+1)}{x^{\alpha+2}}+\frac{\alpha^2}{x^{2(\alpha+1)}})e^{-1/x^\alpha},
    \end{equation*}
    we have
    \begin{equation*}
        -f'(x)=\qty(\frac{\alpha(\alpha+1)}{x^{\alpha+2}}-\frac{\alpha^2}{x^{2(\alpha+1)}})e^{-1/x^\alpha}\leq
        \frac{\alpha(\alpha+1)}{x^{\alpha+2}}e^{-1/x^\alpha}
        =\frac{\alpha+1}{x}f(x).
    \end{equation*}
    Therefore, by \eqref{eq:phi_i_diff} we have
    \begin{align*}
        \phi_i\qty(\eta \hat{L}_t;\Fdis)-\phi_i\qty(\eta \qty(\hat{L}_t+\hat{\ell}_t);\Fdis)&\leq
        (\alpha+1)\int_0^{\eta \ell_{t,i} \widehat{w_{t,i}^{-1}}} J_i\qty(\eta \hat{L}_t + x e_i;\Fdis) \dd x \\
        &\leq (\alpha+1)\int_0^{\eta \ell_{t,i} \widehat{w_{t,i}^{-1}}} J_i\qty(\eta \hat{L}_t;\Fdis) \dd x\numberthis\label{eq:J_mono_frechet}\\
        &=(\alpha+1)\eta \ell_{t,i}  J_i\qty(\eta \hat{L}_t;\Fdis)\widehat{w_{t,i}^{-1}},
    \end{align*}
    where \eqref{eq:J_mono_frechet} follows from the monotonicity of $J_i\qty(\eta \hat{L}_t;\Fdis)$.
    \item \paragraph{Pareto distribution}
    When $\dis=\Pdis$, since for $x>1$,
    \begin{equation*}
        f'(x)=-\alpha(\alpha+1)x^{-(\alpha+2)},
    \end{equation*}
    by \eqref{eq:phi_i_diff} we have
    \begin{align*}
        \phi_i\qty(\eta \hat{L}_t;\Pdis)-\phi_i\qty(\eta \qty(\hat{L}_t+\hat{\ell}_t);\Pdis)&=
        (\alpha+1)\int_0^{\eta \ell_{t,i} \widehat{w_{t,i}^{-1}}} J_i\qty(\eta \hat{L}_t + x e_i;\Pdis) \dd x \\
        &\leq (\alpha+1)\int_0^{\eta \ell_{t,i} \widehat{w_{t,i}^{-1}}} J_i\qty(\eta \hat{L}_t;\Pdis) \dd x\numberthis\label{eq:J_mono_pareto}\\
        &=(\alpha+1)\eta \ell_{t,i}  J_i\qty(\eta \hat{L}_t;\Pdis)\widehat{w_{t,i}^{-1}},
    \end{align*}
    where \eqref{eq:J_mono_pareto} follows from the monotonicity of $J_i\qty(\eta \hat{L}_t;\Pdis)$.

    Here note that $\widehat{{w_{t,i}^{-1}}}$ follows the geometric distribution with expectation $1/w_{t,i}$, given $\hat{L}_t$ and $a_{t,i}$, which satisfies
    \begin{equation}\label{eq:square_expectation}
        \E\qty[\widehat{{w_{t,i}^{-1}}}^2\m\hat{L}_t,a_{t,i}]=\frac{2}{w_{t,i}^2}-\frac{1}{w_{t,i}}\leq \frac{2}{w_{t,i}^2}.
    \end{equation}
    Since $\hat{\ell}_{t,i}=\qty(\ell_{t,i}\widehat{{w_{t,i}^{-1}}})e_i$ when $a_{t,i}=1$, for $\dis\in\qty{\Fdis,\Pdis}$ we obtain
    \begin{align*}
        &\E\qty[\hat{\ell}_{t,i}\qty(\phi_i\qty(\eta \hat{L}_t;\dis)-\phi_i\qty(\eta \qty(\hat{L}_t+\hat{\ell}_t);\dis))\m \hat{L}_t]\\
        \leq&\E\qty[\ind\qty[a_{t,i}=1]\ell_{t,i}\widehat{{w_{t,i}^{-1}}}\cdot(\alpha+1)\eta \ell_{t,i}  J_i\qty(\eta \hat{L}_t;\dis)\widehat{w_{t,i}^{-1}}\m \hat{L}_t]\\
        \leq&2(\alpha+1)\eta\E\qty[w_{t,i}\frac{\ell^2_{t,i}J_i\qty(\eta \hat{L}_t;\dis)}{w^2_{t,i}}\m \hat{L}_t]\\
        \leq&2(\alpha+1)\eta\E\qty[\frac{J_i\qty(\eta \hat{L}_t;\dis)}{\phi_i\qty(\eta \hat{L}_t;\dis)}\m \hat{L}_t]\\
        \leq&
        \begin{cases}
            2(\alpha+1)\eta\qty(\frac{(\sigma_i\land m)+\frac{1}{\alpha}}{(\sigma_i-m+1)\lor 1})^{\frac{1}{\alpha}}, & \text{if } \dis=\Fdis,\\
            4\alpha\eta\qty(\frac{(\sigma_i\land m)+\frac{1}{\alpha}}{\sigma_i})^{\frac{1}{\alpha}}, & \text{if } \dis=\Pdis.
        \end{cases}
        \tag*{(by Lemma~\ref{lem:sigma_i})}
    \end{align*}
\end{proof}

\subsection{Proof of Lemma~\ref{lem:stability}}

By Lemma~\ref{lem:each_term}, we have the following result.

\subsubsection{Fr\'{e}chet Distribution}

When $\dis=\Fdis$, we have
\begin{align*}
    &\E\qty[\hat{\ell}_t \qty(\phi\qty(\eta \hat{L}_t;\Fdis)-\phi\qty(\eta \qty(\hat{L}_t+\hat{\ell}_t);\Fdis))\m \hat{L}_t]\\
    =&\sum_{i\in[d]}2(\alpha+1)\eta\qty(\frac{(\sigma_i\land m)+\frac{1}{\alpha}}{(\sigma_i-m+1)\lor 1})^{\frac{1}{\alpha}}\\
    \leq&2(\alpha+1)\eta\qty(m+\frac{1}{\alpha})^{\frac{1}{\alpha}}\qty(m+\sum_{i=1}^{d-m+1} \frac{1}{\sqrt[\alpha]{i}})\\
    \leq&2(\alpha+1)\eta\qty(m+\frac{1}{\alpha})^{\frac{1}{\alpha}}\qty(m+1+\int_1^{d-m+1} x^{-1/\alpha} \dd x)\\
    =&2(\alpha+1)\eta\qty(m+\frac{1}{\alpha})^{\frac{1}{\alpha}}\qty(m+\frac{\alpha(d-m+1)^{1-1/\alpha}-1}{\alpha-1})\\
    \leq& 2(\alpha+1)\eta\qty(m+\frac{1}{\alpha})^{\frac{1}{\alpha}}\qty(m+\frac{\alpha}{\alpha-1}(d-m+1)^{1-1/\alpha}).
\end{align*}

\subsubsection{Pareto Distribution}

When $\dis=\Pdis$, we have
\begin{align*}
    \E\qty[\hat{\ell}_t\qty(\phi\qty(\eta \hat{L}_t;\Pdis)-\phi\qty(\eta \qty(\hat{L}_t+\hat{\ell}_t);\Pdis))\m \hat{L}_t]&=\sum_{i\in[d]}4\alpha\eta\qty(\frac{(\sigma_i\land m)+\frac{1}{\alpha}}{\sigma_i})^{\frac{1}{\alpha}}\\
    &=\sum_{i\in[d]}4\alpha\eta\qty(\frac{m+\frac{1}{\alpha}}{\sigma_i})^{\frac{1}{\alpha}}\\
    &\leq4\alpha\eta\qty(m+\frac{1}{\alpha})^{\frac{1}{\alpha}}\qty(1+\int_1^d x^{-1/\alpha} \dd x)\\
    &=4\alpha\eta\qty(m+\frac{1}{\alpha})^{\frac{1}{\alpha}}\frac{\alpha d^{1-1/\alpha}-1}{\alpha-1}\\
    &\leq \frac{4\alpha^2}{\alpha-1}\eta\qty(m+\frac{1}{\alpha})^{\frac{1}{\alpha}}d^{1-1/\alpha}.
\end{align*}

\section{Technical Lemmas}

\begin{lemma}[\textbf{Gautschi's inequality}]\label{lem:Gautschi}
    For $x > 0$ and $s \in (0,1)$,
    $$x^{1 - s} < \frac{\Gamma(x + 1)}{\Gamma(x + s)} < (x + 1)^{1 - s}.$$
\end{lemma}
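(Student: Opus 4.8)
The plan is to derive both inequalities from the strict log-convexity of the Gamma function on $(0,\infty)$, i.e., from the fact that $t\mapsto\log\Gamma(t)$ is strictly convex. This can be taken as classical, or established in one line via Hölder's inequality applied to the integral representation: for $\lambda\in(0,1)$ and $a,b>0$,
\begin{equation*}
    \Gamma(\lambda a+(1-\lambda)b)=\int_0^\infty \qty(t^{a-1}e^{-t})^{\lambda}\qty(t^{b-1}e^{-t})^{1-\lambda}\dd t\leq \Gamma(a)^{\lambda}\Gamma(b)^{1-\lambda},
\end{equation*}
with strict inequality whenever $a\neq b$. The other ingredient is the functional equation $\Gamma(z+1)=z\Gamma(z)$.

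For the lower bound I would write $x+s$ as the convex combination $x+s=(1-s)\cdot x+s\cdot(x+1)$ of the points $x$ and $x+1$. Strict log-convexity then gives $\Gamma(x+s)<\Gamma(x)^{1-s}\Gamma(x+1)^{s}$, and dividing $\Gamma(x+1)$ by this and invoking $\Gamma(x+1)=x\Gamma(x)$ yields
\begin{equation*}
    \frac{\Gamma(x+1)}{\Gamma(x+s)}>\frac{\Gamma(x+1)^{1-s}}{\Gamma(x)^{1-s}}=\qty(\frac{\Gamma(x+1)}{\Gamma(x)})^{1-s}=x^{1-s}.
\end{equation*}
For the upper bound I would instead express the point $x+1$ as a convex combination of $x+s$ and $x+1+s$; a direct check shows $x+1=s\cdot(x+s)+(1-s)\cdot(x+1+s)$. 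Strict log-convexity gives $\Gamma(x+1)<\Gamma(x+s)^{s}\Gamma(x+1+s)^{1-s}$, and substituting $\Gamma(x+1+s)=(x+s)\Gamma(x+s)$ collapses the right-hand side to $(x+s)^{1-s}\Gamma(x+s)$. Hence $\Gamma(x+1)/\Gamma(x+s)<(x+s)^{1-s}$, and since $0<s<1$ forces $x+s<x+1$ with $1-s>0$, we conclude $\Gamma(x+1)/\Gamma(x+s)<(x+1)^{1-s}$, which is the (slightly weaker) bound claimed in the lemma.

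There is no genuine obstacle here, since this is a standard fact; the only point requiring care is the bookkeeping — choosing the correct triple of evaluation points for the convexity step on each side, and applying the recurrence $\Gamma(z+1)=z\Gamma(z)$ so that exactly one Gamma factor cancels. I would verify the identities $x+s=(1-s)x+s(x+1)$ and $x+1=s(x+s)+(1-s)(x+1+s)$ explicitly before committing the final write-up, and note that the strictness throughout comes from $(\log\Gamma)''(t)=\sum_{n\geq0}(t+n)^{-2}>0$ together with the fact that the two interpolation points are always distinct.
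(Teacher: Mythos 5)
Your proof is correct. The paper itself states Gautschi's inequality as a known technical lemma and gives no proof, so there is no argument to compare against; your self-contained derivation via strict log-convexity of $\Gamma$ (established by H\"older applied to the integral representation) together with the recurrence $\Gamma(z+1)=z\Gamma(z)$ is a standard and valid route. Both convex-combination identities you use, $x+s=(1-s)x+s(x+1)$ and $x+1=s(x+s)+(1-s)(x+1+s)$, check out, the strictness claims are justified since the interpolation points are distinct, and your upper bound in fact yields the sharper constant $(x+s)^{1-s}\leq(x+1)^{1-s}$, which is more than the lemma requires.
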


\begin{lemma}\cite[Eq.~(3.7)]{malik1966exact}
    \label{lem:pareto_order_statistics}
    Let $X_{k,n}$ be the $k$-th order statistics of i.i.d.~RVs from $\mathcal{P}_{\alpha}$ for $k\in[n]$, where $\alpha>1$. 
    Then, we have
    \begin{equation*}
        \E[X_{k,n}] =
        \frac{\Gamma\qty(n+1)\Gamma\qty(n-k-\frac{1}{\alpha}+1)}{\Gamma\qty(n-k+1)\Gamma\qty(n-\frac{1}{\alpha}+1)}.
    \end{equation*}
\end{lemma}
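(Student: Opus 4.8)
The plan is to compute $\E[X_{k,n}]$ directly from the density of the $k$-th order statistic of $n$ i.i.d.\ Pareto$(\alpha)$ samples. Fix the convention that $X_{1,n}\le X_{2,n}\le\cdots\le X_{n,n}$ are the order statistics in increasing order, and recall the Pareto density $f(x)=\alpha x^{-(\alpha+1)}$ and CDF $F(x)=1-x^{-\alpha}$ on $[1,\infty)$. Then the density of $X_{k,n}$ is
\[
    f_{k,n}(x)=\frac{n!}{(k-1)!\,(n-k)!}\,F(x)^{k-1}\bigl(1-F(x)\bigr)^{n-k}f(x),\qquad x\ge 1 .
\]
First I would substitute this into $\E[X_{k,n}]=\int_1^\infty x\,f_{k,n}(x)\,\dd x$ and perform the change of variables $t=x^{-\alpha}$ (so $x=t^{-1/\alpha}$ and $\alpha x^{-(\alpha+1)}\dd x=-\dd t$), which maps $x\in[1,\infty)$ onto $t\in(0,1]$ and turns the integral into a Beta integral:
\[
    \E[X_{k,n}]=\frac{n!}{(k-1)!\,(n-k)!}\int_0^1 t^{\,n-k-\frac1\alpha}(1-t)^{k-1}\,\dd t
    =\frac{n!}{(k-1)!\,(n-k)!}\,B\!\Bigl(n-k-\tfrac1\alpha+1,\;k\Bigr).
\]

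Next I would rewrite the Beta function via $B(a,b)=\Gamma(a)\Gamma(b)/\Gamma(a+b)$ and the multinomial factor via $n!/((k-1)!\,(n-k)!)=\Gamma(n+1)/(\Gamma(k)\Gamma(n-k+1))$; the $\Gamma(k)$ factors cancel, and since $\bigl(n-k-\tfrac1\alpha+1\bigr)+k=n-\tfrac1\alpha+1$ we are left with exactly
\[
    \E[X_{k,n}]=\frac{\Gamma(n+1)\,\Gamma\bigl(n-k-\frac1\alpha+1\bigr)}{\Gamma(n-k+1)\,\Gamma\bigl(n-\frac1\alpha+1\bigr)},
\]
which is the claimed identity.

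The derivation is essentially routine, so the only point that genuinely needs care is the convergence of the Beta integral, i.e.\ the finiteness of the expectation: the integrand $t^{\,n-k-1/\alpha}(1-t)^{k-1}$ is integrable on $(0,1)$ precisely when $n-k-\tfrac1\alpha+1>0$, and since $\alpha>1$ gives $\tfrac1\alpha<1\le n-k+1$ this holds for every $k\in[n]$, the binding case being $k=n$ (the maximum), whose mean is finite exactly because $\alpha>1$. A secondary bookkeeping issue is the ordering convention: the stated formula is the one for order statistics in increasing order, so if one instead lets $X_{k,n}$ denote the $k$-th largest value one simply substitutes $k\mapsto n-k+1$ throughout; I would state the convention explicitly at the outset to avoid this ambiguity. (Alternatively the identity may simply be quoted verbatim from \citet{malik1966exact}, as in the statement, but the self-contained computation above is short enough to include.)
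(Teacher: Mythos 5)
Your derivation is correct and complete: the order-statistic density, the substitution $t=x^{-\alpha}$ reducing the expectation to $\frac{n!}{(k-1)!(n-k)!}B\qty(n-k-\frac{1}{\alpha}+1,k)$, and the Gamma-function bookkeeping all check out, and your convergence remark (finiteness for $k=n$ precisely because $\alpha>1$) is the right thing to verify. The paper itself offers no proof to compare against—it quotes this identity verbatim from \citet{malik1966exact}, Eq.~(3.7)—so your contribution is a short self-contained verification of the cited formula rather than an alternative to an existing argument; that is a perfectly reasonable thing to include. Your caveat about the ordering convention is also well placed: the identity you derive (and sanity checks such as $k=1$ giving $n/(n-\frac{1}{\alpha})$, the mean of the minimum, and $k=n$ giving $\approx\Gamma\qty(1-\frac{1}{\alpha})n^{1/\alpha}$, the mean of the maximum) confirms that the stated formula is for the $k$-th \emph{smallest} observation, so when the lemma is invoked for the $k$-th \emph{largest} perturbation, as in Lemma~\ref{lem:penalty_bound}, the index must be read as $n-k+1$; stating the convention explicitly, as you propose, removes this ambiguity.
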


\begin{lemma}\label{lem:order_statistics}
Let $F(x)$ and $G(x)$ be CDFs of some random variables
such that $G(x)\ge F(x)$ for all $x\in\mathbb{R}$.
Let $(X_1,X_2,\dots,X_n)$ (resp.~$(Y_1,Y_2,\dots,Y_n)$) be RVs i.i.d.~from $F$ (resp.~$G$), and
$X_{k,n}$ (resp.~$Y_{k,n}$) be its $k$-th order statistics for any $k \in [n]$.
Then, $\E[Y_{k,n}] \le \E[X_{k,n}]$ holds.
\end{lemma}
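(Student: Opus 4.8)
The plan is to prove this by a monotone coupling of the two i.i.d.\ samples through their quantile functions. Define the generalized inverses $F^{-1}(u)=\inf\{x\in\mathbb{R}:F(x)\ge u\}$ and $G^{-1}(u)=\inf\{x\in\mathbb{R}:G(x)\ge u\}$ for $u\in(0,1)$. The hypothesis $G(x)\ge F(x)$ for all $x$ gives the set inclusion $\{x:F(x)\ge u\}\subseteq\{x:G(x)\ge u\}$, and taking infima of these two sets yields $G^{-1}(u)\le F^{-1}(u)$ for every $u\in(0,1)$.

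Next I would introduce a common source of randomness: let $U_1,\dots,U_n$ be i.i.d.\ uniform on $(0,1)$ and set $X_i:=F^{-1}(U_i)$ and $Y_i:=G^{-1}(U_i)$. By the standard quantile transform, $(X_1,\dots,X_n)$ is i.i.d.\ from $F$ and $(Y_1,\dots,Y_n)$ is i.i.d.\ from $G$, so it suffices to prove the inequality for the order statistics computed from this particular joint construction. From the pointwise bound just established, $Y_i\le X_i$ almost surely for every $i\in[n]$.

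The key structural fact is that the $k$-th order statistic is a coordinatewise nondecreasing function of its inputs: writing the $k$-th smallest value as $\min_{S\subseteq[n],\,|S|=n-k+1}\max_{i\in S}x_i$ (and symmetrically for the $k$-th largest), one sees that $y_i\le x_i$ for all $i$ forces the $k$-th order statistic of $(y_1,\dots,y_n)$ to be at most that of $(x_1,\dots,x_n)$. Applying this with the coupled samples gives $Y_{k,n}\le X_{k,n}$ almost surely, and monotonicity of the expectation then yields $\E[Y_{k,n}]\le\E[X_{k,n}]$; the inequality is read in $[-\infty,\infty]$ in full generality, and in the applications here (Pareto and Fr\'{e}chet with $\alpha>1$) both sides are finite.

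This argument is essentially routine, and I do not expect a genuine obstacle; the only points deserving care are (i) the correct handling of the generalized inverses when $F$ or $G$ fails to be continuous or strictly increasing---which is precisely why the $\inf$-based definitions and the set-inclusion step are used rather than a naive inversion---and (ii) the one-line justification that the order-statistic map is monotone. As an alternative proof avoiding the coupling, one can observe that $\sP(X_{k,n}\le t)=I_{F(t)}(k,n-k+1)$, the regularized incomplete beta function, which is nondecreasing in $F(t)$; since $F(t)\le G(t)$ this gives $\sP(Y_{k,n}\le t)\ge\sP(X_{k,n}\le t)$ for all $t$, i.e.\ $X_{k,n}$ stochastically dominates $Y_{k,n}$, and the expectation inequality follows from $\E[Z]=\int_{0}^{\infty}\sP(Z>t)\,\dd t-\int_{-\infty}^{0}\sP(Z\le t)\,\dd t$.
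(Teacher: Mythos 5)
Your proposal is correct and follows essentially the same route as the paper: a quantile (monotone) coupling through a common uniform source giving $Y_i\le X_i$ almost surely, followed by monotonicity of the order-statistic map and of expectation. The extra details you supply (the set-inclusion argument for generalized inverses and the alternative stochastic-dominance proof via the incomplete beta function) are fine but not needed beyond what the paper's proof already contains.
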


\begin{proof}
Let $U\in[0,1]$ be uniform random variable over $[0,1]$
and let $X=F^{-1}(U)$ and $Y=G^{-1}(U)$, where
$F^{-1}$ and $G^{-1}$ are the left-continuous inverses of $F$ and $G$, respectively.
Then, 
$Y\le X$ holds almost surely and the marginal distributions satisfy $X \sim F$ and $Y\sim G$.
Therefore, if when take $(X_1,Y_1),\dots, (X_n,Y_n)$ as i.i.d.~copies of this $(X,Y)$, we see that
$Y_{k,n}\le X_{k,n}$ holds almost surely, which proves the lemma.
\end{proof}

\begin{lemma}\label{lem:penalty_pareto_frechet}
Let
$X_{k,n}$ (resp.~$Y_{k,n}$) be the $k$-th order statistics
of i.i.d.~RVs from $\mathcal{P}_{\alpha}$ and $\mathcal{F}_{\alpha}$ for $k\in[n]$.
Then, $\E[Y_{k,n}] \le \E[X_{k,n}]+1$ holds.
\end{lemma}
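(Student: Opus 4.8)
The plan is to exhibit a stochastic-dominance relation between the Fr\'{e}chet distribution and a unit translate of the Pareto distribution, and then invoke Lemma~\ref{lem:order_statistics}. Write $F_{\mathrm P}(x) = 1 - x^{-\alpha}$ ($x\ge 1$) for the Pareto CDF and $F_{\mathrm F}(x) = e^{-1/x^\alpha}$ ($x\ge 0$) for the Fr\'{e}chet CDF, and let $\widetilde F(x) \coloneqq F_{\mathrm P}(x-1)$ be the CDF of $X+1$ when $X\sim\mathcal{P}_\alpha$ (so $\widetilde F(x) = 1-(x-1)^{-\alpha}$ for $x\ge 2$ and $\widetilde F(x)=0$ otherwise). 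Since adding the constant $1$ to every sample shifts each order statistic by $1$, the $k$-th order statistic of $n$ i.i.d.\ draws from $\widetilde F$ equals $X_{k,n}+1$, hence has expectation $\E[X_{k,n}]+1$. Therefore it suffices to prove the pointwise inequality $F_{\mathrm F}(x)\ge\widetilde F(x)$ for all $x\in\mathbb{R}$; then Lemma~\ref{lem:order_statistics}, applied with $F=\widetilde F$ and $G=F_{\mathrm F}$, immediately gives $\E[Y_{k,n}]\le\E[X_{k,n}]+1$.

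For the pointwise comparison, first note that for $x<2$ we have $\widetilde F(x)=0\le F_{\mathrm F}(x)$ trivially. For $x\ge 2$, the claim $e^{-1/x^\alpha}\ge 1-(x-1)^{-\alpha}$ is equivalent to $1-e^{-1/x^\alpha}\le (x-1)^{-\alpha}$. Applying the elementary bound $1-e^{-t}\le t$ for $t\ge 0$ with $t=x^{-\alpha}$ yields $1-e^{-1/x^\alpha}\le x^{-\alpha}$, and since $0<x-1<x$ forces $x^{-\alpha}\le (x-1)^{-\alpha}$, the desired inequality follows, completing the proof.

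The argument is short, and there is no genuine obstacle once the coupling idea is identified: the only analytic input is the inequality $1-e^{-t}\le t$ (which already appears in the proof of Lemma~\ref{lem:frechet_simplify}) together with the monotonicity of $x\mapsto x^{-\alpha}$. The one subtlety worth flagging is that the Pareto tail bound must be applied to the shifted variable $X+1$ rather than to $X$ itself, since $\mathcal{P}_\alpha$ is supported on $[1,\infty)$ while $\mathcal{F}_\alpha$ is supported on $(0,\infty)$; this mismatch near the left endpoint is precisely what produces the additive constant $1$ in the statement, and it is why one cannot hope for $\E[Y_{k,n}]\le\E[X_{k,n}]$ outright.
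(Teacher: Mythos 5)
Your proof is correct and follows essentially the same route as the paper: establish the pointwise domination $F_{\mathrm F}(x)\ge F_{\mathrm P}(x-1)$ via $e^{-t}\ge 1-t$ and the monotonicity of $x\mapsto x^{-\alpha}$, then invoke Lemma~\ref{lem:order_statistics} for the shifted Pareto sample. If anything, your case split at $x=2$ handles the left-endpoint region more carefully than the paper's chain of inequalities, but the underlying argument is identical.
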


\begin{proof}
Letting $F(x)$ and $G(x)$ be the CDFs of $\mathcal{P}_{\alpha}$ and $\mathcal{F}_{\alpha}$,
we have
\begin{align*}
G(x)&=\ind[x\geq 0]e^{-1/x^{\alpha}}\\
&\geq \ind[x\geq 0]\left(1-\frac{1}{x^{\alpha}}\right)\\
&\geq \ind[x-1\geq 0]\left(1-\frac{1}{((x-1)+1)^{\alpha}}\right)\\
& = F(x-1),
\end{align*}
where $F(x-1)$ is the CDF of $X+1$ for $X\sim \mathcal{P}_{\alpha}$.
Then, it holds from Lemma~\ref{lem:order_statistics} that
$\E[Y_{k,n}] \le \E[X_{k,n}+1]= \E[X_{k,n}]+1$.
\end{proof}



\section{Issues in Proof of Claimed Extension to the Monotone Decreasing Case}\label{sec:issue}

In this section, we use the same notation as \citet{zhan2025follow} and reconstruct their missing arguments in Lemma~4.1 in \citet{zhan2025follow}.
The claim proposed by them is as follows.
\begin{claim}[Lemma 4.1 in \citet{zhan2025follow}]
    For any $\mathcal{I}\subseteq \qty{1,2,\cdots,d}$, $i\notin \mathcal{I}$, $\lambda\in\mathbb{R}^d$ such that $\lambda_i\geq 0$ and any $N\geq 3$, let
    \begin{equation*}
        J_{i,N,\mathcal{I}}(\lambda)\coloneqq\int_0^\infty \frac{1}{(x+\lambda_i)^{N}}\prod_{q\in\mathcal{I}}\qty(1-F(x+\lambda_q))\prod_{q\notin\mathcal{I}}F(x+\lambda_q)\dd x.
    \end{equation*} 
    Then, for all $k>0$, $\frac{J_{i,N+k,\mathcal{I}}(\lambda)}{J_{i,N,\mathcal{I}}(\lambda)}$ is increasing in $\lambda_q\geq 0$ for $q\notin\mathcal{I}$ and $q\neq i$, while decreasing in $\lambda_q\geq 0$ for $q\in\mathcal{I}$.
\end{claim}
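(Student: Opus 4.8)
The natural plan is to fix an index $q\neq i$, regard $J_{i,N+k,\mathcal{I}}(\lambda)$ and $J_{i,N,\mathcal{I}}(\lambda)$ as functions of $\lambda_q$ alone, and show that $\partial_{\lambda_q}\bigl(J_{i,N+k,\mathcal{I}}/J_{i,N,\mathcal{I}}\bigr)$ has the asserted sign, imitating the Chebyshev-type argument in the proof of Lemma~\ref{lem:both_increasing}. Since the integration limits here are $0$ and $\infty$, independent of $\lambda_q$, differentiation under the integral sign introduces no boundary term (unlike in the proof of Lemma~\ref{lem:both_increasing}), and the sign of the derivative equals the sign of $(\partial_{\lambda_q}J_{i,N+k,\mathcal{I}})\,J_{i,N,\mathcal{I}}-J_{i,N+k,\mathcal{I}}\,(\partial_{\lambda_q}J_{i,N,\mathcal{I}})$. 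Let $g(x)$ collect the factors of the integrand not involving $\lambda_q$ and let $\rho_q(x+\lambda_q)$ be the single $\lambda_q$-dependent factor, so $\rho_q=F$ if $q\notin\mathcal{I}$ and $\rho_q=1-F$ if $q\in\mathcal{I}$, with $\lambda_q$-derivative $\rho_q'=f$ or $-f$ respectively. Symmetrizing in $x\leftrightarrow y$ rewrites the quantity above as $\tfrac12\iint g(x)g(y)\,B_1(x,y)\,B_2(x,y)\,dx\,dy$, where $B_1(x,y)=(x+\lambda_i)^{-N}(y+\lambda_i)^{-N}\bigl((x+\lambda_i)^{-k}-(y+\lambda_i)^{-k}\bigr)$ and $B_2(x,y)=\rho_q'(x+\lambda_q)\rho_q(y+\lambda_q)-\rho_q(x+\lambda_q)\rho_q'(y+\lambda_q)$. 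Both $B_1$ and $B_2$ are antisymmetric under $x\leftrightarrow y$; $B_1$ always carries the sign of $y-x$, so the integrand is sign-definite exactly when $B_2$ is, i.e. exactly when $f/\rho_q$ is monotone in the appropriate direction.

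For $q\notin\mathcal{I}$ we have $\rho_q=F$, so $B_2$ is a positive multiple of $\tfrac{f}{F}(x+\lambda_q)-\tfrac{f}{F}(y+\lambda_q)$; and for the Fr\'{e}chet distribution $f(x)/F(x)=\alpha x^{-(\alpha+1)}$ is strictly decreasing on $(0,\infty)$. Hence $B_2$ carries the sign of $y-x$ as well, so $B_1B_2\geq0$ pointwise, the double integral is nonnegative, and $J_{i,N+k,\mathcal{I}}/J_{i,N,\mathcal{I}}$ is increasing in $\lambda_q$. This half of the Claim goes through without real difficulty.

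The case $q\in\mathcal{I}$, where $\rho_q=1-F$ and $\rho_q'=-f$, is the crux, and I expect it to be the step that resists being settled ``by the same argument.'' Here $B_2$ is a positive multiple of $h(y+\lambda_q)-h(x+\lambda_q)$ with $h=f/(1-F)$ the hazard rate, so the claimed decrease in $\lambda_q$ would require $h$ to be monotone decreasing on $[0,\infty)$. But the Fr\'{e}chet hazard rate is \emph{unimodal} rather than monotone: $h(x)\to0$ both as $x\to0^{+}$ (since $f(x)=\alpha x^{-(\alpha+1)}e^{-1/x^{\alpha}}\to0$ while $1-F(x)\to1$) and as $x\to\infty$ (since $1-F(x)\sim x^{-\alpha}$, giving $h(x)\sim\alpha/x$), while $h>0$ in between. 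Hence $B_2$ changes sign as $x,y$ range over $[0,\infty)$, the product $B_1B_2$ is not sign-definite, and symmetrization yields no conclusion. The realistic outcome of this plan is therefore not a proof of the Claim but a demonstration that its $q\in\mathcal{I}$ part is not supported by this method --- and, as one confirms by a direct numerical check (Section~\ref{sec:issue}), that it is in fact false. This is precisely why the stability analysis in the present paper avoids such a monotonicity statement and instead proceeds through Lemmas~\ref{lem:lambda_star}--\ref{lem:stability}, in which monotonicity is only ever invoked in directions corresponding to $q\notin\mathcal{I}$.
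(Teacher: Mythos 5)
Your analysis is correct and coincides with the paper's own treatment of this claim: the paper does not prove it either, and Section~\ref{sec:issue} carries out exactly your symmetrization, reduces the $q\in\mathcal{I}$ direction to monotone decrease of the (shape-$2$) hazard-type function $e^{-1/x^{2}}/\bigl(x^{3}(1-e^{-1/x^{2}})\bigr)$, and observes that its derivative changes sign, so only the $q\notin\mathcal{I}$ half can be established by this argument. The paper additionally gives a numerical counterexample showing the ratio itself is non-monotone in $\lambda_{q_0}$, which supports your conclusion that the decreasing half of the claim is in fact false rather than merely unproven.
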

In their paper, they consider the case that $F(x)$ is the cumulative distribution function of Fr\'{e}chet distribution with shape $2$, which is expressed as
\begin{equation*}
    F(x)=e^{-1/x^2},\quad x\geq 0,
\end{equation*}
and thus we also adopt the same setting in this section.
For this claim, they only gave the proof of the monotonic increase for the former case and it is written that the monotonic decrease for the latter case ``can be shown by the same argument''. Nevertheless, the monotonic decrease is not proved by the same argument, and it is highly likely that the statement itself does not hold as we will demonstrate below.

Now we follow the line of the proof of monotone increasing case in \citet{zhan2025follow}, giving an attempt to prove that
for all $k>0$, $\frac{J_{i,N+k,\mathcal{I}}(\lambda)}{J_{i,N,\mathcal{I}}(\lambda)}$ is monotonically decreasing on $\lambda_q\geq 0$ for $q\in\mathcal{I}$. 
For $q_0\in\mathcal{I}$, denote 
\begin{align*}
    J^{q_0}_{i,N,\mathcal{I}}(\lambda)&=
    -\frac{1}{2}\frac{\partial}{\partial \lambda_{q_0}}J_{i,N,\mathcal{I}}(\lambda)\\
    &=\int_0^\infty \frac{e^{-1/(x+\lambda_{q_0})^2}}{(x+\lambda_i)^{N}(x+\lambda_{q_0})^{3}}\prod_{q\in\mathcal{I}\setminus\qty{q_0}}\qty(1-F(x+\lambda_q))\prod_{q\notin\mathcal{I}}F(x+\lambda_q)\dd x.
\end{align*}
Hence,
\begin{equation}\label{eq:issue_derivative}
    \frac{\partial}{\partial \lambda_{q_0}}\frac{J_{i,N+k,\mathcal{I}}(\lambda)}{J_{i,N,\mathcal{I}}(\lambda)}=
    -2\cdot\frac{J^{q_0}_{i,N+k,\mathcal{I}}(\lambda)J_{i,N,\mathcal{I}}(\lambda)-J_{i,N+k,\mathcal{I}}(\lambda)J^{q_0}_{i,N,\mathcal{I}}(\lambda)}{J_{i,N,\mathcal{I}}(\lambda)^2}.
\end{equation}

Letting $Q(x)=\frac{1}{(x+\lambda_i)^{N}}\prod_{q\in\mathcal{I}\setminus\qty{q_0}}\qty(1-F(x+\lambda_q))\prod_{q\notin\mathcal{I}}F(x+\lambda_q)$, we have
\begin{align*}
    &J^{q_0}_{i,N+k,\mathcal{I}}(\lambda)J_{i,N,\mathcal{I}}(\lambda)\\
    =&\iint_{x,y\geq 0} \frac{e^{-1/(x+\lambda_{q_0})^2}}{(x+\lambda_i)^{k}(x+\lambda_{q_0})^{3}}\qty(1-e^{-1/(y+\lambda_{q_0})^2})Q(x)Q(y)\dd x\dd y\\
    =&\frac{1}{2}\iint_{x,y\geq 0} Q(x)Q(y)\qty[\frac{e^{-1/(x+\lambda_{q_0})^2}}{(x+\lambda_i)^{k}(x+\lambda_{q_0})^{3}}\qty(1-e^{-1/(y+\lambda_{q_0})^2})+
    \frac{e^{-1/(y+\lambda_{q_0})^2}}{(y+\lambda_i)^{k}(y+\lambda_{q_0})^{3}}\qty(1-e^{-1/(x+\lambda_{q_0})^2})
    ]\dd x\dd y.
\end{align*}
Here, it is worth noting that $\frac{e^{-1/(x+\lambda_{q_0})^2}}{(x+\lambda_i)^{k}(x+\lambda_{q_0})^{3}}$ and $\qty(1-e^{-1/(y+\lambda_{q_0})^2})$ share no common factors, and thus no factor can be hidden in $Q(x)$.

Similarly, we have
\begin{align*}
    &J_{i,N+k,\mathcal{I}}(\lambda)J^{q_0}_{i,N,\mathcal{I}}(\lambda)\\
    =&\iint_{x,y\geq 0} \frac{1-e^{-1/(x+\lambda_{q_0})^2}}{(x+\lambda_i)^{k}}\frac{e^{-1/(y+\lambda_{q_0})^2}}{(y+\lambda_{q_0})^{3}}Q(x)Q(y)\dd x\dd y\\
    =&\frac{1}{2}\iint_{x,y\geq 0} Q(x)Q(y)\qty[\frac{1-e^{-1/(x+\lambda_{q_0})^2}}{(x+\lambda_i)^{k}}\frac{e^{-1/(y+\lambda_{q_0})^2}}{(y+\lambda_{q_0})^{3}}+
    \frac{1-e^{-1/(y+\lambda_{q_0})^2}}{(y+\lambda_i)^{k}}\frac{e^{-1/(x+\lambda_{q_0})^2}}{(x+\lambda_{q_0})^{3}}
    ]\dd x\dd y.
\end{align*}
Now we substitute the above two expressions into \eqref{eq:issue_derivative}. 

By an elementary calculation, we have
\begin{align*}
    &\frac{e^{-1/(x+\lambda_{q_0})^2}}{(x+\lambda_i)^{k}(x+\lambda_{q_0})^{3}}\qty(1-e^{-1/(y+\lambda_{q_0})^2})+
    \frac{e^{-1/(y+\lambda_{q_0})^2}}{(y+\lambda_i)^{k}(y+\lambda_{q_0})^{3}}\qty(1-e^{-1/(x+\lambda_{q_0})^2})\\
    -&\frac{1-e^{-1/(x+\lambda_{q_0})^2}}{(x+\lambda_i)^{k}}\frac{e^{-1/(y+\lambda_{q_0})^2}}{(y+\lambda_{q_0})^{3}}-
    \frac{1-e^{-1/(y+\lambda_{q_0})^2}}{(y+\lambda_i)^{k}}\frac{e^{-1/(x+\lambda_{q_0})^2}}{(x+\lambda_{q_0})^{3}}\\
    =&\frac{(y+\lambda_i)^{k}-(x+\lambda_i)^{k}}{(x+\lambda_i)^{k}(y+\lambda_i)^{k}}\qty(\frac{e^{-1/(x+\lambda_{q_0})^2}\qty(1-e^{-1/(y+\lambda_{q_0})^2})}{(x+\lambda_{q_0})^{3}}-\frac{e^{-1/(y+\lambda_{q_0})^2}\qty(1-e^{-1/(x+\lambda_{q_0})^2})}{(y+\lambda_{q_0})^{3}})\\
    =&\frac{\qty((y+\lambda_i)^{k}-(x+\lambda_i)^{k})\qty(1-e^{-1/(y+\lambda_{q_0})^2})\qty(1-e^{-1/(x+\lambda_{q_0})^2})}{(x+\lambda_i)^{k}(y+\lambda_i)^{k}}\\
    \cdot&\qty(\frac{e^{-1/(x+\lambda_{q_0})^2}}{(x+\lambda_{q_0})^{3}\qty(1-e^{-1/(x+\lambda_{q_0})^2})}-\frac{e^{-1/(y+\lambda_{q_0})^2}}{(y+\lambda_{q_0})^{3}\qty(1-e^{-1/(y+\lambda_{q_0})^2})}).\numberthis\label{eq:issue_h}
\end{align*}
Here, for \eqref{eq:issue_h}, one can see that when $x\geq y$, the first term becomes negative. 
On the other hand, when $x\leq y$, the first term becomes positive. 
Define
\begin{equation*}
    h(x)=\frac{e^{-1/x^2}}{x^{3}\qty(1-e^{-1/x^2})}.
\end{equation*}
If $h(x)$ can be shown to be monotonically decreasing,  then \eqref{eq:issue_derivative} is negative, that is, the claim holds. 
However, the derivative of $h(x)$ is given as
\begin{equation*}
    h'(x)=\frac{e^{-1/x^2}\qty(3x^2\qty(e^{-1/x^2}-1)+2)}
       {x^6\qty(1 - e^{-1/x^2})^2},
\end{equation*}
where one can see that $h'(x)$ is not always negative, and thus $h(x)$ is not monotonic in $x\in[0,\infty)$.
\begin{figure}[t]
    \centering
    \includegraphics[width=0.65\textwidth]{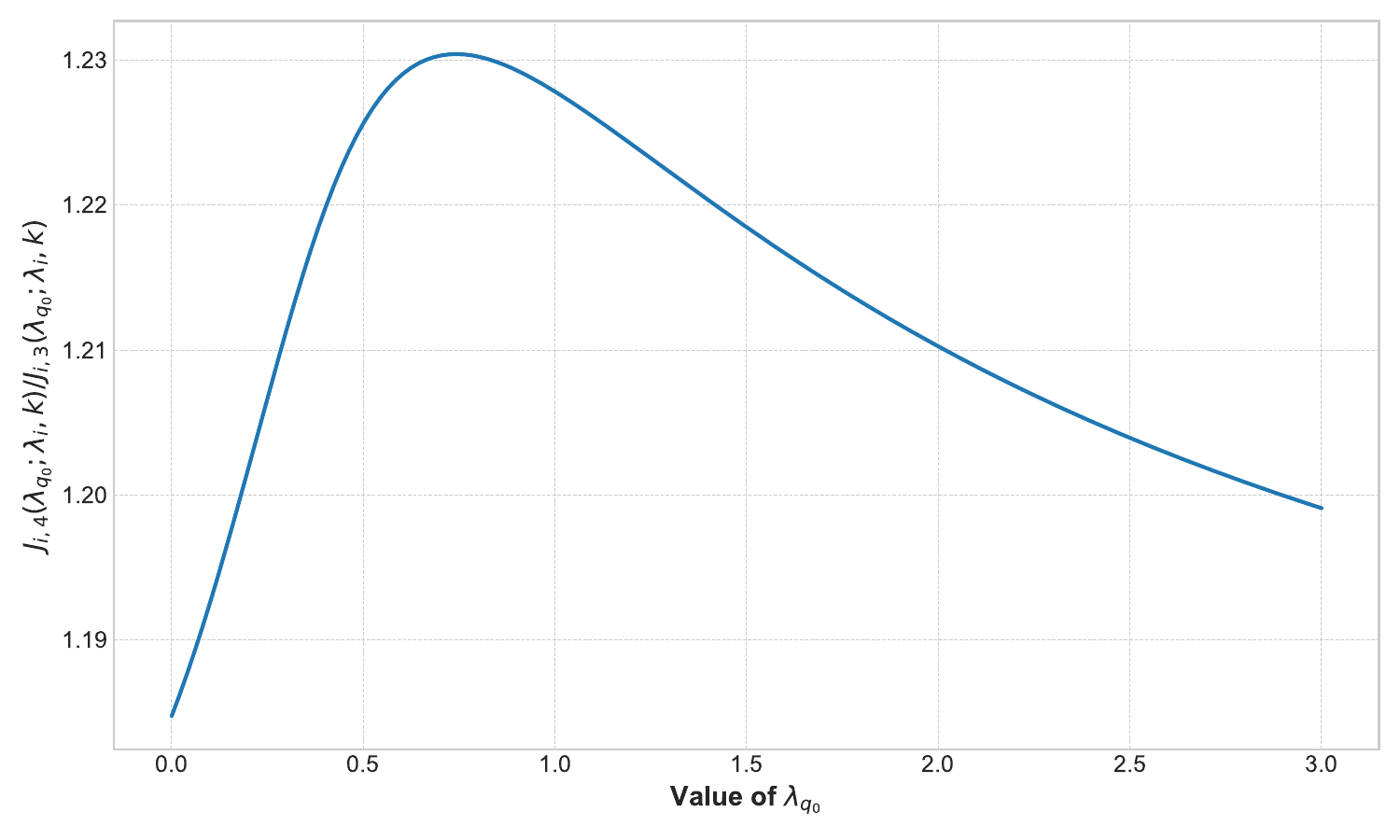}
    \caption{The ratio $J_{i,4,\mathcal{I}}(\lambda)/J_{i,3,\mathcal{I}}(\lambda)$ as a function of $\lambda_{q_0}$ for $q_0\in\mathcal{I}\setminus\qty{i}$.}
    \label{fig:issue_function}
\end{figure}
By the analysis above, we can see that the monotone decreasing case can not be proved by the same argument as in increasing case.

\paragraph{Numerical Simulation} 
We now turn to a numerical simulation implemented in Python to illustrate this failure. 
As a counterexample, it is sufficient to show one special case where $\left\lvert\mathcal{I}\right\rvert=5$, $q_0\in\mathcal{I}$ and $\lambda_q=\lambda_i=0.5$ for all $q\neq q_0$.
Then, we consider the expression
\begin{equation*}
    \frac{J_{i,4,\mathcal{I}}(\lambda_{q_0},\lambda_i)}{J_{i,3,\mathcal{I}}(\lambda_{q_0},\lambda_i)}=\frac{\int_0^\infty \frac{1}{(x+\lambda_i)^{4}}\qty(1-F(x+\lambda_{q_0}))\qty(1-F(x+\lambda_i))^4 F(x+\lambda_i)^{2}\dd x}{\int_0^\infty \frac{1}{(x+\lambda_i)^{3}}\qty(1-F(x+\lambda_{q_0}))\qty(1-F(x+\lambda_i))^4 F(x+\lambda_i)^{2}\dd x}.
\end{equation*}
Here, we set $N=3$ and $k=1$, which are the parameters used in the subsequent analysis in  \citet{zhan2025follow}. 
Treating this expression as a function of $\lambda_{q_0}$, we can observe from Figure~\ref{fig:issue_function} that the function is not monotonic in $\lambda_q$, 
and it does not hold that $\frac{J_{i,4,\mathcal{I}}(\lambda_{q_0},\lambda_i)}{J_{i,3,\mathcal{I}}(\lambda_{q_0},\lambda_i)}\leq\frac{J_{i,4,\mathcal{I}}(\lambda_i,\lambda_i)}{J_{i,3,\mathcal{I}}(\lambda_i,\lambda_i)}$ for some $\lambda_{q_0}>\lambda_i$.

From these arguments, it is highly likely that Lemma~4.1 in \citet{zhan2025follow} does not hold, and at least the current proof is not complete. On the other hand, our analysis is constructed in a way that does not need the monotonicity of $\frac{J_{i,N+k,\mathcal{I}}(\lambda)}{J_{i,N,\mathcal{I}}(\lambda)}$, which becomes the key to the extension of the results for the MAB to the combinatorial semi-bandit setting.

\section*{Acknowledgements}
We thank Dr. Jongyeong Lee for pointing out the error in the proof of Lemma~\ref{lem:penalty_bound} in the previous version.

\bibliographystyle{plainnat}
\bibliography{reference}

\end{document}